  \let\oldparagraph\paragraph
  \renewcommand{\paragraph}{
    \@ifstar
      \xxxParagraphStar
      \xxxParagraphNoStar
  }
  \newcommand{\xxxParagraphStar}[1]{\oldparagraph*{#1}\mbox{}}
  \newcommand{\xxxParagraphNoStar}[1]{\oldparagraph{#1}\mbox{}}
  \let\oldsubparagraph\subparagraph
  \renewcommand{\subparagraph}{
    \@ifstar
      \xxxSubParagraphStar
      \xxxSubParagraphNoStar
  }
  \newcommand{\xxxSubParagraphStar}[1]{\oldsubparagraph*{#1}\mbox{}}
  \newcommand{\xxxSubParagraphNoStar}[1]{\oldsubparagraph{#1}\mbox{}}
\patchcmd\longtable{\par}{\if@noskipsec\mbox{}\fi\par}{}{}
\def\maxwidth{\ifdim\Gin@nat@width>\linewidth\linewidth\else\Gin@nat@width\fi}
\def\maxheight{\ifdim\Gin@nat@height>\textheight\textheight\else\Gin@nat@height\fi}
\def\fps@figure{htbp}
  \renewcommand*\contentsname{Table of contents}
  \newcommand\contentsname{Table of contents}
  \renewcommand*\listfigurename{List of Figures}
  \newcommand\listfigurename{List of Figures}
  \renewcommand*\listtablename{List of Tables}
  \newcommand\listtablename{List of Tables}
  \renewcommand*\figurename{Figure}
  \newcommand\figurename{Figure}
  \renewcommand*\tablename{Table}
  \newcommand\tablename{Table}
\newtheorem{definition}{Definition}
\newtheorem{theorem}{Theorem}
\newtheorem{assumption}{Assumption}
\newtheorem{lemma}{Lemma}
\newtheorem{remark}{Remark}
\newtheorem{corollary}{Corollary}
\newtheoremstyle{exampleStyle}  
 {3pt}                        
 {3pt}                        
 {\normalfont}                
 {}                           
 {\bfseries}                  
 {.}                          
 {.5em}                       
 {}                           
\theoremstyle{exampleStyle}
\newtheorem{example}{Example}
\begin{document}

\def\spacingset#1{\renewcommand{\baselinestretch}
{#1}\small\normalsize} \spacingset{1}

\raggedbottom
\allowdisplaybreaks[1]

\title{\vspace*{-.4in} {Transfer Learning for Classification under Decision Rule Drift with Application to Optimal Individualized Treatment Rule Estimation}}
\author{Xiaohan Wang$^1$, Yang Ning$^1$\\
    $^1$Department of Statistics and Data Science, Cornell University \\ \phantom{A}\\
    \texttt{\{xw547, yn265\} @ cornell.edu}\\
\date{\today}
 }

\maketitle
\thispagestyle{empty}

\bigskip
\begin{abstract}
In this paper, we extend the transfer learning classification framework from regression function-based methods to decision rules. We propose a novel methodology for modeling posterior drift through Bayes decision rules. By exploiting the geometric transformation of the Bayes decision boundary, our method reformulates the problem as a low-dimensional empirical risk minimization problem. Under mild regularity conditions, we establish the consistency of our estimators and derive the risk bounds. Moreover, we illustrate the broad applicability of our method by adapting it to the estimation of optimal individualized treatment rules. Extensive simulation studies and analyses of real-world data further demonstrate both superior performance and robustness of our approach.
\end{abstract}

\noindent
{\it Keywords:} {Transfer Learning, Classification, Support Vector Machine, Empirical Risk Minimization} 
\vfill

\newpage
\spacingset{1.8}

\section{Introduction} \label{section: intro}
Transfer learning refers to the statistical learning task in which the objective is to improve the prediction accuracy of the ``target domain'' using data from the ``source domain''. When the source sample size is very large and shares some characteristics with the target domain, it is beneficial to explore new methods to ``transfer'' the knowledge learned from the source domain to the target domain. Utilizing this idea, it is common for practitioners to implement transfer learning methodologies, such as chemical engineering \citep{wu2020fault}, medical imaging \citep{kim2022transfer}, and geography \citep{zhao2017transfer}. Additionally, a rich body of statistical theory and methods has emerged, addressing applications in high-dimensional linear regression \citep{li2023transfer}, generalized linear models \citep{tian2023transfer}, and graphical models \citep{li2023transfer}.

In this paper, we consider transfer learning problems under the binary classification setting. Formally, we assume that two datasets are observed, with the observations in each set being independently and identically distributed (i.i.d.). One is from the source distribution \(\{(X_{i}^P, Y_i^P)\}_{i= 1}^{n_P} \sim P \) and the other is from the target distribution \(\{(X_{i}^Q, Y_i^Q)\}_{i= 1}^{n_Q} \sim Q \), where \(X_i^P, X_i^Q \in \mathcal{X} \subset \R^d\) are the feature vectors and \(Y_i^P, Y_i^Q\in \left\{ -1, 1 \right\}\) are class labels. The goal is to construct a decision function \(f: \mathcal{X} \to \R\) from the two datasets to minimize the misclassification error \(R_Q(f)\) for the target distribution:
\begin{align}\label{eq_risk}
 R_Q(f)\coloneq Q\left(Y \neq \text{sign}(f)\right).
\end{align} 
The smallest achievable risk is called the {\it Bayes risk}, defined as 
\begin{align}\label{eq_risk_bayes}
R_Q^* \coloneq \inf\left\{R_Q(f)\mid f: \mathcal{X} \to \R \text{ measurable}\right\},
\end{align}
where the corresponding decision function is called the {\it Bayes decision function}, denoted as \(f_Q^*\). Equivalently, the Bayes decision function can be written as \(f_Q^*(x) = \text{sign}(\eta_Q(x)-1/2) \), where \(\eta_Q(x) \coloneqq Q(Y= 1 \mid X = x)\) is the regression function \citep{audibert2007fast, devroye2013probabilistic}. The corresponding Bayes decision rule can be equivalently defined via the set \(G_{f^*_Q} \coloneq \left\{x: f_Q^*(x) \geq 0 \right\} = \left\{ x: \eta_Q(x) \geq 1/2 \right\}\). While the Bayes decision function \(f_Q^*\) is not uniquely defined, the regression function $\eta_Q(x)$ and the set $G_{f^*_Q}$ are unique. Consequently, in the transfer learning setting, two viable approaches emerge for modeling the change in Bayes decision function: the conventional regression-based approach \citep{cai2021transfer,reeve2021adaptive,maity2024linear} and our proposed Bayes decision rule-based-method. The comparison of these methodologies will be discussed in greater detail in the subsequent sections.

\subsection{Related literature}
Transfer learning methods depend on understanding the difference between the source and target domains, a phenomenon known as drift. This drift can be broadly classified into two categories: approaches with structural assumptions and those without. We first outline the existing results for methods without structural assumptions, then consider structured scenarios commonly adopted in the literature.

The first strand of work focused on the properties of transfer learning without imposing any structural assumptions, yet the similarity was measured by discrepancy measures such as generalized Kolmogorov-Smirnov distance \citep{ben2006analysis,blitzer2007learning,cortes2019adaptation}. The primary goal is to quantify the cost of using the existing learning algorithm trained on the source domain under possible data drift. Not surprisingly, the error bound for the risk depends on the discrepancy measure between the source and target distributions \citep{david2010impossibility}. 

In order to improve the prediction error bounds, three common structural assumptions are typically made: covariate drift, label drift, and posterior drift.

Under the covariate drift assumption, it is assumed that \(P_{Y|X} = Q_{Y|X}\) and \(P_{X} \neq Q_{X}\) \citep{gong2016domain,kpotufe2021marginal,chen2021weighted}. Similar ideas have also been considered in estimating optimal individualized treatment rule under covariate drift \citep{liu2023augmented,wu2023transfer,wang2025transfer}. 

Under the label drift assumption, it is assumed that the distribution of the response differs across domains while the conditional distribution of \(X\) given \(Y\) is the same. That is, \(P_Y \neq Q_Y\), yet the conditional distribution of \(X\) given \(Y\) is the same \(P_{X|Y} = Q_{X|Y}\), such as \cite{zhang2013domain,lipton2018detecting,alexandari2020maximum,maity2022minimax,lee2024doubly}. 

Among these scenarios, posterior shift, where \(P_{Y|X} \ne Q_{Y|X}\) and \(P_{X} = Q_{X}\), is particularly challenging yet critical, as it captures situations where the conditional relationship between features and outcomes changes across domains. In binary classification settings, some recent advances model the difference between the source and target distributions through the regression function, leaving the Bayes decision rule unchanged \(G_{f^*_Q} = G_{f^*_P}\) \citep{cannings2020classification,hanneke2019value,cai2021transfer}. Specifically, \cite{cai2021transfer} modeled the underlying knowledge transfer through the signal strength of two regression functions \(\eta_P\) relative to \(\eta_Q\), that is:
\[C_\gamma |\eta_Q- 1/2|^\gamma \leq |\eta_P- 1/2|,\]
for some constants \(C_\gamma, \gamma\), where they developed fast, minimax-optimal, and achievable convergence rate results. The assumption \(G_{f^*_Q} = G_{f^*_P}\) can be relaxed under alternative modeling assumptions on regression functions  \citep{reeve2021adaptive,fan2023robust,maity2024linear}. For example, \cite{reeve2021adaptive} assumed the support of covariates can be segmented into a finite number of cells, and for each cell there exists a function \(g(\cdot)\) and a constant \(\gamma\) such that \(\frac{g(\eta_Q) - g(1/2)}{\eta_P -1/2 }\geq \gamma\), offering a flexible model. In \cite{fan2023robust}, the similarity is captured by restricting the difference close to the decision boundary of \(Q\) while allowing for different decision boundaries. In \cite{maity2024linear}, the relationship between the two regression functions is captured by a linear adjustment of the regression function. To the best of our knowledge, all the existing work in this strand has been focused on modeling the posterior drift through regression functions.

\subsection{Our contributions}
Despite theoretical developments addressing posterior drift through regression function-based approaches, direct modeling of the decision rule under posterior drift remains comparatively underexplored. 

To address this gap, we propose a framework that directly characterizes knowledge transfer via transformations of the Bayes decision rule, and we introduce an algorithm that utilizes support vector machines (SVM) and empirical risk minimization (ERM) to obtain a classifier with accompanying theoretical guarantees.

Compared to the regression function-based method, our model is more flexible since we directly impose assumptions on the Bayes decision boundary rather than the regression function. We also allow noise on the transformations of the Bayes decision rule,  offering enhanced robustness in our model specification. Specifically, we assume that there exist a known function \(h(\cdot, \cdot): \mathcal{E}\times \Theta \to \mathcal{E}\) and an unknown parameter $\theta^*\in\Theta \subseteq \R^p$ such that:
\begin{align}\label{eq: transformequation}
 d_{H}(G_{f_Q^*}, h(G_{f_P^*}, \theta^*))\vee d_{H}\left( \overline{G_{f_Q^*}^C}, \overline{h(G_{f_P^*}, \theta^*)^C} \right)\leq \delta,
\end{align}
where \(d_{H}\) is the Hausdorff distance, \(h(\cdot, \cdot)\) is a pre-specified parametric transformation of the decision rule, such as translation and rotation and $\mathcal{E}$ denotes the collection of all possible sets in $\R^d$. The unknown parameter \(\theta^*\) represents the ``strength'' of knowledge transfer, and  \(\delta\) represents the ``strength'' of noise. When $\delta=0$, this condition  implies that the two decision rules are related through the  transformation $G_{f_Q^*}=h(G_{f_P^*}, \theta^*)$. Direct modeling of the decision boundary offers distinct advantages compared to traditional regression-function-based transfer learning approaches. Specifically, it allows flexible accommodation of geometric changes, such as translations and rotations, without imposing stringent form assumptions on the underlying regression function, thereby providing enhanced adaptability in real-world scenarios characterized by complex distributional drifts. As we'll explain in greater detail, when the dimension of the parameter $\theta^*$ is much smaller than the dimension of the features, our model essentially reduces a high-dimensional classification task to a lower-dimensional one by restricting the class of Bayes decision rules in (\ref{eq: transformequation}).

The noise component \(\delta\) serves a role analogous to that of divergence measure in the domain adaptation literature such as \cite{david2010impossibility}, quantifying similarity between the two sets $G_{f_Q^*}$ and $h(G_{f_P^*}, \theta^*)$. The noise component in our model explicitly acknowledges potential model misspecification and the inherent uncertainty in real-world transformations between decision boundaries. By doing so, our approach is robust against minor deviations or inaccuracies in specifying the transformation function, thus reflecting empirical settings where perfect alignment between domains rarely occurs.

Under the decision rule drift model (\ref{eq: transformequation}), we propose a novel methodology for constructing a classifier for the target distribution $Q$. The main idea is to calibrate the SVM-based classifier under the source distribution $P$ via an additional empirical risk minimization step. To avoid negative transfer, we also train a classifier on the target data only and then select among the classifiers based on target-data performance, following a similar idea as \cite{tsybakov2004optimal,bunea2007aggregation,reeve2021adaptive}. Under mild conditions (e.g., margin conditions and geometric noise conditions), the proposed approach achieves consistent classification, with error bounds explicitly dependent on the sample sizes $n_P$ and $n_Q$, the magnitude of the noise $\delta$, the dimensionality of $\theta^*$ and the parameters associated with the margin conditions and geometric noise conditions.

Motivated by its conceptual and practical parallels, our framework naturally extends to the estimation of optimal individualized treatment rule (ITR). Although randomized experiments are the gold standard for deriving ITR, their results may fail to generalize if the target population differs from the studied population \citep{zhao2019robustifying}. Recent literature addresses such limitations through domain adaptation and covariate drift approaches \citep{zhao2019robustifying,wu2023transfer}, but posterior drift remains largely unexplored. Our proposed decision-rule-based framework directly addresses this limitation, thereby significantly enhancing ITR estimation under posterior drift. We generalize our methodology and theory to the estimation of ITRs.

The remainder of this paper is structured as follows. Section~\ref{sec: method} introduces our proposed methodology for estimating the optimal classifier under the proposed decision rule transformation framework. Section~\ref{sec: theory} details the assumptions underpinning our approach and presents theoretical error bounds that characterize its performance. Section~\ref{sec: itr} further elaborates on the application of our algorithm to causal inference, specifically emphasizing its implications for ITR estimation and providing relevant theoretical guarantees. Finally, Sections~\ref{sec: simulation} and \ref{sec: realdata} illustrate the practical effectiveness of our methodology through comprehensive simulation studies and analyses of real-world data, respectively. All supporting code is available upon request and, upon publication, will be made publicly accessible on GitHub, with the methodology distributed as an \texttt{R} package.

\subsection{Notations}

Let \(\mathbb{I}(\cdot)\) be the indicator function, and \((\cdot)_+\) be the truncation function at zero such that \((x)_+ = \max(0, x)\). Let $a\wedge b=\min (a, b)$. The symmetric difference of two sets \(G_{f_1}, G_{f_2}\) is defined as:
\begin{align*}
d_\Delta(G_{f_1}, G_{f_2})= G_{f_1}\Delta G_{f_2} = (G_{f_1}\cap G_{f_2}^C) \cup \left( G_{f_1}^C\cap G_{f_2} \right).
\end{align*}
For a reproducing kernel Hilbert space (RKHS) \(\mathcal{H}\), the corresponding norm is denoted as \(\|\cdot\|_{\mathcal{H}}\). We use $\|v\|$ to denote the $L_2$ norm of a vector $v$.

In addition, we replace \(P\) by \(P_n\) when considering the empirical version of the risk. For example, the empirical version of the risk function is defined as:
\begin{align*}
 R_{P_n}(f) = \frac{1}{n} \sum_{i=1}^{n}\mathbb{I}\left(Y_i \neq \textrm{sign}( f(X_{i}))\right).
\end{align*}
Equivalently, for a decision rule \(G\), the empirical risk is defined as:
\begin{align*}
 R_{P_n}(G) = \frac{1}{n} \sum_{i=1}^{n}\mathbb{I}\left(Y_i \neq (2\mathbb{I}(X_i \in G) -1)\right).
\end{align*}
Similarly, we can define $R_P(f)$ and $R_P(G)$. The distance of a point \(x\in \mathcal{X}\) to the decision boundary is defined as:
\begin{align*}
    \tau(x) = \begin{cases}
 d(x, X_1) & \text{ if } x\in X_{-1},\\
 d(x, X_{-1}) & \text{ if } x\in X_{1},\\
        0 & \text{ otherwise},
    \end{cases}
\end{align*}
where \(d(x, X_1) \coloneqq \min_{y \in X_1} \left\| x-y \right\|\), and \(X_1 \coloneqq \left\{ x \in \mathcal{X}: f^*_P(x) > 0 \right\}\), and all other terms are defined similarly. The Hausdorff distance of two sets \(X, Y\) is defined as \(d_{H}(X,Y) := \max\left\{\,\sup_{x \in X} d(x,Y),\ \sup_{y \in Y} d(X,y) \,\right\}.\) We also consider the \(L_2\) norm of \(f\) with respect to the probability distribution \(P\) as: 
\begin{align*}
    \|f\|_{P, 2} \coloneqq \left( \int |f(X)|^2 dP_X\right)^{1/2}.
\end{align*}
Lastly, for a set \(A\), \(\bar{A}\) and $A^C$ denote the closure and the complement of \(A\).

\section{Proposed Methodology}\label{sec: method}
\subsection{Problem Setup}
In this section, we present the statistical setup and the proposed methodology. We observe two datasets \(D_P = \{(X_{i}^P, Y_i^P)\}_{i= 1}^{n_P} \sim P \) drawn i.i.d. from the source distribution \(P\)
and \(D_Q = \{(X_{i}^Q, Y_i^Q)\}_{i= 1}^{n_Q} \sim Q \) drawn i.i.d. from the target distribution \(Q\). For each observed  tuple \((X_i, Y_i)\), \(X_i\) is a \(d-\)dimensional vector of covariates and \(Y_i\) is the binary class indicator where \((X_i, Y_i)\in \mathcal{X}\times \left\{-1 , 1 \right\}, \ \mathcal{X} \subseteq \R^d\).

Recall that given a classifier $f$, the risk function under the target distribution $R_Q(f)$ and the Bayes risk $R^*_Q$ are defined in (\ref{eq_risk}) and (\ref{eq_risk_bayes}), respectively. A good classifier $\hat f_Q$ should have small excess risk, defined as:
\begin{align*}
 R_Q(G_{\hat{f}_Q}) - R^*_Q.
\end{align*}
Our goal is to construct a good classifier $\hat f_Q$ for the target distribution under the decision rule drift model \eqref{eq: transformequation}. Next, we'll introduce the methodology and corresponding theoretical results.

\subsection{Methodology}
To derive the decision rule, we focus on the SVM which directly estimates the decision function \citep{steinwart2007fast}. Unlike other nonparametric models for regression functions, SVM explicitly targets the estimation of the decision boundary function, thereby providing a straightforward representation of the decision rule. 

Specifically, we start from an initial classifier for the source data through SVM with the following Gaussian radial basis function (RBF), 
\begin{align*}
 k_\sigma(x, x') = \exp(-\sigma^2\|x - x'\|_2^2), \quad x, x'\in \mathcal{X},
\end{align*}
where \(\sigma > 0\) is the \(bandwidth\) parameter of the kernel. Let \(\mathcal{H}_\sigma\) be the RKHS of the Gaussian RBF kernel with bandwidth \(\sigma\) over \(\mathcal{X}\) of measurable functions. The corresponding classifier \(\hat{f}_P\) obtained through SVM is thus defined as:
\begin{align}
  \hat{f}_P &= \argmin_{f\in \mathcal{H_{\sigma_P}}} \lambda_P \left\|f\right\|_\mathcal{H_{\sigma_P}}^2 + \frac{1}{n_P}\sum_{i\in D_{P}}\Big\{ (1- Y_if(X_i) )_+\Big\},
\end{align}
where \(\lambda_P\) is a regularization parameter and \(\sigma_P\) is the bandwidth. We define $G_{\hat{f}_P}=\{x: \hat{f}_P(x)\geq 0\}$. We note that it is also possible for us to use the SVM classifier with an offset term, where a similar result will follow.

To transfer knowledge from the source data, we next estimate the unknown parameter $\theta^*$ in the decision rule drift model (\ref{eq: transformequation}). The main idea is to calibrate the set $G_{\hat{f}_P}$ from the above SVM classifier via an additional empirical risk minimization step on the target data. Before that,  we split the target dataset randomly into two equally sized subsets \(D_{1,Q}, D_{2,Q}\), reserving one subset to guard against negative transfer. In the dataset $D_{1,Q}$, we solve the following empirical risk minimization problem
\begin{align}\label{eq: ftildeqdef}
  \hat{\theta} &= \argmin_{\theta\in\Theta} \frac{2}{n_Q}\sum_{i\in D_{1, Q}} \mathbb{I}\left\{ Y_i \neq (2\mathbb{I}(X_i \in h(G_{\hat f_P},\theta)) -1) \right\} .
\end{align} 
Given the estimator $\hat{\theta}$, we define the calibrated decision rule as \(G_{\tilde{f}_Q} \coloneqq h(G_{\hat{f}_P}, \hat{\theta})\). The estimator $\hat{\theta}$ is obtained by minimizing the empirical misclassification error with respect to a low-dimensional parameter $\theta$ in a suitable parameter space $\Theta$. To better understand the empirical risk minimization problem (\ref{eq: ftildeqdef}), we consider the case that there exists some $\theta^*\in\Theta$ such that $G_{f_Q^*}=h(G_{f_P^*}, \theta^*)$, that is the decision rule drift model (\ref{eq: transformequation}) holds with $\delta=0$.  Since $G_{f_Q^*}$ corresponds to the Bayes classifier, we have $G_{f_Q^*}=\argmin_G Q\{Y\neq (2\mathbb{I}(X\in G) -1) \}$, which implies
\begin{align}\label{eq: ftildeqdef_2}
\theta^*=\argmin_{\theta\in\Theta} Q\left\{ Y\neq (2\mathbb{I}(X\in h(G_{f^*_P},\theta)) -1) \right\}. 
\end{align} 
The empirical risk minimization problem (\ref{eq: ftildeqdef}) can be viewed as the sample analog of (\ref{eq: ftildeqdef_2}). We note that, in general, we cannot replace the indicator function in (\ref{eq: ftildeqdef}) with a convex surrogate loss (i.e., hinge loss or logistic loss) \citep{bartlett2006convexity}. Otherwise, without further conditions, the resulting calibrated decision rule \(G_{\tilde{f}_Q}\) may not converge to $G_{f_Q^*}$ in terms of the excess risk. Since \(\theta\) is often of low dimension, the minimizer can be identified via the Nelder–Mead method or mixed-integer programming.

From the above intuition, we can see that the calibrated decision rule \(G_{\tilde{f}_Q}\) is a good classifier, when the noise component $\delta$ in the decision rule drift model (\ref{eq: transformequation}) is small. As expected, the excess risk of \(G_{\tilde{f}_Q}\) may inflate as $\delta$ grows, which eventually makes the resulting classifier inconsistent. This phenomenon is often referred to as negative transfer in the literature. To prevent negative transfer, we re-estimate the decision boundary using SVM with the target data only, 
\begin{align}
  \hat{f}_Q &= \argmin_{f\in \mathcal{H_{\sigma_Q}}} \lambda_Q \left\|f\right\|_\mathcal{H_{\sigma_Q}}^2 + \frac{2}{n_Q}\sum_{i\in D_{1, Q}} \Big\{ (1- Y_if(X_i) )_+\Big\},
\end{align}
where \(\lambda_Q\) is a regularization parameter and \(\sigma_Q\) is the bandwidth. Similarly, we define $G_{\hat{f}_Q}=\{x: \hat{f}_Q(x)\geq 0\}$. Following the idea in \cite{tsybakov2004optimal,bunea2007aggregation,reeve2021adaptive}, we aggregate the three decision rules, $G_{\tilde{f}_Q}, G_{\hat{f}_Q}$ and $G_{\hat{f}_P}$ by selecting the one with the smallest empirical risk in the dataset $D_{2,Q}$, 
\begin{align}
 G_{\hat{f}_{Q, final} }= \argmin_{G\in\{G_{\tilde{f}_Q}, G_{\hat{f}_Q}, G_{\hat{f}_P}\}} \frac{2}{n_Q}\sum_{i\in D_{2, Q}} \mathbb{I}\Big\{ Y_i \neq (2\mathbb{I}(X_i \in G) -1) \Big\} , \label{eq: fhatfinal}
\end{align}
which gives our final estimator.

\section{Theoretical Properties}\label{sec: theory}
We now establish theoretical properties of our classifier. We begin by listing the assumptions.
\subsection{Assumptions}\label{subsubsec: assu}
\begin{definition}[Margin Condition]\label{assu: margincondition}
 For distribution \(P: X\times Y\) with the regression function \(\eta_P\), we say that the margin condition holds with parameter \(\alpha\) if there exist constants \(\alpha\geq 0\) and \(C_\alpha>0\) such that
  \begin{align*}
      \pr(|2\eta_P(X)-1| \leq t) \leq C_\alpha t^\alpha \hspace*{.3in}  \forall 0 < t \leq t_0,
  \end{align*}
where \(t_0 \leq 1\) is a constant.
\end{definition}
This condition, which controls the mass around the decision boundary, has been discussed in \cite{mammen1999smooth} and \cite{tsybakov2004optimal}. When \(\alpha\) is large, the mass in regions where classification is inherently challenging is low. In extreme cases where \(\pr(|2\eta(X)-1| \geq \varepsilon) = 1\) for some \(\varepsilon\), the coefficient \(\alpha = \infty\). This assumption is widely applied in SVM and transfer learning for classification literature, see \cite{steinwart2007fast,hanneke2019value,cai2021transfer,reeve2021adaptive,fan2023robust,maity2024linear}. We denote the corresponding class of distributions as \(M(\alpha, C_\alpha)\). 
\begin{definition}[Modified Geometric Noise Assumption]\label{assu: noisecondition}
 For distribution \(P: X\times Y\) with the regression function \(\eta_P\), we say the modified geometric noise condition holds if there exist constants \(C_n > 0\) and \(\gamma > 0\) such that:
  \begin{align}\label{eq: margincondition}
      \int_{\left\{ x \in X: \tau(x)\leq t \right\}}|2\eta_P(x) -1| dP_X \leq C_n t^\gamma
  \end{align}
 holds for all \(0\leq t \leq t_0\) for \(t_0\in\R\) , where \(\tau(x)\) is the distance of point \(x\) to the Bayes decision boundary.
\end{definition}

The geometric noise condition is introduced in \cite{steinwart2007fast} to derive the fast rates of SVM. Here we present a slightly modified version from \cite{hamm2021adaptive} to simplify the analysis of the approximation error term. In addition, we note that this assumption is not implied by the margin condition, unless some additional assumption between the \(\tau(x)\) and \(|\eta(x) - 1/2|\) is present, such as the one presented in \cite{castro2007minimax}.x To better illustrate the difference between the two conditions, we rewrite the margin condition as 
\[ \int_{\left\{ x \in X: |\eta_P(x) - 1/2|\leq t \right\}} dP_X \leq C_\alpha t^\alpha.\] 
We can see that the margin condition keeps the density low, when the point is close to the decision boundary measured by the magnitude of \(|\eta(x) - 1/2|\). 
On the other hand, the modified geometric noise condition keeps the density low, when the point is close to the decision boundary measured by the Euclidean distance of \(x\) to the decision boundary. As an illustration, we provide a simple example in which we verify the  margin and modified geometric noise conditions. 
For simplicity, we denote the class of distributions satisfying the modified geometric noise condition as \(N(\gamma, C_\gamma)\).

\begin{example}\label{example: noninterchangeablity of geometric and exponential}

Consider a distribution \(P: X\times Y\) such that the covariate \(X\) is uniformly distributed within a \(d\)-dimensional unit ball centered at the origin, denoted by \(B_d(0, 1)\), and the regression function is given by
\[
\eta_P(x) = \frac{1}{2} + \frac{1}{2}\text{sign}(x^T\beta)|x^T\beta|^t,
\]
where \(t > 0\) is a fixed constant and \(\|\beta\| = 1\). In this example, the Bayes decision boundary is $\{x\in B_d(0, 1): x^T\beta=0\}$. The modified geometric noise condition is verified as follows: 
\begin{align*}
\int_{\left\{ x \in B_d(0, 1): \tau(x)\leq c \right\}}|2\eta_P(x) -1| dP_X  &=\int_{\left\{ x \in B_d(0, 1): \tau(x)\leq c \right\}}|x^T\beta|^t dx\\
&= \int_{ |x_1| \leq c} |x_1|^t d x\int_{\lVert x_{-1} \rVert \leq \sqrt{1 - x_1^2}} dx_{-1}\\
& = \frac{\pi^{(d-1)/2}}{\Gamma\left( \frac{d+1}{2} \right)}\int_{\lvert x_1 \rvert \leq c} \lvert x_1 \rvert^t (1-x_1)^{(d-1)/2}\, dx_1\\
  & \lesssim \int_{\lvert x_1 \rvert \leq c} \lvert x_1 \rvert^t\, dx_1\\
  & \lesssim c^{t+1}
\end{align*}
for some small constant $0<c<1$, where the second line follows from the fact that we rotate the axes to match $x^T\beta$ with the first coordinate being $x_1$ and denote $x_{-1}=(x_2,..,x_d)$.  Thus, the modified geometric noise condition is satisfied with \(\gamma=t + 1\). Using a similar calculation, we can show that the margin condition is satisfied with \(\alpha=1/t\). It is then easy to see when $t$ increases, the coefficient in the modified geometric noise condition increases, whereas the coefficient in the margin condition decreases, and thus the two conditions share different characteristics.
\end{example}

\begin{definition}[Strong Density Condition]
\label{def: density}
The distributions $P_X, Q_X$ are said to have common support and satisfy the strong density assumption with parameter $\mu=$ $\left(\mu_{-}, \mu_{+}\right), \mu_->0 \text{ and } \mu_+>0$ if both $P_X$ and \(Q_X\) are absolutely continuous with respect to the Lebesgue measure \(\lambda\), and
$$
\begin{aligned}
\Omega_X & \coloneqq \operatorname{supp}\left(P_X\right)=\operatorname{supp}\left(Q_X\right)\subset \R^d, \\
\mu_{-} & <\frac{d P_X}{d \lambda}(x)<\mu_{+} \quad \forall x \in \Omega_X,\\
\mu_{-} & <\frac{d Q_X}{d \lambda}(x)<\mu_{+} \quad \forall x \in \Omega_X.
\end{aligned}
$$
Lastly, we assume that \(\Omega_X\) is compact and has finite volume.
\end{definition}

The strong density assumption was first introduced for classification in \cite{audibert2007fast}, and also used in the transfer learning literature in \cite{cai2021transfer}. This assumption shows that the marginal distributions of \(P_X\) and \(Q_X\) share some similarities, since our paper focuses on the posterior drift setting. For the pair of distributions \((P, Q)\) satisfying the definition above, we  denote it as \(S(\mu_-, \mu_+)\).

\begin{definition}[Decision Rule Drift]\label{assu: boundaryassu}
Let \(f_P^*, f_Q^*\) be the Bayes classifiers of \(P\) and \(Q\) respectively. Let \(G_{f_P^*}\) and \(G_{f_Q^*}\) denote the corresponding decision sets. We assume that there exists a transfer function \(h(\cdot,\cdot)\) indexed by some parameter \(\theta^* \in \Theta \subseteq \R^p\) such that:
\begin{align*}
 d_{H}(G_{f_Q^*}, h(G_{f_P^*}, \theta^*))\vee d_{H}\left( \overline{G_{f_Q^*}^C}, \overline{h(G_{f_P^*}, \theta^*)^C} \right)\leq \delta,
\end{align*}
where \(d_{H}\) is the Hausdorff distance, and \(h\) is Lipschitz with constants \(M_1, M_2\) under the symmetric difference measure. That is, for any set $G, G_1, G_2$, and parameter $\theta, \theta_1$ and $\theta_2$, we have
\begin{align*}
 d_\Delta(h(G, \theta_1), h(G, \theta_2)) &\leq M_1 \left\| \theta_1- \theta_2 \right\|_2,\\
 d_\Delta(h(G_1, \theta), h(G_2, \theta)) &\leq M_2 d_\Delta(G_1,  G_2).
\end{align*}
Without loss of generality, we assume $h(G,0)=G$ for any set $G$. 
\end{definition}
The above definition formalizes our decision rule drift model (\ref{eq: transformequation}). The Lipschitz property under the symmetric difference measure indexed by a parameter holds for several common set operations, such as translation, rotation, or a composite of both \citep{schymura2014upper}. We denote the class distributions that follow the above assumption as \(\mathcal{H}(\Theta, \delta)\).

\begin{remark}
Our proposed decision rule drift model shares a similar spirit of the posterior drift model in \cite{maity2024linear}. However, the posterior drift model in \cite{maity2024linear} imposed a parametric transformation between the two regression functions $\eta_P$ and $\eta_Q$, whereas our proposed decision rule drift model can be viewed as a more direct approach to modeling the relevance of the two decision sets $G_{f^*_P}$ and $G_{f^*_Q}$. In addition, by allowing for $\delta>0$, our method can accommodate misspecification of the transfer function  \(h(\cdot,\cdot)\), that is, $G_{f_Q^*}\neq h(G_{f_P^*}, \theta^*)$, which shares similar characteristics as domain adaptation literature. 
\end{remark}
\begin{remark}
In our approach, we adopt the Hausdorff distance to quantify the similarity between the two decision sets $G_{f^*_P}$ and $G_{f^*_Q}$ when $h(\cdot,\theta^*)$ is an identity function. In the domain adaptation literature, \cite{ben2006analysis} measured the similarity between the source and target distributions via the \(\mathcal{A}-\)distance \(d_\mathcal{A}(P, Q) \coloneqq 2 \sup_{A \in\mathcal{A}}|P(A) - Q(A)|\), where \(\mathcal{A}\) is a suitable collection of measurable sets. However, they did not consider the existence of transformation in the decision rule and thus the method doesn't directly apply to our model. In addition, as we'll see in the discussion of the main result, our formulation provides a faster rate of convergence under proper additional assumptions.
\end{remark}

Putting the above assumptions together, we consider the following model space:
\begin{align*}
 & \Pi(\alpha_P, C_{\alpha_P},\alpha_Q, C_{\alpha_Q},\gamma_P, C_{\gamma_P}, \gamma_Q, C_{\gamma_Q}, \mu_-, \mu_+, \Theta, \delta)  \\
 & := \left\{ (P, Q): P \in M(\alpha_P, C_{\alpha_P})\cap N(\gamma_P, C_{\gamma_P}), Q\in M(\alpha_Q, C_{\alpha_Q})\cap N(\gamma_Q, C_{\gamma_Q}),\right.\\
 &~~~~~\phantom{A}\left. (P, Q)\in S(\mu_-, \mu_+)\cap\mathcal{H}(\Theta, \delta)\right\}.
\end{align*}
For simplicity, we denote  \(\Pi(\alpha_P, C_{\alpha_P},\alpha_Q, C_{\alpha_Q},\gamma_P, C_{\gamma_P}, \gamma_Q, C_{\gamma_Q}, \mu_-, \mu_+, \delta) \) as \(\Pi\) if no confusion arises. Different from the regression function based approach in \cite{cai2021transfer},  we introduce two sets of parameters, in the margin condition, $(\alpha_P, C_{\alpha_P})$ and $(\alpha_Q, C_{\alpha_Q})$ for the source and target distributions, respectively, as the margin condition on \(P\) cannot be transferred to \(Q\) or vice versa under our model. This is because our decision rule drift model (\ref{eq: transformequation}) and the margin condition characterize different aspects of the model formulation. Specifically, our decision rule drift model only imposes the constraint on the geometric structure of the decision boundary, whereas the margin condition pertains to the behavior of the regression function nearby the decision boundary. Following similar arguments, we also impose two sets of parameters in the modified geometric noise condition for the source and target distributions, respectively.

\subsection{Main Theoretical Results}
In this subsection, we present our main result on the convergence rate of the transfer learning classifier proposed in Section~\ref{sec: method}.
\begin{theorem}\label{thm: fhatfinalresult}
 Let \(G_{\hat{f}_{Q, final}}\) be the classifier defined by \eqref{eq: fhatfinal}. Denote \(\gamma_P' = \gamma_P/d\), \(\gamma_Q' = \gamma_Q/d\), 
  \begin{align*}
    \beta_P := \begin{cases}
      \frac{\gamma_P'}{2\gamma_P'+1} & \text{ if } \gamma_P' \leq \frac{\alpha_P  + 2 }{2 \alpha_P}\\
      \frac{2\gamma_P'(\alpha_P+1)}{2\gamma_P'(\alpha_P+2) + 3 \alpha_P + 4}&\text{ otherwise. }
    \end{cases}
  \end{align*}
Let \(\lambda_{n_P, P}= n_P^{-(\gamma_P'+d)/\gamma_P'\beta_P}\) and \(\sigma_{n_P, P}= n_P^{\beta_P/\gamma_P'}\).   Define \(\beta_Q,\lambda_{n_Q, Q}\) and $\sigma_{n_Q, Q}$ in the same way. In addition, let 
\begin{align*}
 A_{n_P, n_Q} &=  \left(n_P^{-\beta_P+\varepsilon_P} \right)^{{\frac{\alpha_P}{1+\alpha_P}}} +   \delta^{\gamma_Q} +  \left( \frac{p\log n_Q}{n_Q} \right)^{\frac{1+ \alpha_Q}{2+ \alpha_Q}} \wedge \left\| \theta^* \right\|_2 ,\\
 B_{n_Q} &= n_Q^{-\beta_Q+\varepsilon_Q},
  \end{align*}
for any \(\varepsilon_P, \varepsilon_Q > 0\). Then, uniformly over $(P,Q)\in \Pi$, 
  \begin{align}
{R}_{Q}(G_{\hat{f}_{Q, final}})  -  {R}_{Q}^*
    =O_p\left( A_{n_P, n_Q}\wedge B_{n_Q}+ \left( \frac{1}{n_Q} \right)^{\frac{1+\alpha_Q}{2+\alpha_Q}}\right).\label{thm_bound}
  \end{align}
\end{theorem}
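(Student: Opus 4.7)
I would prove Theorem~\ref{thm: fhatfinalresult} by (i) separately bounding the excess $Q$-risk of each of the three candidate classifiers $G_{\hat f_P}$, $G_{\hat f_Q}$, and $G_{\tilde f_Q}$, and (ii) invoking a standard model-selection aggregation argument on the held-out subset $D_{2,Q}$. Because all three candidates are measurable functions of data that are independent of $D_{2,Q}$, the ERM in \eqref{eq: fhatfinal} will select, up to a three-fold selection penalty, the candidate with the smallest population excess $Q$-risk; taking the minimum of the individual bounds then produces the $A_{n_P,n_Q}\wedge B_{n_Q}$ portion of \eqref{thm_bound}.

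For $G_{\hat f_Q}$, the plan is to apply the SVM fast-rate result of \cite{steinwart2007fast} under the margin condition $M(\alpha_Q,C_{\alpha_Q})$ and the modified geometric noise condition $N(\gamma_Q,C_{\gamma_Q})$ with the tuning choices $\lambda_{n_Q,Q}$ and $\sigma_{n_Q,Q}$ specified in the theorem. Since hinge loss upper-bounds the 0--1 loss, this produces an excess $Q$-risk of order $n_Q^{-\beta_Q+\varepsilon_Q}=B_{n_Q}$. The identical argument applied to the source sample gives $R_P(G_{\hat f_P})-R_P^*=O_p(n_P^{-\beta_P+\varepsilon_P})$; converting this to a symmetric-difference statement via the margin condition on $P$ yields $P_X(G_{\hat f_P}\Delta G_{f_P^*})=O_p\bigl((n_P^{-\beta_P+\varepsilon_P})^{\alpha_P/(1+\alpha_P)}\bigr)$, and the strong density condition $S(\mu_-,\mu_+)$ transfers the same order to $Q_X$.

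For $G_{\tilde f_Q}=h(G_{\hat f_P},\hat\theta)$, the analysis is the heart of the proof. I would use the decomposition
\begin{align*}
R_Q(G_{\tilde f_Q})-R_Q^*
&\le \underbrace{\bigl|R_Q(h(G_{\hat f_P},\hat\theta))-R_Q(h(G_{f_P^*},\hat\theta))\bigr|}_{\text{(I) propagated source error}}\\
&\quad+\underbrace{R_Q(h(G_{f_P^*},\hat\theta))-R_Q(h(G_{f_P^*},\theta^*))}_{\text{(II) estimation error in }\theta}\\
&\quad+\underbrace{R_Q(h(G_{f_P^*},\theta^*))-R_Q^*}_{\text{(III) decision-rule drift noise}}.
\end{align*}
Term (I) is bounded by combining $|R_Q(G_1)-R_Q(G_2)|\le Q_X(G_1\Delta G_2)$ with the Lipschitz property $d_\Delta(h(G_1,\theta),h(G_2,\theta))\le M_2\, d_\Delta(G_1,G_2)$ from Definition~\ref{assu: boundaryassu} and the symmetric-difference bound of the previous paragraph, producing the first summand of $A_{n_P,n_Q}$. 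Term (III) follows from the modified geometric noise condition on $Q$ applied to the $\delta$-Hausdorff gap between $G_{f_Q^*}$ and $h(G_{f_P^*},\theta^*)$, yielding $\delta^{\gamma_Q}$. For Term (II), I view $\hat\theta$ as an ERM over the $p$-parameter family $\{h(G_{f_P^*},\theta):\theta\in\Theta\}$: conditional on the event that $G_{\hat f_P}$ is close to $G_{f_P^*}$, the Mammen--Tsybakov localization bound with the margin condition on $Q$ and the Lipschitz constant $M_1$ yields the parametric rate $(p\log n_Q/n_Q)^{(1+\alpha_Q)/(2+\alpha_Q)}$; the $\wedge\|\theta^*\|_2$ cap arises from the competing trivial bound $R_Q(h(G_{f_P^*},\theta^*))-R_Q(h(G_{f_P^*},0))\le cM_1\|\theta^*\|_2$, which is delivered via the aggregation step since $G_{\hat f_P}=h(G_{\hat f_P},0)$ is itself one of the aggregated candidates, so the selection ERM never performs worse than this fallback.

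The main technical obstacle, in my view, is the uniform-in-$\theta$ empirical process argument for Term (II): the loss class $\{(X,Y)\mapsto\mathbb{I}(Y\ne 2\mathbb{I}(X\in h(G_{\hat f_P},\theta))-1)\}_{\theta\in\Theta}$ is random through $G_{\hat f_P}$, so one must establish a VC-type complexity bound whose constants are insensitive to the particular realization of $\hat f_P$ but exploit the Lipschitz property of $h$ in $\theta$. I would handle this by conditioning on the source data so that $G_{\hat f_P}$ becomes deterministic, applying a standard parametric VC bound in $\theta$, and then taking expectations; the Tsybakov-exponent-enhanced $(1+\alpha_Q)/(2+\alpha_Q)$ power comes from a localization argument under the margin condition on $Q$. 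The aggregation step itself is then a routine application of Bernstein's inequality for three-classifier selection, contributing only the additive $(1/n_Q)^{(1+\alpha_Q)/(2+\alpha_Q)}$ term.
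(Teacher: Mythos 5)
Your proposal is correct and follows essentially the same route as the paper: SVM fast rates under the modified geometric noise condition for $\hat f_P$ and $\hat f_Q$, a three-term decomposition of the calibrated classifier's excess risk into propagated source error (via the Lipschitz constant $M_2$ and the Mammen--Tsybakov margin lemma), a localized ERM bound over the Lipschitz-in-$\theta$ family (covering number of order $(1+4D/\varepsilon)^p$, peeling, margin condition on $Q$), and the $\delta^{\gamma_Q}$ drift term from the Hausdorff gap, followed by selection on $D_{2,Q}$. The only cosmetic differences are that the paper's ERM oracle is $\hat\theta^*=\argmin_{\theta\in\Theta} R_Q(h(G_{\hat f_P},\theta))$ for the \emph{random} family indexed by the source data (which avoids your extra step of arguing that $\hat\theta$ is an approximate empirical minimizer over $\{h(G_{f_P^*},\theta)\}$), and that the aggregation step invokes a margin-adapted selection bound (Proposition 12 of \cite{reeve2021adaptive}) rather than plain Bernstein --- it is the variance-to-mean link supplied by the margin condition that yields the $(1/n_Q)^{(1+\alpha_Q)/(2+\alpha_Q)}$ penalty instead of $n_Q^{-1/2}$.
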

The proof of this theorem can be found in the Supplementary Materials. At a high level, we first establish the error rate of $\hat f_P$ under the modified geometric noise condition. Then, we employ empirical process techniques to derive performance guarantees for the ERM classifier. Finally, we aggregate the classifiers to obtain the final error bound. 

The upper bound for the excess risk can be interpreted as follows.  The component \(A_{n_P,n_Q}\), excluding \(\lVert \theta^* \rVert_2\), coincides with the risk bound for the calibrated classifier \(G_{\tilde f_Q}\): specifically, \(\bigl(n_P^{-\beta_P+\varepsilon_P}\bigr)^{\frac{\alpha_P}{1+\alpha_P}}\) is the convergence rate of the source SVM \(\hat f_P\); \(\delta^{\gamma_Q}\) quantifies the discrepancy between \(R_Q(G_{f_Q^*})\) and \(R_Q\!\bigl(h(G_{f_P^*},\theta^*)\bigr)\); and \(\bigl(\tfrac{p\log n_Q}{n_Q}\bigr)^{\frac{1+\alpha_Q}{2+\alpha_Q}}\) is the ERM estimation error in \eqref{eq: ftildeqdef}, with \(p=\dim(\theta^*)\). Setting \(\theta=0\) (so \(G_{\hat f_P}=h(G_{\hat f_P},0)\)) replaces this ERM term by \(\lVert \theta^* \rVert_2\) and recovers the risk bound for \(\hat f_P\). The error \(B_{n_Q}\) is the standard risk bound for the SVM classifier $\hat f_Q$ with Gaussian kernels, first introduced by \cite{steinwart2007fast}. Introducing such a term prevents possible negative transfer when our calibrated classifier $G_{\tilde f_Q}$ has a compromised performance. In addition, we note that the last term \( n_Q^{-\frac{1+\alpha_Q}{2+\alpha_Q}} \) in the risk bound for $G_{\hat{f}_{Q, final}}$ is the cost of the aggregation of the classifiers.

When \(n_P \gg n_Q\), $\delta$ is sufficiently close to 0 and $p$, the dimension of $\theta^*$, is small, the final rate of convergence is dominated by the error of the calibrated classifier $G_{\tilde f_Q}$, \(( \frac{p\log n_Q}{n_Q} )^{\frac{1+ \alpha_Q}{2+ \alpha_Q}}\) obtained through empirical risk minimization \citep{mammen1999smooth,tsybakov2004optimal}. In particular, this rate can be faster than $B_{n_Q}(x)$, the risk bound for the standard SVM classifier applied to the target data, illustrating the theoretical benefit of borrowing information from the source data to improve classification performance on the target data.  
This reinforces our insight, where our idea is to convert a nonparametric classification problem, which usually suffers from the curse of dimensionality, to a low-dimensional empirical risk minimization (assuming $p$ is small) through additional structure assumptions defined in the decision rule drift model.

Similar to the domain adaptation literature such as \cite{ben2006analysis,david2010impossibility}, the noise term \(\delta^{\gamma_Q}\) persists in the final rate. As mentioned previously, the noise term characterizes the effect of the model misspecification of the transfer function  \(h(\cdot,\cdot)\). Indeed, our noise term vanishes to 0 in a fast rate as $\delta\rightarrow 0$, if \(\gamma_Q\) is large enough.

\section{Individualized Treatment Rule Estimation}\label{sec: itr}

In this section, we explore the application of our proposed methodology to the estimation of optimal individualized treatment rules (ITRs). 
\subsection{Statistical Setup and Methodology}

Let \(P\) and \(Q\) denote the source and target distributions, respectively.  From \(P\) we draw an i.i.d. observations of size \(n_{P}\), and from \(Q\) we draw an i.i.d. observations of size \(n_{Q}\). Each observation $i$ is represented by a triplet $(X_i, T_i, R_i)$, where $X_i \in \mathbb{R}^d$ is the pre-treatment covariate vector, $T_i \in \{1,-1\}$ is the treatment indicator (1 if treated and -1 otherwise), and $R_i$ is the observed clinical outcome, where a larger value is desirable. Following the potential outcomes framework and under the stable unit treatment value assumption (SUTVA; \cite{rubin1990comment}), we define the potential outcomes as $R_i(1)$ and $R_i(-1)$ corresponding to the treatment and control conditions, respectively, and only \(R_i(T_i)\) is observed.

The optimal individualized treatment rule (ITR) for the target population $Q$ is defined by:
\begin{align}\label{eq_G_Q_ITR}
 G_{f_Q^*} = \argmax_{G\in\mathcal{G}}\mathbb{E}_{Q}\left[R(1)\mathbb{I}(X \in G) + R(-1)\mathbb{I}(X \notin G)\right],
\end{align}
where $\mathcal{G}$ is a given class of decision rules. Under the assumptions of unconfoundedness (Assumption~\ref{assu: unconf}) and overlap (Assumption~\ref{assu: overlap}) \citep{rosenbaum1983central}, we denote the propensity score in population $Q$ as $\pi_Q(X) = Q(T=1 \mid X)$. The expected clinical outcome can then be decomposed as:
\begin{align*}
  &\phantom{=}\E_{Q}\left[R(1)\mathbb{I}(X \in G) + R(-1)\mathbb{I}(X \notin G)\right] \\
  & = \E_{Q}\left[ R(-1) \right] +\E_Q\left[ \left(  \frac{R T}{T\pi_Q(X) + (1-T)/2}  \right)\mathbb{I}(X \in G)\right].
\end{align*}  
The maximizer can be rewritten as:
\begin{align*}
  G_{f_Q^*} & = \argmax_{G \in \mathcal{G}}\E_Q\left[\left(\frac{R T}{T\pi_Q(X) + (1-T)/2}\right)\mathbb{I}(X \in G)\right] \\
  & = \argmin_{G \in \mathcal{G}}\E_Q\left[\left(\frac{R}{T\pi_Q(X) + (1-T)/2}\right)\mathbb{I}(T\neq (2\mathbb{I}(X \in G)-1))\right].
\end{align*} 
 
With the above derivation, it is now clear that the estimation of optimal ITR can be considered as a weighted classification problem, as also illustrated in \cite{zhao2012estimating,kitagawa2018should}. To model the optimal ITR under the two distributions $P$ and $Q$, we consider the same decision rule drift model (\ref{eq: transformequation}), where $G_{f_Q^*}$ is defined in (\ref{eq_G_Q_ITR}) and $G_{f_P^*}$ is defined in a similar manner.

As a result, the classifier that maximizes the expected clinical outcome also minimizes the risk for the weighted classification problem, where the risk function is defined as:
\begin{align*}
  {R}^W_Q(G) = \E_Q\left[\left(\frac{R}{T\pi_Q(X) + (1-T)/2}\right)\mathbb{I}(T\neq (2\mathbb{I}(X \in G)-1))\right].
\end{align*}

Under the assumption that the propensity score models \(\pi_Q(X)\) and $\pi_P(X)$ are both known, we extend our proposed framework in Section \ref{sec: method} to estimate the optimal ITR, as formalized in Algorithm~\ref{alg:itr_transfer}. The algorithm presented follows a similar structure given in Section~\ref{sec: method}, except that we instead consider a weighted hinge loss in SVM and a weighted 0-1 loss in the empirical risk minimization. Under the similar assumptions as in Section 3, the weighted-classification excess risk of the estimated ITR admits the identical bound as in Theorem \ref{thm: fhatfinalresult}.

\begin{algorithm}[!ht]
  \caption{Transfer Learning for Individualized Treatment Rule under decision rule drift}
  \label{alg:itr_transfer}
  \begin{algorithmic}[1]
  
  \State \textbf{Input:} Source data $D_P = \{(X_i, T_i, R_i)\}_{i=1}^{n_P}$, Target data $D_Q = \{(X_i, T_i, R_i)\}_{i=1}^{n_Q}$.
  \State \textbf{Output:} Final individual treatment rule $G_{\hat{f}_{Q, final}}$

  \State Obtain the source domain classifier $\hat{f}_P$ using weighted SVM:
  \begin{align*}
  \hat{f}_P = \argmin_{f\in \mathcal{H}_{\sigma_P}} \frac{1}{n_P}\sum_{i\in D_P}  \left( \frac{R_i}{T_i\pi_P(X_i) + (1-T_i)/2}\right)(1- T_i f(X_i))_+ + \lambda_P \|f \|_{\mathcal{H}_{\sigma_P}}^2 .
  \end{align*}
  \State Define the corresponding decision set $G_{\hat{f}_P} = \{x : \hat{f}_P(x) > 0\}$.

  \State Split the target dataset $D_Q$ into two equal-sized subsets: $D_{1,Q}$ and $D_{2,Q}$.

  \State Obtain the initial target domain classifier $\hat{f}_Q$ using weighted SVM on $D_{1,Q}$:
  \begin{align*}
  \hat{f}_Q = \argmin_{f\in \mathcal{H}_{\sigma_Q}} \frac{2}{n_Q}\sum_{i\in D_{1, Q}}\left( \frac{R_i}{T_i\pi_Q(X_i) + (1-T_i)/2}\right)(1- T_i f(X_i))_+ + \lambda_Q \|f \|_{\mathcal{H}_{\sigma_Q}}^2 .
  \end{align*}
  \State Define the decision set $G_{\hat{f}_Q} = \{x : \hat{f}_Q(x) > 0\}$.

  \State Estimate the transformation parameter $\theta$ using $D_{1,Q}$:
  \begin{align*}
  \hat{\theta} = \argmin_{\theta\in\Theta} \frac{2}{n_Q}\sum_{i\in D_{1, Q}}  \frac{R_i}{T_i\pi_Q(X_i) + (1-T_i)/2}\mathbb{I}\left( T_i \neq  (2\mathbb{I}\{ X_i \in h(G_{\hat{f}_P}, \theta)\}-1) \right).
  \end{align*}
  \State Define the corresponding decision set $G_{\tilde{f}_Q} = h(G_{\hat{f}_P}, \hat{\theta})$.

  \State Aggregate the classifiers $G_{\tilde{f}_Q}$, $G_{\hat{f}_Q}$, and $G_{\hat{f}_P}$ using $D_{2,Q}$:
  \begin{align}\label{eq: itr_final}
 G_{\hat{f}_{Q, final}} = \argmin_{G\in\{G_{\tilde{f}_Q}, G_{\hat{f}_Q}, G_{\hat{f}_P}\}} \frac{2}{n_Q}\sum_{i\in D_{2, Q}}  \left( \frac{R_i}{T_i\pi_Q(X_i) + (1-T_i)/2}\right)\mathbb{I}\left( T_i \neq (2\mathbb{I}\{ X_i \in G \} -1) \right).
  \end{align}
  
  \State \textbf{return} $G_{\hat{f}_{Q, final}}$
  \end{algorithmic}
  \end{algorithm}

\subsection{Theoretical Results}

Now, we introduce the theoretical results of our methodology. In addition to SUTVA, we start with some standard assumptions based on the potential outcome framework \citep{rosenbaum1983central}.

\begin{assumption}[Unconfoundedness] \label{assu: unconf}
 The treatment assignment is unconfounded, i.e.,\phantom{A} \(\{R_{i}(-1), R_{i}(1)\} \indep T_{i} \ | \ X_{i}\).
\end{assumption}
\begin{assumption}[Strict Overlap]\label{assu: overlap}
 There exists a constant \(c_0 > 0\) such that \(c_0 \leq P(T_{i}= 1| X_{i})\leq 1-c_0\), and \(c_0 \leq\ Q(T_{i}= 1| X_{i}) \leq 1-c_0\).
\end{assumption}

Assumption~\ref{assu: unconf} requires that there is no unmeasured confounder, which is usually satisfied for experiments with random assignments. Assumption~\ref{assu: overlap} implies that every sample has a positive probability to receive the treatment or belong to the control group. When Assumptions~\ref{assu: unconf} and \ref{assu: overlap} are satisfied, the treatment assignment is considered as strongly ignorable \citep{rosenbaum1983central}. The above two assumptions are standard in the causal inference literature.

\begin{assumption}[Bounded Outcome] \label{assu: boundedoutcome}
 There exists a finite constant $M$ such that the support of outcome variable \(R\) is contained in \([-M, M]\).
\end{assumption}

This assumption is mainly used for derivation of our theoretical results which is also seen in \cite{kitagawa2018should}. To adapt the assumptions in Section~\ref{sec: theory}, we define 
\begin{align*}
  \phi(X) \coloneqq \frac{\E(R(1)|X)-\E(R(-1)|X)}{\E(R(1)|X)+\E(R(-1)|X)},
\end{align*}
which serves an analogous role to the regression function in classification problems. This definition generalizes to both source (\(P\)) and target (\(Q\)) populations, with domain-specific variants denoted as \(\phi_P\) and \(\phi_Q\), respectively.

\begin{assumption}[Margin Condition]\label{assu: itr_margincondition}
 There exist constants \(\alpha_P, \alpha_Q, C_\alpha, t_0\) such that 
  \begin{align*}
 P(|\phi_P(X)| \leq t) \leq C_\alpha t^{\alpha_P},~~ Q(|\phi_Q(X)| \leq t) \leq C_\alpha t^{\alpha_Q} 
  \end{align*}
 for all \(0 \leq t \leq t_0, t_0\leq 1\).
\end{assumption}
\begin{assumption}[Modified Noise Condition]\label{assu: itr_modifiednoise}
 There exist constants \(\gamma_P, \gamma_Q\) and constant \(C_n\) such that:
  \begin{align*}
    \int_{x\in X: \tau_P(x)\leq t} |\phi_P(x)| dP_X &\leq C_n t^{\gamma_P},~~ \int_{x\in X: \tau_Q(x)\leq t} |\phi_Q(x)| dQ_X \leq C_n t^{\gamma_Q}
  \end{align*}
 hold for all \(0\leq t \leq t_0\) for \(t_0\in\R\) , where \(\tau_P(x)\) is the distance of point \(x\) to the decision boundary $\{x: \phi_P(x)=0\}$ and \(\tau_Q(x)\) is defined similarly.
\end{assumption}

\begin{assumption}
  \label{assu: itr_boundaryassu}
We assume that the optimal ITR, $G_{f_Q^*}$ defined in (\ref{eq_G_Q_ITR}) and $G_{f_P^*}$ satisfy the decision rule drift model in Definition \ref{assu: boundaryassu}. 
\end{assumption}

We refer the readers to Section~\ref{sec: theory} for the discussion of the above  assumptions. With the assumptions presented above, we are now ready to present the theoretical guarantee for the estimation of the optimal ITR rule.

\begin{corollary}
  Let \(G_{\hat{f}_{Q, final}}\) be the classifier defined by \eqref{eq: itr_final}. Under Assumptions 1-6, the same upper bound as in (\ref{thm_bound}) holds for the excess risk ${R}_{Q}^W(G_{\hat{f}_{Q, final}})  -  {R}_{Q}^W(G_{f_Q^*})$. 
\end{corollary}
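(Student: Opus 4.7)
The plan is to reduce the weighted-classification analysis to the argument for Theorem~\ref{thm: fhatfinalresult}, exploiting that under Assumptions~\ref{assu: overlap} and \ref{assu: boundedoutcome} the per-sample weight $W(X,T,R) = R/\{T\pi_Q(X)+(1-T)/2\}$ is uniformly bounded by $M/c_0$. Since the weighted $0$--$1$ and weighted hinge losses differ from their standard counterparts only by this bounded multiplier, every concentration and localization step in the proof of Theorem~\ref{thm: fhatfinalresult} carries over with constants inflated by at most a factor depending on $M/c_0$. Moreover, $G_{f_Q^*}$ is the Bayes rule for the weighted problem whose ``regression-like'' object is $\phi_Q$, so Assumptions~\ref{assu: itr_margincondition}--\ref{assu: itr_boundaryassu} play the roles that Definitions~\ref{assu: margincondition}, \ref{assu: noisecondition} and \ref{assu: boundaryassu} played for $\eta_Q$ and $G_{f_Q^*}$ in the classification setting.

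Next, I would follow the three-pronged decomposition of $G_{\hat f_{Q,\mathrm{final}}}$. First, I would bound the weighted excess risk of $\hat f_P$ by a Steinwart--Scovel-type analysis of the weighted-hinge-loss SVM; the bounded envelope $M/c_0$ combined with the modified geometric noise condition on $\phi_P$ yields the term $(n_P^{-\beta_P+\varepsilon_P})^{\alpha_P/(1+\alpha_P)}$. Second, I would control $\hat\theta$ by applying VC-type uniform deviation inequalities to the weighted empirical risk over the $p$-dimensional class $\{h(G_{\hat f_P},\theta):\theta\in\Theta\}$ and then localize using the margin condition on $\phi_Q$, obtaining the $(p\log n_Q/n_Q)^{(1+\alpha_Q)/(2+\alpha_Q)}$ rate; the approximation bias induced by $\delta$ contributes $\delta^{\gamma_Q}$ through Assumption~\ref{assu: itr_modifiednoise}. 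An analogous weighted SVM argument on $D_{1,Q}$ contributes $B_{n_Q}$ for $\hat f_Q$.

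Finally, the aggregation step would be handled by applying Bernstein's inequality to the weighted empirical risk on $D_{2,Q}$ for each of the three candidate rules, combined with a union bound; the bounded weights together with Assumption~\ref{assu: itr_margincondition} yield the aggregation cost $n_Q^{-(1+\alpha_Q)/(2+\alpha_Q)}$. The most delicate step, I expect, is verifying the weighted variance--excess-risk link $\mathrm{Var}_Q\{W\,\mathbb{I}(T\neq 2\mathbb{I}(X\in G)-1)\}\lesssim \{R_Q^W(G)-R_Q^W(G_{f_Q^*})\}^{\alpha_Q/(1+\alpha_Q)}$, the analog of Tsybakov's variance bound under the weighted loss; this requires exploiting both the lower bound on $T\pi_Q(X)+(1-T)/2$ coming from Assumption~\ref{assu: overlap} and the margin condition on $\phi_Q$. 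Once this link is in place, the derivation in Theorem~\ref{thm: fhatfinalresult} transfers essentially verbatim and produces the identical upper bound.
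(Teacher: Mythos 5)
Your proposal is correct and follows essentially the same route as the paper, which simply asserts that the corollary is proved ``in an analogous manner as Theorem~\ref{thm: fhatfinalresult}'': bounded weights via Assumptions~\ref{assu: overlap} and \ref{assu: boundedoutcome}, the ITR analogues (Assumptions~\ref{assu: itr_margincondition}--\ref{assu: itr_boundaryassu}) substituting for the classification-setting conditions, and the same three-part decomposition plus aggregation. In fact you supply more detail than the paper does, and you correctly single out the weighted variance--excess-risk link (the analogue of Lemma~\ref{lma: tsybakovlemma2} under the weighted loss) as the one step that genuinely needs checking.
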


The proof of this result can be done in an analogous manner as Theorem~\ref{thm: fhatfinalresult}. 

\section{Simulation}\label{sec: simulation}

In this section, we evaluate the performance of the proposed methodology through comprehensive simulations. For each simulation scenario, we generate 320 independent datasets and compare our approach with existing methods and simple pooling strategies. We examine various transformations of the decision boundary and explore different forms of the regression function. Furthermore, we investigate how changes in the feature dimension, sample size, and the magnitude of the boundary shift influence the performance of each method. Details of each simulation setting are specified below. The support vector machine (SVM) classifiers are obtained using the \texttt{e1071} package, and empirical risk minimization classifiers are computed via the Nelder–Mead method. For benchmarking, we implement the methods proposed in \cite{fan2023robust} and \cite{maity2024linear}. The \texttt{R} code used to reproduce these results is provided upon request and will be publicly available on GitHub upon publication. 

Both source and target data share the same marginal distribution \(P_X = Q_X\), uniformly distributed on \([-3,3]^d\), where we vary the dimension \(d \in \{3, 5, 8, 10, 15, 20\}\). When evaluating the effect of shift magnitude, we vary \(\theta = \left\{ -1/2, 1/2, 1, 2,3,4 \right\}\) for translation and \(\theta = \left\{ -\pi/12, \pi/12, \pi/6, \pi/3, \pi/2, 2\pi/3 \right\}\) for rotation. Lastly, when evaluating the impact of different data sizes, we set \(n_P = 2000\) and \(n_Q = 2000/\text{share}\), where the share ratio takes values in $\{2, 5, 8, 16, 32, 64\}$.
For all other cases, we fix \(n_P = 2000\), \(n_Q = 400\), and \(d = 5\).

We consider three configurations of the decision boundary:
\begin{enumerate}[label=\alph*)]
    \item Linear translation:
    \[
 G_P^* = \{\beta^\top x > 0\}, \quad G_Q^* = \{\theta + \beta^\top x > 0\},
    \]
 where \(\beta = (3, 1, \dots, 1)^\top\), and \(\theta\) controls the translation magnitude.
    \item Linear translation with noise:
  \[
 G_P^* = \{\beta^\top x > 0\}, \quad G_Q^* = \{\theta + \beta^\top x > \epsilon\},
    \]
 where \(\beta = (3, 1, \dots, 1)^\top\), $\epsilon|X=x\sim N(0,\sigma^2) I(x>0)$, and \(\theta\) controls the translation magnitude.

    \item Nonlinear rotation:
    \[
 G_P^* = \{x^\top Q x > 0\}, \quad G_Q^* = \{x^\top P^\top Q P x > 0\},
    \]
 where the matrices \(Q\) and \(P\) are defined as
    \[
 Q = \begin{pmatrix}
        0.3 & 0 & 0 & & \\
        0 & 0 & \frac{1}{2} & & \\
        0 & \frac{1}{2} & 0 & & \\
        & & & \ddots & \\
        & & & & 0
    \end{pmatrix}, \quad
 P = \begin{pmatrix}
        \cos(\theta) & -\sin(\theta) & & & \\
        \sin(\theta) & \cos(\theta) & & & \\
        & & 1 & & \\
        & & & \ddots & \\
        & & & & 1
    \end{pmatrix},
    \]
 and \(\theta\) represents the rotation magnitude.
\end{enumerate}
We systematically vary \(\theta\) to assess the robustness of the method across shifts. Regarding regression functions, we employ a logistic regression function for scenario (a) and a deterministic regression function for scenario (b) and (c), i.e., \(y_i = 1\) if \(x_i \in G^*\) (for the source data \(G^* = G_P^*\); for the target data $G^* = G_Q^*$) and $-1$ otherwise.

We compare the following methods:
\begin{itemize}
    \item \textbf{Proposed}: Our proposed approach detailed in Section~\ref{sec: method}.
    \item \textbf{Fan}: The transfer-around-boundary model from \cite{fan2023robust}, combining estimators by thresholding the estimated regression functions \(\eta_P, \eta_Q\).
    \item \textbf{Maity}: The linear adjustment method from \cite{maity2024linear}, initializing with logistic regression on the source data, and subsequently updating interaction terms using the target data.
    \item \textbf{Pooled}: The SVM trained jointly on combined source and target data.
    \item \textbf{Source Only}: The SVM trained exclusively on the source data.
    \item \textbf{Target Only}: The SVM trained exclusively on the target data.
\end{itemize}

The simulation results are summarized in Figures~\ref{fig:setting_a} for setting (a); Figures~\ref{fig:setting_b} for setting (b); Figures~\ref{fig:setting_c} for setting (c). {The misclassification rates are computed on a validation set independently generated from the target distribution with the same sample size as the target data.}  Additional simulation scenarios with alternative decision boundaries and regression functions are detailed in the supplementary materials.
\begin{figure}
  \centering
  \includegraphics{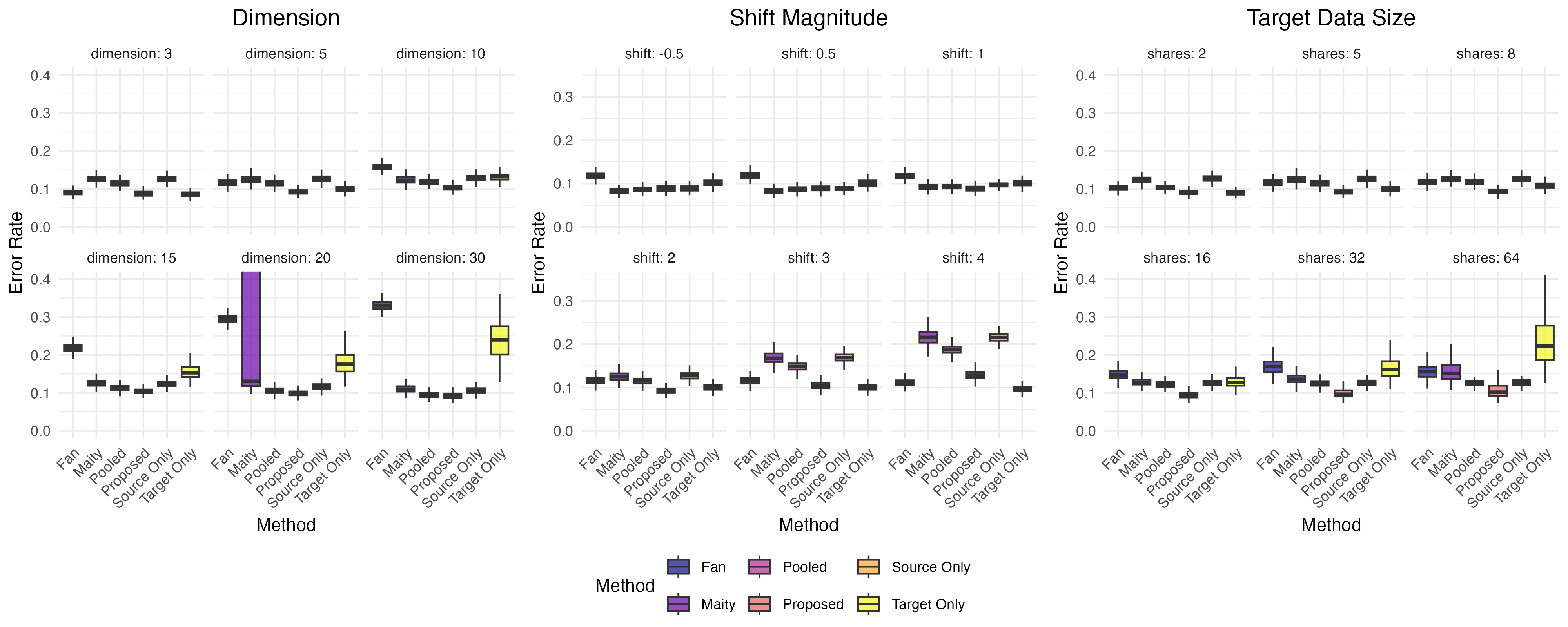}
  \caption{Misclassification rates  under different dimension, shift magnitude, and data sizes under setting (a). Six methods as specified in this section are compared.}
  \label{fig:setting_a}
\end{figure}
\begin{figure}
  \centering
  \includegraphics{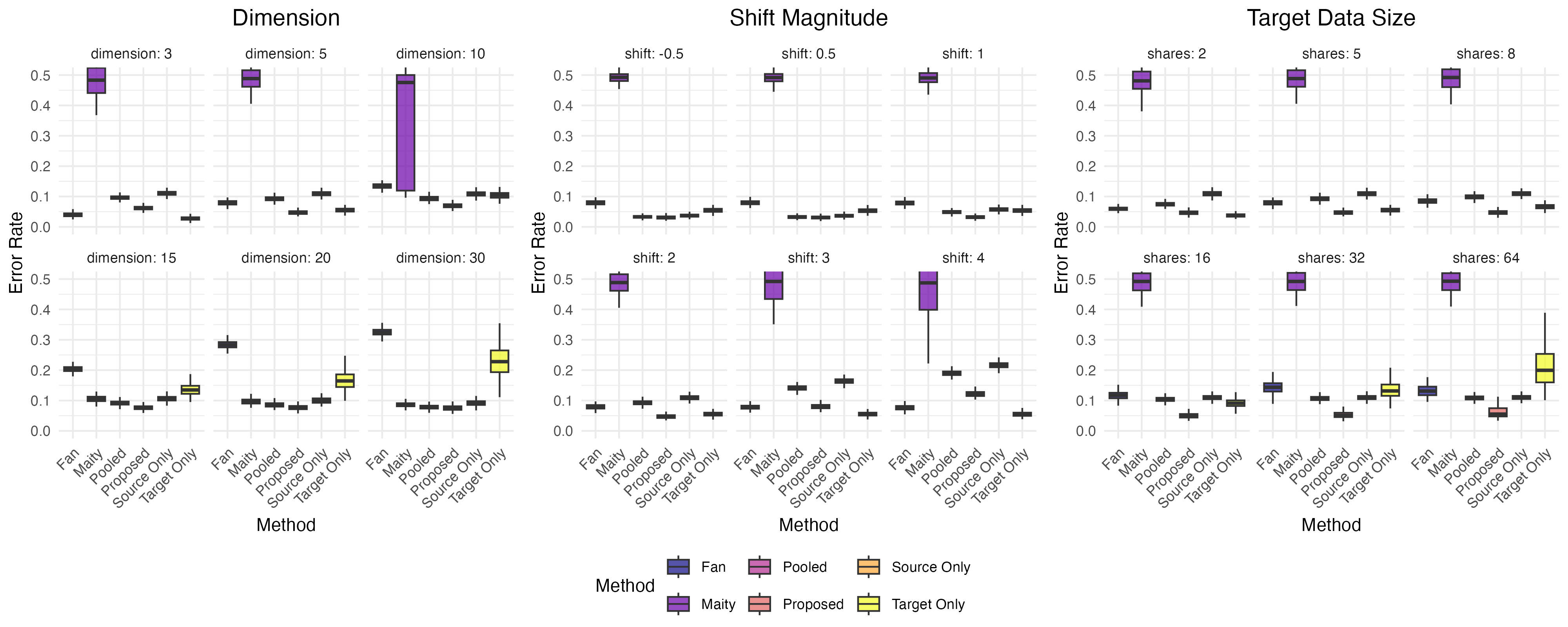}
  \caption{Misclassification rates  under different dimension, shift magnitude, and data sizes under setting (b). Six methods as specified in this section are compared.}
  \label{fig:setting_b}
\end{figure}
\begin{figure}
  \centering
  \includegraphics{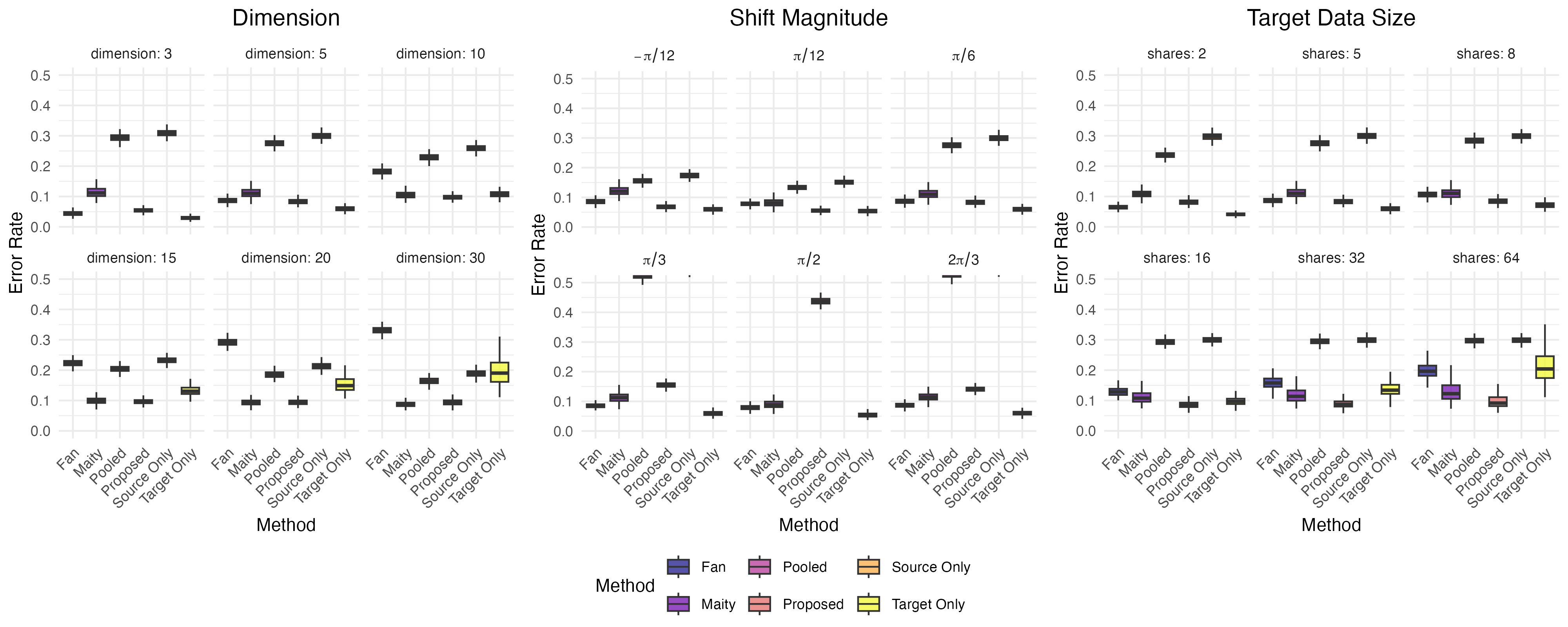}
  \caption{Misclassification rates under different dimension, shift magnitude, and data sizes under setting (c). Six methods as specified in this section are compared.}
  \label{fig:setting_c}
\end{figure}

As we can see from the plots, under most settings, the proposed method outperformed the existing methods, especially when \(n_P \gg n_Q\) and when the underlying decision boundary is nonlinear or the dimension is high. The reason is that, under settings (a) and (c), the proposed decision rule drift model holds with $\delta=0$, and thus our final classifier reduces to the calibrated classifier, which achieves faster rate than the competing methods. As for setting (b), the decision boundary noise is relatively small,  We also note that when the shift magnitude is large, the data can be extremely unbalanced, leading to poorer accuracy of the proposed classifiers.  In addition, as seen in Figure~\ref{fig:setting_a} and Figure~\ref{fig:setting_c}, the estimator in \cite{maity2024linear} can be unstable when the dimension of the covariates is high. This is because the method considers all possible interaction terms, and thus the Hessian matrix may become singular under higher dimension settings. Under setting (b) (Figures~\ref{fig:setting_b}), model misspecification may potentially lead to negative transfer, explaining the drop in performance. Due to space constraints, we only present a representative subset of the simulation results here; the full suite of experiments and additional numerical evidence are provided in the supplementary materials, which substantiate the patterns described above.

\section{Real Data Application}\label{sec: realdata}

We evaluate the performance of our proposed transfer-learning procedure on two observational datasets: the Current Population Survey (CPS) and the National Supported Work (NSW) demonstration program \citep{lalonde1986evaluating}, both of which are publicly available, and the code for analysis will be publicly available upon publication.

The CPS dataset comprises $n_S = 15,992$ individuals, whereas the NSW dataset contains $n_T = 445$ participants. In each dataset we observe baseline covariates including race (Black, White, Other), age (in years), marital status (Married vs.\ Not Married), and years of schooling, as well as realized earnings in 1974 ($\mathrm{RE}_{1974}$) and 1978 ($\mathrm{RE}_{1978}$). We define the binary treatment indicator
\[
A_i =
\begin{cases}
1, & \text{if individual }i\text{ has a high-school diploma},\\
0, & \text{otherwise},
\end{cases}
\]
and we set the outcome to
\[
Y_i = \log\bigl(1 + \mathrm{RE}_{i,1978}\bigr),
\]
to stabilize the right-skewed income distribution. We apply our transfer learning ITR estimator to two observational cohorts to assess whether borrowing information from CPS improves treatment recommendations for NSW participants.

In the NSW program, a subset of participants also received a job-training intervention, which is not recorded in the CPS. To ensure comparability of covariate distributions across the two samples, we restrict the NSW target sample to the $n_T' = 180$ individuals who did not receive job training. The resulting imbalance in sample sizes motivates our transfer learning approach, which borrows strength from the large CPS source data when estimating an individualized treatment rule for the smaller NSW cohort. We compare the following three decision rules:
\begin{enumerate}
  \item \textbf{Proposed}: transfer learning rule trained on CPS and adapted to the NSW sample;
  \item \textbf{Source-Only}: rule trained exclusively on CPS;
  \item \textbf{Target-Only}: rule trained exclusively on the restricted NSW sample.
\end{enumerate}

The propensity score is estimated through logistic regression on both populations, and the linear translation as in simulation setting (a) is considered. Table~\ref{tab:mean-log-income} reports the estimated value function for the log-transformed 1978 earnings in the NSW target sample under each policy. The proposed transfer-learning rule achieves a substantially higher average log-earnings (1.720) compared to both the source-only (0.305) and target-only (-0.580) rules. This gap indicates that directly applying a rule trained on the large CPS dataset to the NSW cohort yields modest gains, but neglects target-specific heterogeneity. Conversely, a rule trained solely on the small NSW sample overfits the limited data, resulting in negative average gains. By economically combining information from both datasets, our method captures general treatment effects from CPS while adapting to the unique covariate–outcome relationships in the NSW sample, thereby delivering superior treatment recommendations.

\begin{table}[ht]
  \centering
  \caption{Mean Log-Transformed 1978 Earnings (the bigger, the better)}
  \label{tab:mean-log-income}
  \begin{tabular}{lccc}
    \toprule
    Decision Rule     & Proposed & Source-Only & Target-Only \\
    \midrule
    Mean $\log(1+\mathrm{RE}_{1978})$ & 1.720     & 0.305        & $-0.580$      \\
    \bottomrule
  \end{tabular}
\end{table}

\section{Discussion}\label{sec: discussion}
In this paper, we introduced a transfer-learning methodology that permits drift in decision boundaries and applied it to optimal treatment-rule estimation, providing theoretical guarantees. When the source decision boundary differs from the target by a low-dimensional, smooth transform, our method effectively reduces a high-dimensional problem to a simple ERM, yielding faster rates than target-only SVM. In addition, we considered the application of this methodology in the estimation of individualized treatment rule estimation. 

Meanwhile, our methodology have several limitations. It would be challenging for the practitioner to know the type of transformation. We suggest the practitioner begin with simple transformations, such as translation and rotation, and then choose the one that preserves the smallest MSE on testing data. It would be equally interesting to explore even more complex transformations of the decision boundary. Also, we used a simple aggregation to avoid negative transfer. It would be useful to explore alternative aggregation strategies, such as convex aggregation. Lastly, it is also helpful to consider ITR estimation in observational studies, where the treatment assignment is unknown, such as \cite{zhao2019efficient, athey2021policy}.

\clearpage

\vspace*{-.4in}
\begin{center}
{\Large \textbf{Supplementary Material for ``Transfer Learning for Classification under Decision Rule Drift with Application to Optimal Individualized Treatment Rule Estimation''}}

\vspace{0.5cm}

{\large Xiaohan Wang$^1$, Yang Ning$^1$}

\vspace{0.3cm}

$^1$Department of Statistics and Data Science, Cornell University

\vspace{0.2cm}

\texttt{\{xw547, yn265\} @ cornell.edu}

\vspace{0.3cm}

\today
\end{center}

\thispagestyle{empty}
\pagebreak

\renewcommand{\thesection}{\Alph{section}}
\setcounter{section}{0}

\section{Proof of Theorem 1}
We start with the proof of Theorem 1 in the main document. The main idea can be summarized into three steps. In Section~\ref{sec: modifiedsvm}, we give a proof of the SVM estimator under our conditions, through techniques developed in \cite{steinwart2007fast}. In Section~\ref{sec: ermterm}, we utilize empirical process theory to analyze the ERM estimator. In Section~\ref{sec: transfer_est}, we prove our main result.
\subsection{Modified SVM Proof} \label{sec: modifiedsvm}
In this section, we focus on the proof of the rate of convergence for the SVM estimator on \(P-\)data, since the proof for \(Q-\)data is analogous. We denote \(f^*_P\) with \(f^*\), \(\gamma_P\) with \(\gamma\), and \(\alpha_P\) with \(\alpha\), if no confusion arises. 

\begin{theorem}\label{thm: svmmaintheorem}
  Let $X$ be the closed unit ball of $\mathbb{R}^d$, and $P$ be a distribution on $X \times Y \sim P$ with Tsybakov noise exponent $\alpha$ and modified geometric noise exponent $\gamma$. Let \(\gamma': = \gamma/d\), we define
$$
\beta:= \begin{cases}\frac{\gamma'}{2 \gamma'+1}, & \text { if } \gamma' \leq \frac{\alpha+2}{2 \alpha}, \\ \frac{2 \gamma'(\alpha+1)}{2 \gamma'(\alpha+2)+3 \alpha+4}, & \text { otherwise, }\end{cases}
$$
and $\lambda_n:=n^{-(\gamma'+d) / \gamma' \beta}$ and $\sigma_n:=n^{\beta /\gamma'}$ in both cases. Then for all $\varepsilon>0$ there exists a constant $C>0$ such that for all $x \geq 1$ and $n \geq 1$ the SVM without offset using the Gaussian RBF kernel $k_{\sigma_n}$ satisfies
$$
\operatorname{Pr}^*\left(T \in(X \times Y)^n: \mathcal{R}_P\left( f_{T, \lambda_n}\right) \leq \mathcal{R}_P+C x^2 n^{-\beta+\varepsilon}\right) \geq 1-e^{-x},
$$
where $\operatorname{Pr}^*$ denotes the outer probability of $P^n$ in order to avoid measurability considerations. If $\alpha=\infty$ the latter inequality holds if $\sigma_n=\sigma$ is a constant with $\sigma>2 \sqrt{d}$. 
\end{theorem}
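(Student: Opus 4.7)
The plan is to follow the two-step fast-rate analysis for Gaussian-kernel SVMs of \cite{steinwart2007fast}, recast under the modified geometric noise condition from \cite{hamm2021adaptive}. First, I invoke the calibration of the hinge loss $L(y,t) = (1-yt)_+$: by Zhang's inequality, $\mathcal{R}_P(f) - \mathcal{R}_P^* \leq \mathcal{R}_{L,P}(f) - \mathcal{R}_{L,P}^*$, so it suffices to bound the excess hinge risk of $f_{T,\lambda_n}$. Using the optimality of $f_{T,\lambda_n}$ for the regularized empirical hinge risk, I decompose
\begin{equation*}
\lambda_n \|f_{T,\lambda_n}\|_{\mathcal{H}_{\sigma_n}}^2 + \mathcal{R}_{L,P}(f_{T,\lambda_n}) - \mathcal{R}_{L,P}^* \;\leq\; A(\lambda_n, \sigma_n) + S(T, \lambda_n, \sigma_n),
\end{equation*}
where $A(\lambda,\sigma) := \inf_{f\in\mathcal{H}_\sigma}\{\lambda\|f\|_{\mathcal{H}_\sigma}^2 + \mathcal{R}_{L,P}(f) - \mathcal{R}_{L,P}^*\}$ is the approximation error and $S$ absorbs the empirical-process fluctuation along the solution path.

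To control $A(\lambda,\sigma)$ I construct an explicit candidate $f_0 \in \mathcal{H}_\sigma$ via convolution of a signed indicator of the Bayes decision sets $G_{f_P^*}$ and $G_{f_P^*}^C$ with a Gaussian kernel of appropriate bandwidth, as in \cite{steinwart2007fast}. The excess hinge risk at $f_0$ is bounded by $\int |f_0(x) - f_P^*(x)|\,|2\eta_P(x)-1|\,dP_X(x)$, which I split over the strip $\{x: \tau(x)\leq t\}$ and its complement. Gaussian tail decay handles the complement (an error of order $e^{-c\sigma^2 t^2}$), while Definition~2 bounds the contribution of the strip by $C_n t^\gamma$. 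Optimizing $t$ as a function of $\sigma$ delivers $A(\lambda,\sigma) \lesssim \lambda \sigma^d + \sigma^{-\gamma}$, with the $\lambda\sigma^d$ term arising from $\|f_0\|_{\mathcal{H}_\sigma}^2$.

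To control the stochastic error $S$, I combine the Steinwart--Scovel entropy bound $\log \mathcal{N}(\varepsilon, B_{\mathcal{H}_\sigma}, L_2(P)) \lesssim \sigma^{(1-p/2)d}\varepsilon^{-2p}$ (valid for arbitrarily small $p>0$) with the Bernstein-type variance condition implied by the Tsybakov margin assumption, $\mathbb{E}(L(Y,f(X)) - L(Y,f_P^*(X)))^2 \lesssim (\mathcal{R}_{L,P}(f) - \mathcal{R}_{L,P}^*)^{\alpha/(\alpha+1)}$. A Talagrand--Bousquet concentration followed by a peeling device yields, with probability at least $1-e^{-x}$, a stochastic-error bound of the form $S \lesssim x^2\, \lambda_n^{-p}\sigma_n^{(1-p/2)d}\, n^{-(\alpha+1)/(\alpha+2)}$ up to lower-order terms. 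Summing with the approximation bound and plugging in $\lambda_n = n^{-(\gamma'+d)/(\gamma'\beta)}$, $\sigma_n = n^{\beta/\gamma'}$, the two cases in the definition of $\beta$ arise according to whether the approximation-bias term $\sigma^{-\gamma}$ or the variance-capacity term dominates the balance, the switch occurring at the threshold $\gamma' = (\alpha+2)/(2\alpha)$; the $\varepsilon$ in the exponent absorbs the arbitrarily small $p$.

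The main obstacle is the approximation step. The original geometric noise condition of \cite{steinwart2007fast} is phrased in terms of $L_1$ norms of Gaussian convolutions of $f_P^*$, whereas our modified condition measures mass near the boundary in Euclidean distance weighted by $|2\eta_P-1|$. Verifying that the same $\sigma^{-\gamma}$ scaling survives under the modified formulation, and that no hidden interaction with the margin exponent $\alpha$ enters through the strip integral, is the delicate part, and parallels the recalculation in \cite{hamm2021adaptive}. A secondary technical point is handling the no-offset SVM and the clipping of $f_{T,\lambda_n}$ to $[-1,1]$ inside the peeling step, which is needed to make the variance bound uniform in $f$.
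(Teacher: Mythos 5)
Your proposal follows essentially the same route as the paper: both reduce the problem to the Steinwart--Scovel fast-rate machinery (hinge-loss calibration, Gaussian-RKHS entropy bounds, Talagrand-type concentration with peeling under the margin condition) and correctly identify that the only genuinely new work is re-deriving the approximation error bound $A(\lambda,\sigma)\lesssim\lambda\sigma^{d}+\sigma^{-\gamma}$ for the Gaussian convolution of the signed Bayes indicator under the modified geometric noise condition. The paper computes that term exactly via Fubini over the radial variable rather than your strip-split-and-optimize-in-$t$ argument, but this is a cosmetic difference (your version costs at most a logarithmic factor, absorbed by the $\varepsilon$), so the two proofs coincide in substance.
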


\begin{remark}
  The result is obtained in an analogous manner as Theorem 2.8 of \cite{steinwart2007fast}, where the only difference lies in the coefficient \(\gamma\), since our definition of the noise condition is slightly different.
\end{remark}

\subsubsection{Approximation Error Term Proof}
In this section, we focus on the approximation error term and illustrate the difference under the modified geometric noise condition. We use the same notations as in \cite{steinwart2007fast}. With the covering number properly controlled, we can then consider the approximation error defined as:
\begin{align*}
a(\lambda) = \inf_{f\in \mathcal{H}}\lambda \left\| f \right\|_\mathcal{H}^2 + R_{l,P}(f)  - R_{l, P}^*.
\end{align*}

\begin{lemma}\label{lma: construction}
  Let \(X\) be a closed unit ball of \(\R^d\) and \(P\) be a probability measure on \(X\times Y\), where \(f^*(x)  \) is the Bayes estimator. On \(\acute{X} := 3X\) we have:
  \begin{align*}
    \acute{f}(x) = 
    \begin{cases}
      f^*(x) & \text{if } \left\| x \right\| \leq 1,\\
     f^*\left( \frac{x}{\left\| x \right\|} \right)    &otherwise.
    \end{cases}
  \end{align*}
  Consider the \(\acute{X}_{1} := \left\{ x \in \acute{X}; \acute{f}(x)  > 0 \right\},\acute{X}_{-1} := \left\{ x \in \acute{X}; \acute{f}(x) < 0 \right\} \), where \(X_1, X_{-1}\) is defined in a similar manner. Let \(B(x,r)\) denotes the open ball centered around \(x\) with radius \(r\) in \(\R^{d}\). Then for \(x \in X_1\) we have \(B(x, \tau(x)) \subset \acute{X}_1\), and for \(x \in X_{-1}\) we have \(B(x, \tau(x)) \subset \acute{X}_{-1}\).
\end{lemma}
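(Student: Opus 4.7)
The plan is to prove the statement only for $x \in X_1$; the $x \in X_{-1}$ claim follows by an identical symmetric argument, swapping the roles of $X_1$ and $X_{-1}$ in the definition of $\tau$. So I fix $x \in X_1 \subseteq X$ and an arbitrary $y \in B(x, \tau(x))$, and the goal is to certify $\acute{f}(y) > 0$. The natural dichotomy to split on is whether $y$ lies inside or outside the original unit ball $X$, since $\acute{f}$ is defined by two different formulas in those two regions.

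In the first case $\|y\| \le 1$, the construction gives $\acute{f}(y) = f^*(y)$, and I would rule out $y \in X_{-1}$ directly: if $y$ were in $X_{-1}$, then $\|x-y\|$ would be an admissible value of $d(x, X_{-1})$, giving $\|x-y\| \ge \tau(x)$ and contradicting $y \in B(x,\tau(x))$. Hence $y \in X_1$ and $\acute{f}(y) > 0$. In the second case $\|y\| > 1$, I would set $z := y/\|y\|$, so that $\acute{f}(y) = f^*(z)$ by the definition of $\acute{f}$ on $\acute{X}\setminus X$; it now suffices to show $z \in X_1$, which by the first case reduces to proving $\|z - x\| < \tau(x)$.

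For this last step I would invoke the non-expansiveness of the metric projection onto the closed unit ball: the map $P(u) = u/\max(1, \|u\|)$ is $1$-Lipschitz, and because $\|x\| \le 1 \le \|y\|$ we have $P(x) = x$ and $P(y) = z$, so
\[
\|z - x\| \;=\; \|P(y) - P(x)\| \;\le\; \|y - x\| \;<\; \tau(x),
\]
as required. Alternatively one can argue directly: the quadratic $t \mapsto \|t y - x\|^2$ is minimized at $t^* = (x \cdot y)/\|y\|^2 \le 1/\|y\|$ (using Cauchy-Schwarz together with $\|x\|\le 1$), so its value at $t = 1/\|y\|$ does not exceed its value at $t = 1$. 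The only real content of the lemma is this contraction step; the radial extension in the definition of $\acute{f}$ is set up precisely so that a point on the sphere closest to $x$ along the ray through $y$ sits no further from $x$ than $y$ itself. Once that geometric reduction is in place, everything else is bookkeeping with the definitions of $\tau$ and $\acute{f}$.
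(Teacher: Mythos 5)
Your argument is correct and is essentially the proof the paper relies on: the paper itself skips the details and defers to Lemma 4.1 of Steinwart and Scovel (2007), whose proof is exactly your case split on $\|y\|\le 1$ versus $\|y\|>1$ together with the non-expansiveness of the radial projection onto the closed unit ball. The only step to flag is the inference from $y \notin X_{-1}$ to $y \in X_1$, which tacitly assumes $B(x,\tau(x))\cap X$ contains no points with $f^*(y)=0$; this is an artifact of the lemma's own definitions (with $X_1=\{f^*>0\}$ and $X_{-1}=\{f^*<0\}$ not partitioning $X$) rather than a gap you introduced, and is standardly resolved by treating the set $\{\eta=1/2\}$ as part of $X_1$ or assuming it is empty near the classes.
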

\begin{proof}
The proof can be done in a similar manner as Lemma 4.1 of \cite{steinwart2007fast}. We skip the details here.
\end{proof}
\begin{remark}
  The above lemma shows that a small ball with respect to any point in the domain shall be of the same class as the center of the point. Such construction would free us from the cases where \(B(x, \tau(x))\not\subset X_1/X_{-1}\).
\end{remark}

Then, we can utilize the above result to control the approximation risk by considering a carefully constructed approximation. From \cite{steinwart2006explicit}, we can see that linear operator \(V_\sigma: L_2(\R^{d }) \to \mathcal{H}_\sigma(\R^{d})\) defined by:
\begin{align*}
  V_\sigma f(x)=\frac{(2 \sigma)^{d/ 2}}{\pi^{d/ 4}} \int_{\mathbb{R}^{d}} e^{-2 \sigma^2\|x-y\|_2^2} f(y) d y, \quad f \in L_2\left(\mathbb{R}^{d }\right), x \in \mathbb{R}^{d }
\end{align*}
 is an isomorphic isomorphism.
\begin{theorem}
  Let \(\sigma >0\), \(X\) be the closed unit ball of the Euclidean space \(\R^d\) and \(a_\sigma(\lambda)\) be the approximation error with respect to Gaussian RBF of bandwidth \(\sigma\). In addition, \(X\times Y \sim P\) has modified geometric noise coefficient \(\gamma\) with constant \(C\). Then, there's a coefficient \(c_d\) depending on the dimension \(d\) such that for all \(\lambda >0\), we have:
  \begin{align*}
    a_\sigma(\lambda) \leq c_d\left( \sigma^{d}\lambda + d^{\gamma /2}\sigma^{-\gamma} \right).
  \end{align*}
\end{theorem}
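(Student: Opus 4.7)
The plan follows the Steinwart--Scovel approximation-error template, suitably adjusted for the modified geometric noise condition. The idea is to construct an explicit candidate $f_0 \in \mathcal{H}_\sigma$ that makes both $\lambda\|f_0\|_{\mathcal{H}_\sigma}^2$ and the excess hinge risk $R_{l,P}(f_0) - R_{l,P}^*$ small, and then plug it into the infimum defining $a_\sigma(\lambda)$. The natural candidate is $f_0 := V_\sigma(\acute f)$, where $\acute f$ is the sign-valued extension of $f^*$ to the enlarged ball $\acute X = 3X$ supplied by Lemma A.1 (extended by zero outside $\acute X$).

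I would first bound the RKHS norm. Using the isometry $\|V_\sigma g\|_{\mathcal{H}_\sigma} = \|g\|_{L_2(\R^d)}$ together with the explicit scaling $(2\sigma)^{d/2}\pi^{-d/4}$ built into $V_\sigma$ and the fact that $|\acute f|\leq 1$ on the compact set $\acute X$, a direct computation gives $\|f_0\|_{\mathcal{H}_\sigma}^2 \leq c_d \sigma^d$, so that $\lambda \|f_0\|_{\mathcal{H}_\sigma}^2$ contributes the first term $c_d \sigma^d \lambda$ in the claimed bound.

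Next, I would bound the excess hinge risk. Since $|f_0| \leq 1$ and the hinge loss is $1$-Lipschitz, a standard comparison reduces the task to controlling $\int_X |2\eta(x) - 1|\,|f_0(x) - f^*(x)|\,dP_X(x)$. I would split this integral at a threshold $t$ according to $\tau(x)$. On $\{x : \tau(x) \leq t\}$ I would bound $|f_0 - f^*| \leq 2$ and invoke the modified geometric noise condition (Definition 2), yielding a contribution of order $C_n t^\gamma$. On $\{x : \tau(x) > t\}$, Lemma A.1 guarantees that the ball $B(x, \tau(x))$ lies entirely inside $\acute{X}_{+1}$ or $\acute{X}_{-1}$ according to the sign of $f^*(x)$, so the Gaussian convolution defining $f_0$ only averages values agreeing in sign with $f^*(x)$; a tail estimate on the Gaussian weight outside $B(x, \tau(x))$ gives $|f_0(x) - f^*(x)| \leq c\exp(-c'\sigma^2 t^2)$, an exponentially small contribution. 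Optimizing by taking $t \asymp \sqrt{d}/\sigma$ makes the exponential term negligible and leaves a total bound of order $c_d\, d^{\gamma/2}\sigma^{-\gamma}$.

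Adding the two pieces yields
\[
a_\sigma(\lambda) \leq \lambda\|f_0\|_{\mathcal{H}_\sigma}^2 + \bigl(R_{l,P}(f_0) - R_{l,P}^*\bigr) \leq c_d\bigl(\sigma^d\lambda + d^{\gamma/2}\sigma^{-\gamma}\bigr),
\]
which is the claimed bound. The main obstacle is step two: one must carry the dimension-dependent constants consistently through the Gaussian tail estimate and optimize the threshold $t$, all while using the modified geometric noise exponent $\gamma$, which acts on the Euclidean distance $\tau(x)$ to the decision boundary rather than on $|2\eta-1|$. It is precisely this modified condition that produces the cleaner $\sigma^{-\gamma}$ exponent here, in place of the $\sigma^{-\gamma d}$ appearing in the original Steinwart and Scovel (2007) derivation, so comparing the two side by side is the cleanest way to verify the dimension dependence.
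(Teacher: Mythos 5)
Your overall template matches the paper's: same candidate $V_\sigma$ applied to the sign-valued extension $\acute f$ from Lemma~\ref{lma: construction}, same $\lambda\|f_0\|_{\mathcal{H}_\sigma}^2\lesssim c_d\sigma^d\lambda$ bound via the isometry, and the same reduction of the excess hinge risk to $\int_X|2\eta-1|\,|f_0-f^*|\,dP_X$ via Zhang's identity. The gap is in how you handle that integral. You propose a single hard split at a threshold $t$ in $\tau(x)$, bounding the near region by $C_n t^\gamma$ and the far region by the uniform Gaussian tail $c\,e^{-c'\sigma^2t^2}$ times the total mass. With your choice $t\asymp\sqrt d/\sigma$ (which is forced if you want a dimension-free tail constant, since the relevant tail is a $\chi^2_d$ tail), the far-region contribution is $c\,e^{-c''d}$ --- a quantity that is \emph{constant in $\sigma$}. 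It does not vanish as $\sigma\to\infty$, whereas the claimed bound $d^{\gamma/2}\sigma^{-\gamma}$ does; since the theorem is ultimately applied with $\sigma_n\to\infty$, this residual term destroys the rate. Pushing $t$ up to $\asymp\sqrt{\log\sigma}/\sigma$ to kill the exponential instead inflates the near-region term to $(\log\sigma)^{\gamma/2}\sigma^{-\gamma}$, so a single split can only recover the bound up to a logarithmic factor.

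The missing idea is to apply the geometric noise condition at \emph{all} radii simultaneously rather than at one threshold. The paper keeps the pointwise bound $|V_\sigma g(x)-f^*(x)|\le \frac{2}{\Gamma(d/2)}\int_{2\tau(x)^2\sigma^2}^{\infty}e^{-r}r^{d/2-1}\,dr$, then exchanges the order of integration (Fubini), so that for each level $r$ the inner integral is $\int_{\{\tau(x)\le (r/2\sigma^2)^{1/2}\}}|2\eta-1|\,dP_X\le C\,(r/2\sigma^2)^{\gamma/2}$ by Definition~\ref{assu: noisecondition}; integrating in $r$ gives $2^{1-\gamma/2}C\,\Gamma((d+\gamma)/2)/(\sigma^\gamma\Gamma(d/2))\le C\,d^{\gamma/2}\sigma^{-\gamma}$ with no additive remainder. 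A dyadic peeling over shells $\{2^kt<\tau\le 2^{k+1}t\}$ would also work and is the discrete analogue of this exchange. If you replace your single split by either of these multi-scale arguments, your proof goes through and coincides with the paper's; as written, the far-region estimate fails.
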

\begin{proof}
  The proof largely remains the same as the original SVM results, but the tools we were to use are slightly different.

  Consider a measurable function \(h_P(x): \mathbb{R}^{d}\to \R\) such that \(h_P(x):= \acute{f}_P(x) \), where \(\acute{f}_P(x) = 1\) if \(x \in \acute{X}_1 \) and \(\acute{f}_P(x)= -1\) if \(x \in \acute{X}_{-1} \), where \(\acute{X}_1, \acute{X}_{-1}\) are defined above. Then, we can consider the projection of \(h_P\) on \(\mathcal{H}_\sigma\). 

  Let \(g := (\sigma^2/\pi)^{d/4}h_P\), by \cite{zhang2004statistical}, we have:
  \begin{align*}
    \mathcal{R}_{l, P}(V_\sigma g)- \mathcal{R}_{l, P}^* = \E\left[ |2\eta - 1| | V_\sigma g - f_P^*| \right],
  \end{align*}
  where \(f_P^* := \text{sign}(2\eta(X) -1)\) is the Bayes classifier. It is easy to check that \(V_\sigma g \in [-1, 1]\) as well. When \(x\in X_1\), we may consider \(V_\sigma g\) :
  \begin{align*}
    V_\sigma g(x) & =\left(\frac{2 \sigma^2}{\pi}\right)^{d/2} \int_{\mathbb{R}^{d}} e^{-2 \sigma^2\|x-y\|_2^2} \acute{f}_P(y)  d y \\
    & =\left(\frac{2 \sigma^2}{\pi}\right)^{d/2} \int_{\mathbb{R}^{d}} e^{-2 \sigma^2\|x-y\|_2^2}\left(\acute{f}_P(y)+1\right) d y-1 \\
    & \geq\left(\frac{2 \sigma^2}{\pi}\right)^{d/2} \int_{B\left(x, \tau(x)\right)} e^{-2 \sigma^2\|x-y\|_2^2}\left(\acute{f}_P(y)+1\right) d y-1.
    \end{align*}
    Furthermore, from lemma \ref{lma: construction}, we have:
    \begin{align*}
      V_\sigma g(x) 
      &\geq 2\left(\frac{2 \sigma^2}{\pi}\right)^{d/2} \int_{B\left(x, \tau(x)\right)} e^{-2 \sigma^2\|x-y\|_2^2} d y-1\\
      & = 2\left(\frac{2 \sigma^2}{\pi}\right)^{d/2} \int_{B\left(0, \tau(x)\right)} e^{-2 \sigma^2\|y\|_2^2} d y -1.
    \end{align*}
Notice that \(e^{-\|x-y\|_2^2}\) is rotational invariant and \(\Gamma(x+1) = x\Gamma(x)\), we shall have:
\begin{align*}
  V_\sigma g(x) 
   & \geq \frac{2d}{\Gamma(1 + d/2)} \left({2 \sigma^2}\right)^{d/2}\int_0^{\tau(x)} e^{-2\sigma^22^2} 2^{d-1} dr -1\\
   &= \frac{4}{\Gamma(d/2)}  \int_0^{\sqrt{2}\sigma\tau(x)} e^{-r^2} 2^{d-1} dr -1\\
   & = \frac{2}{\Gamma(d/2)}\int_0^{2\tau(x)^2\sigma^2} e^{-r} r^{d/2 -1} dr -1,
\end{align*}
where we convert the integral to polar coordinate in the first line and conduct change of variable again in the last line. 
By symmetry, we have:
\begin{align*}
  1- V_\sigma g(x)  & \leq 2\left( 1-  \frac{1}{\Gamma(d/2)}\int_0^{2\tau(x)^2\sigma^2} e^{-r} r^{d/2 -1} dr\right)\\
  & = \frac{2}{\Gamma(d/2)}\int_{2\tau(x)^2\sigma^2}^\infty e^{-r} r^{d/2 -1} dr.
\end{align*}
Notice that we can do the same analysis for \(x \in X_{-1}\). Then, we have:
\begin{align*}
  |V_\sigma g(x) - f_P^* | \leq \frac{2}{\Gamma(d/2)}\int_{2\tau(x)^2\sigma^2}^\infty e^{-r} r^{d/2 -1} dr.
\end{align*}
As a result, we can consider the risk to be:
\begin{align*}
  \mathcal{R}_{l, P}(V_\sigma g) - \mathcal{R}^*_{l, P} 
  &= \int_X |2\eta_P(x) -1| \frac{2}{\Gamma(d/2)}\int_{2\tau(x)^2\sigma^2}^\infty e^{-r} r^{d/2 -1} dr dP_X\\
  & = \frac{2}{\Gamma(d/2)} \int_0^\infty e^{-r}r^{d/2 -1} \int_{|\tau(x)| \leq \left( \frac{r}{2\sigma^2} \right)^{1/2}} |2\eta_P(x) -1|dP_X dr\\
  & \leq \frac{r^{1-\gamma/2}c}{\sigma^{\gamma}\Gamma(d/2)}\int_0^\infty e^{-r} 2^{d/2-1 + \gamma/2}dr\\
  & = \frac{2^{1-\gamma/2}\Gamma((d+\gamma)/2)}{\sigma^{\gamma}\Gamma(d/2)}\\
  &\leq C d^{\gamma/2}\sigma^{-\gamma}.
\end{align*}
In the meantime, we can notice that: 
\begin{align*}
  \left\| g \right\|_{L_2}\leq \left( \frac{3^4 \sigma^2 }{\pi} \right)^{d/4} Vol(d),
\end{align*}
where \(Vol(d)\) would be the volume of a \(d-\)dimensional unit ball. It shall be easy to see that:
\begin{align*}
a_\sigma(\lambda) \leq c_d\left( \sigma^{d}\lambda + d^{\gamma /2}\sigma^{-\gamma} \right).
\end{align*}

\end{proof}
Notice that if we define \(\gamma':= \gamma/d\), the result shall be the same as what we have in the original modified SVM setting, where we can replace \(\gamma\) with \(\gamma'\). We can then follow a similar idea as in \cite{steinwart2007fast} to control the risk, and the theorem \ref{thm: svmmaintheorem} can be obtained.

\subsection{ERM term estimation}\label{sec: ermterm}

Notice that we can decompose the risk of the calibrated classifier by:
\begin{align}
  & \mathcal{R}_{Q}\left(  h(G_{\hat{f}_P}, \hat\theta) \right) - R_{Q}^{*}\notag
  \\ & = \mathcal{R}_Q\left(h(G_{\hat{f}_P}, \hat\theta)\right) - \inf_{\theta\in \Theta}\mathcal{R}_Q\left( h(G_{\hat{f}_P}, \theta) \right)\label{eq: transferfirst_term} \\
   & + \inf_{\theta\in \Theta}\mathcal{R}_Q\left(  h(G_{\hat{f}_P}, \theta)\right) - \mathcal{R}_Q\left(G_{f_Q^*}\right)\label{eq: transfersecond_term}.
\end{align} 
The symmetric difference between the sets \(G_{f_1}\) and \(G_{f_2}\) under the distribution \(Q\) shall be defined as:
\begin{align*}
d_{Q, \Delta}(f_1, f_2) = d_{Q, \Delta}(G_{f_1}, G_{f_2}) := \int \left( G_{f_1}\cap G_{f_2}^C \right)\cup \left( G_{f_1}^C \cap G_{f_2} \right)dQ.
\end{align*}

\subsubsection{Controlling the complexity}

In this section, we control the complexity of the space defined by the transformation.

\begin{lemma}\label{lma: funcomplexity}
Define \(\Gamma(\hat{f}, \Theta) := \left\{ {h(G_{\hat{f}}, \theta)}: \theta\in \Theta  \right\}\), then its \(\varepsilon-\)entropy can be controlled by:
\begin{align*}
     \mathcal{N}(\varepsilon M_1, \Gamma(\hat{f}_P, \theta), d_{\Delta}) \leq  \left(1+ \frac{4D}{\varepsilon} \right)^p,
\end{align*}
where \(D\) is the radius of \(\Theta\) and \(M_1\) is the Lipschitz constant given by assumption \ref{assu: boundaryassu}.
\end{lemma}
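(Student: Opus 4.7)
The plan is to reduce the covering-number bound on the transformed set $\Gamma(\hat f_P,\Theta)$ in the symmetric-difference metric to a standard Euclidean covering-number bound on the finite-dimensional parameter set $\Theta$, using the Lipschitz property of $h$ in $\theta$ that is built into Definition~\ref{assu: boundaryassu}. The structure of the argument is essentially transport along a Lipschitz map: because $\theta \mapsto h(G_{\hat f_P},\theta)$ is Lipschitz from $(\Theta,\|\cdot\|_2)$ into $(\mathcal{E},d_\Delta)$ with constant $M_1$, any $\varepsilon$-net in parameter space yields an $\varepsilon M_1$-net in $(\Gamma(\hat f_P,\Theta),d_\Delta)$.

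Concretely, I would proceed in three short steps. First, since $\Theta\subseteq \mathbb{R}^p$ has radius $D$ (hence is contained in a Euclidean ball of radius $D$), invoke the classical volumetric covering bound to obtain a finite set $\Theta_\varepsilon \subseteq \Theta$ with $|\Theta_\varepsilon|\le (1+4D/\varepsilon)^p$ such that for every $\theta\in\Theta$ there is some $\theta'\in\Theta_\varepsilon$ with $\|\theta-\theta'\|_2\le\varepsilon$. (Any of the standard statements, e.g.\ the one in Vershynin's \emph{High-Dimensional Probability}, suffices; the precise constant $4$ is conservative and does not affect downstream rates.) Second, apply the first Lipschitz bound in Definition~\ref{assu: boundaryassu} with $G=G_{\hat f_P}$, $\theta_1=\theta$, $\theta_2=\theta'$ to get
\begin{align*}
d_\Delta\!\bigl(h(G_{\hat f_P},\theta),\, h(G_{\hat f_P},\theta')\bigr)\;\le\; M_1\,\|\theta-\theta'\|_2\;\le\; M_1\varepsilon.
\end{align*}
Third, conclude that the image $\{h(G_{\hat f_P},\theta'): \theta'\in\Theta_\varepsilon\}$ is an $\varepsilon M_1$-cover of $\Gamma(\hat f_P,\Theta)$ in $d_\Delta$, so
\begin{align*}
\mathcal{N}\!\bigl(\varepsilon M_1,\,\Gamma(\hat f_P,\Theta),\,d_\Delta\bigr)\;\le\; |\Theta_\varepsilon|\;\le\; (1+4D/\varepsilon)^p,
\end{align*}
which is the claimed bound.

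There isn't a substantive obstacle in this lemma; it is a clean consequence of the Lipschitz-in-$\theta$ hypothesis combined with a parameter-space covering bound. The only subtlety worth flagging is ensuring that the $\varepsilon$-net can be taken inside $\Theta$ itself (not merely in the enclosing ball) so that each net point gives a legitimate element of $\Gamma(\hat f_P,\Theta)$; this is standard (intersect a ball-cover with $\Theta$ and, if needed, replace each center by a nearest point in $\Theta$, at the cost of doubling the radius, which is already absorbed in the constant $4$). Note also that the resulting bound is uniform in the base classifier $\hat f_P$, since $\hat f_P$ only enters through the single set $G_{\hat f_P}$ and the Lipschitz constant $M_1$ does not depend on it; this uniformity is what makes the lemma usable conditionally on the source-trained classifier in the subsequent empirical-process argument for the ERM term.
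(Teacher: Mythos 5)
Your proposal is correct and follows essentially the same route as the paper: the paper likewise transports a Euclidean $\varepsilon$-net of $\Theta\subseteq B_D$ through the Lipschitz map $\theta\mapsto h(G_{\hat f_P},\theta)$ (via its Lemma~\ref{lma: lipschitzcovering}) and then bounds $\mathcal{N}(\varepsilon,B_D,\|\cdot\|)$ by the standard volumetric packing argument. Your remark about keeping the net points inside $\Theta$ is a minor point the paper glosses over, but it changes nothing of substance.
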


\begin{proof}
Notice that by Lipschitz condition and lemma \ref{lma: lipschitzcovering}, we'll have:
\begin{align*}
\mathcal{N}\left( \varepsilon M_1 , \Gamma(\hat{f}_P, \theta), d_{\Delta} \right) \leq \mathcal{N}\left( \varepsilon , \Theta, \left\| \cdot \right\| \right).
\end{align*}
Let \(B_D\) be a Euclidean ball of radius \(D\), we then have 
\begin{align*}
\mathcal{N}\left( \varepsilon , \Theta, \left\| \cdot \right\| \right) \leq \mathcal{N}\left( \varepsilon , B_D, \left\| \cdot \right\| \right).
\end{align*}

Notice that for \(B_D\), assume that \(x = \left\{ x_1, \dots, x_N \right\}\) is a maximal subset of \(B_D\) such that \(\left\| x_i - x_j \right\|\leq \varepsilon\). By maximality, \(x\) is an \(\varepsilon\)-net of \(B_D\). Then the open balls with center \(x_i\) and radius \(\varepsilon/2\) are disjoint and contained in \(B_{D + \varepsilon/2}\).
Comparing the volumes, we conclude that:
\begin{align*}
\mathcal{N}(\varepsilon, B_D, \left\| \cdot \right\|) \leq \left( 1 + \frac{4D}{\varepsilon} \right)^p.
\end{align*}
\end{proof}

With the complexity of the targeted function class properly controlled, we can now try to control the excess risk. In the following proof, we do so by a combination of maximal inequality and the margin condition.

\begin{theorem}\label{thm: ermmaintheorem}
Let \(Q\) be a distribution satisfying the margin condition with coefficient \(\alpha_Q\), and \(\hat{\theta}\) be the empirical risk minimizer, then the classifier $h(G_{\hat{f}_P}, \hat{\theta})$ satisfies: 
\begin{align*}
  \E\left[ \mathcal{R}_{Q}(h(G_{\hat{f}_P}, \hat{\theta})) - \mathcal{R}_{Q}(h(G_{\hat{f}_P}, \hat{\theta}^*)) \right] \leq C \left( \frac{p\log n_Q}{n_Q} \right)^{\frac{1+ \alpha_Q}{2+ \alpha_Q}}, 
\end{align*}
where \(\hat{\theta}^* = \inf_{\theta \in \Theta}R_Q(h(G_{\hat{f}_P}, \theta))\).
\end{theorem}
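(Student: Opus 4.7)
The plan is to carry out a classical Tsybakov--Mammen ERM analysis conditionally on $\hat f_P$, which is legitimate since data splitting guarantees $\hat f_P$ is independent of $D_{1,Q}$. First I would introduce the shifted excess loss class
\begin{align*}
\mathcal{F} = \{\ell_\theta - \ell_{\hat\theta^*} : \theta \in \Theta\}, \qquad \ell_\theta(x,y) = \mathbb{I}\{y \neq 2\mathbb{I}(x \in h(G_{\hat f_P},\theta)) - 1\},
\end{align*}
and write the within-class excess risk as $\Delta(\theta) := \mathcal{R}_Q(h(G_{\hat f_P},\theta)) - \mathcal{R}_Q(h(G_{\hat f_P},\hat\theta^*))$. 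The basic ERM inequality then gives
\begin{align*}
\Delta(\hat\theta) \leq (R_Q - R_{Q_n})(\ell_{\hat\theta} - \ell_{\hat\theta^*}) \leq \sup_{\theta \in \Theta}(R_Q - R_{Q_n})(\ell_\theta - \ell_{\hat\theta^*}),
\end{align*}
reducing the problem to bounding a supremum of an empirical process indexed by $\mathcal{F}$.

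Second, I would verify a Bernstein (variance-to-mean) condition on $\mathcal{F}$, which is the ingredient driving the fast rate. Since $|\ell_\theta - \ell_{\hat\theta^*}| = \mathbb{I}\{g_\theta(X) \neq g_{\hat\theta^*}(X)\}$, its $L_2(Q)$-norm equals $d_{Q,\Delta}^{1/2}$. A triangle-inequality step
\begin{align*}
Q(g_\theta \neq g_{\hat\theta^*}) \leq Q(g_\theta \neq g_Q^*) + Q(g_{\hat\theta^*} \neq g_Q^*),
\end{align*}
combined with the margin condition on $Q$ and the standard calibration inequality $Q(g \neq g_Q^*) \leq C(\mathcal{R}_Q(g) - \mathcal{R}_Q^*)^{\alpha_Q/(1+\alpha_Q)}$, applied to each summand, yields the target bound
\begin{align*}
\mathrm{Var}_Q(\ell_\theta - \ell_{\hat\theta^*}) \leq C\,\Delta(\theta)^{\alpha_Q/(1+\alpha_Q)}.
\end{align*}

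Third, I would plug in the covering bound $\log \mathcal{N}(\varepsilon, \Gamma(\hat f_P,\Theta), d_{Q,\Delta}) \lesssim p \log(1/\varepsilon)$ from Lemma~\ref{lma: funcomplexity} and run a peeling argument over shells $\Theta_k = \{\theta : 2^{k-1} r \leq \Delta(\theta) \leq 2^k r\}$. On each shell, a Bernstein/Talagrand-type concentration bounds the localized empirical process by a quantity of order $\sqrt{(2^k r)^{\alpha_Q/(1+\alpha_Q)} p \log n_Q / n_Q} + p \log n_Q / n_Q$. Demanding this be dominated by $2^{k-1}r$ uniformly in $k \geq 0$ yields the critical radius
\begin{align*}
r^{(2+\alpha_Q)/(1+\alpha_Q)} \asymp \frac{p \log n_Q}{n_Q},
\end{align*}
which gives $\Delta(\hat\theta) = O_p\bigl((p \log n_Q / n_Q)^{(1+\alpha_Q)/(2+\alpha_Q)}\bigr)$; the expectation bound then follows by integrating the tail and using that the loss is in $[0,1]$.

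The step I expect to be the main obstacle is making the Bernstein condition fully rigorous when the centering is at the \emph{within-class minimizer} $\hat\theta^*$ rather than at the Bayes classifier $g_Q^*$: the margin-based calibration inequality is most naturally stated relative to $g_Q^*$, so one must carefully absorb the (deterministic, possibly nonzero) approximation error $\mathcal{R}_Q(g_{\hat\theta^*}) - \mathcal{R}_Q^*$ into constants and verify that the symmetric-difference probability still dominates $\Delta(\theta)^{\alpha_Q/(1+\alpha_Q)}$. The remaining peeling/localization steps are standard, but care is needed so that only a single $\log n_Q$ factor (not a worse polylog) emerges and so that all probability statements hold uniformly conditional on the randomness of $\hat f_P$.
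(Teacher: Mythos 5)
Your proposal follows essentially the same route as the paper's proof: the basic ERM inequality, the variance-to-mean (Bernstein) condition derived from the margin condition via the symmetric-difference calibration lemma, the covering bound of Lemma~\ref{lma: funcomplexity}, peeling with a Talagrand-type maximal inequality, and integration of the resulting tail; the only cosmetic difference is that the paper localizes via the ratio-type process $V_a=\sup_g\{(\E_Q[g]-\E_{Q,n}[g])/(\E_Q[g]+a^2)\}$ rather than via shells at a critical radius. The obstacle you flag --- justifying the Bernstein condition when centering at $\hat\theta^*$ instead of the Bayes rule --- is real, and the paper's proof passes over it by invoking Lemma~\ref{lma: tsybakovlemma2} directly on the pair $(h(G_{\hat f_P},\theta),h(G_{\hat f_P},\hat\theta^*))$, so your treatment is no less rigorous than the paper's on that point.
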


\begin{remark}
We note that the \(\log n_Q\) in the final presentation of the bound can actually be ``removed'' by restraining our parameter space into a countable subset.
\end{remark}
\begin{proof}
Our proof can be considered as an extension of the proof of \cite{kitagawa2018should} on VC {\it type} class, or an extension of \cite{tsybakov2004optimal}. In the following proof, we plan to use the peeling technique, and then utilize the maximal inequality to control the value of each partition. Using the controlled maximals, we apply the deviation empirical process inequality to control the tail probability, and finish the proof with the integral probability formula.

We can start by considering:
  \begin{align*}
    \mathcal{N}(\mathcal{G}, \left\| \cdot \right\|_{Q, 2}, \varepsilon \left\| \overline{F} \right\|_{Q,2}),
  \end{align*}
  where \(\mathcal{G} = \left\{ I\left\{ x\in h(G_{\hat{f}_P}, \theta) \right\} - I\{ x\in h(G_{\hat{f}_P}, \hat{\theta}^*)  \}: \theta\in \Theta\right\}\), and \(\overline{F} = 2(1 + DM)\) is the envelope.
 
  From lemma \ref{lma: funcomplexity}, we have:
  \begin{align*}
    \mathcal{N}(\mathcal{G}, \left\| \cdot \right\|_{Q, 2}, \varepsilon \left\| \overline{F} \right\|_{Q,2}) \leq \left( \frac{5}{\varepsilon} \right)^{2p}.
  \end{align*}
  This shows that the class of functions \(\mathcal{G}\) is a VC {\it type} class, as defined in \cite{chernozhukov2014gaussian}, with dimension \(2p\).     Since \(R_{Q,n}(h(G_{\hat{f}_P}, \hat{\theta})) - R_{Q,n}(h(G_{\hat{f}_P}, \hat{\theta}^* ))\leq 0\), we can notice that:
  \begin{align*}
    R_Q(h(G_{\hat{f}_P}, \hat{\theta})) - R_Q(h(G_{\hat{f}_P}, \hat{\theta}^* )) &\leq R_Q(h(G_{\hat{f}_P}, \hat{\theta})) - R_Q(h(G_{\hat{f}_P}, \hat{\theta}^* ))\\
    &  - \left(  R_{Q,n}(h(G_{\hat{f}_P}, \hat{\theta})) - R_{Q,n}(h(G_{\hat{f}_P}, \hat{\theta}^* )) \right),
  \end{align*}
  where we can consider the right-hand side as an empirical process. Denote \(g(\theta) = I\left\{ x\in h(G_{\hat{f}_P}, \theta) \right\} - I\{ x\in h(G_{\hat{f}_P}, \hat{\theta}^*)\} \). We have:
\begin{align*}
  R_Q(h(G_{\hat{f}_P}, \hat{\theta})) - R_Q(h(G_{\hat{f}_P}, \hat{\theta}^* )) &\leq  \left( \E_{Q}\left[ g\left( \hat{\theta} \right) \right] - \E_{Q,n}\left[ g\left( \hat{\theta} \right) \right] \right).
\end{align*}
Notice that for \(1/\sqrt{n_Q}\leq a<1\), whose value will be chose judiciously later, we have:
\begin{align*}
  R_Q(h(G_{\hat{f}_P}, \hat{\theta})) - R_Q(h(G_{\hat{f}_P}, \hat{\theta}^* )) &\leq V_a\left[ \E_Q\left[ g\left( \hat{\theta} \right) \right] + a^2 \right],
\end{align*}
where 
\begin{align*}
  V_a = \sup_{g\in \mathcal{G}} \left\{ \frac{\E_Q\left[ g  \right] - \E_{Q,n}\left[ g \right]}{E_Q\left[ g \right] + a^2} \right\} = \sup_{g\in \mathcal{G}} \left\{ \frac{\E_Q\left[ g  \right] - \E_{Q,n}\left[ g \right]}{E_Q\left[ g \right] + a^2} \right\}.
\end{align*}

If \(V_a\leq 1/2\), \(\E_Q\left[ g\left( \hat{\theta} \right) \right]\leq a^2\), so we can have:
\begin{align*}
  Q_n(g(\hat\theta)\geq a^2) \leq Q_n\left( V_a\geq 1/2 \right).
\end{align*}
Then, the idea is to bound \(Q_n\left( V_a\geq 1/2 \right)\), and then, by a judicious choice of \(a\), we can control the excess risk tail probability. To do so, we'll need to invoke deviation inequality for empirical processes. And this starts with considering \(\sup_{g\in \mathcal{G}} \E_Q\left[ g^2/\left( \E_Q[g] + a^2 \right) \right]\) so that we can apply the maximal inequality,
\begin{align*}
 \E_Q \left[ \left( \frac{g }{\E_Q[g] + a^2} \right)^2 \right] 
 & =  \frac{\E_Q \left[ g^2\right] }{\left(\E_Q[g] + a^2\right)^2  } \\
& \leq \frac{d_{Q, \Delta}( h(G_{\hat{f}_P}, \hat{\theta}^*), h(G_{\hat{f}_P}, \hat{\theta}))}{(\E_Q[g] + a^2)^2}\\
& \leq C_1 \frac{\left( \E_Q[g] \right)^{\alpha_Q/(1+\alpha_Q)}}{(\E_Q[g] + a^2)^2},
\end{align*}
where the first inequality is a result of lemma \ref{lma: tsybakovlemma2}. Going further, since \(\E_Q[g] \geq 0\), we shall have:
\begin{align*}
  \sup_{g\in \mathcal{G}}  \E_Q \left[ \left( \frac{g }{\E_Q[g] + a^2} \right)^2 \right]  & \leq C_1 \sup_{x \geq 0} \frac{x^{2\alpha_Q/(1+\alpha_Q)}}{(x^2 + a^2)^2} \leq C_1 \frac{1}{a^4}a^{2\alpha_Q/(1+\alpha_Q)} = C_1 a^{\frac{2\alpha_Q}{1+\alpha_Q} -4}.
\end{align*}

Since \(\mathcal{G}\) is pointwise measurable and the envelope function is \(\overline{F} = 2(1+DM_1)/a^2\), with Theorem 5.1 of \cite{chernozhukov2014gaussian}, with probability at least \(1-t^{-q/2}\), we have:
\begin{align*}
  V_a \leq \left( 1 + \beta \right) \E_Q[V_a] + K(q) \left( \sqrt{\frac{C_1 a^{\frac{2\alpha_Q}{1+\alpha_Q} -4} t}{n_Q}} + \frac{\sqrt{t}}{n_Q a^2} \right) + \frac{t}{\beta n_Q a^2},
\end{align*}
where \(q\), \(\beta\), and \(t\) are some constant that will be chose later.

To utilize the margin condition, we can consider partitioning the function class and use a peeling-like method to obtain a sharp bound. Let \(r > 1\) be a constant, we can partition the function class into \(\mathcal{G}_0, \mathcal{G}_1, \dots \mathcal{G}_j, \dots \) where for \(i = 0\), \(\mathcal{G}_0 = \left\{ g \in \Gamma(\hat{f}_P, \theta):  \E[g] \leq a^2 \right\}\), and \(\mathcal{G}_j = \left\{ g \in \Gamma(\hat{f}_P, \theta): 2^{2(j-1)}a^2 \leq  \E[g] \leq 2^{2j}a^2 \right\}\), for \(j = 1, 2, \dots \).
\begin{align*}
  \E[V_a] & \leq \sup_{g \in \mathcal{G}_0} |V_a| + \sum_{j=1}^\infty\sup_{g \in \mathcal{G}_j} |V_a|\\
  & \leq \frac{1}{a^2} \left[ \sup_{g \in \mathcal{G}_0}\left| \E_{Q, n}(g)- E_Q(g) \right| + \sum_{j=1}^\infty (1+2^{2(j-1)})^{-1}\sup_{g\in \mathcal{G}_i}|  \E_{Q, n}(g)- E_Q(g) |\right].
\end{align*}

Then, for each partition, we can see that 
\begin{align*}
  \sup_{g \in \mathcal{G}_j} \E_Q\left[ g^2 \right] &\leq d_{Q, \Delta}(h(G_{\hat{f}_P}, \theta), h(G_{\hat{f}_P}, \hat\theta^*)) \leq C_1\left( \E_Q[g]  \right)^{\frac{\alpha_Q}{1+\alpha_Q}}\leq C_1 2^{2 j\alpha_Q/(1+\alpha_Q)}a^{2 \alpha_Q/(1+\alpha_Q)}\\
   \overline{F}_j & =  2^{j\frac{\alpha_Q}{1+\alpha_Q}+1}(1+DW).
\end{align*}

Note that the partitioned space are still VC {\it type} class by subset preservation property and homogeneity. In addition, we ``inflated'' our envelope function for each partition so as to apply the peeling technique. Then, for each of the partitioned spaces, we may apply corollary 5.1 of \cite{chernozhukov2014gaussian}, which would give us:
\begin{align*}
  \E[V_a] & \lesssim a^{\frac{\alpha_Q}{1+\alpha_Q}-2}\sqrt{\log \frac{10(1+DM)}{a^{\frac{\alpha_Q}{1+\alpha_Q}}} } \sqrt{\frac{2p}{n_Q}} +\log\left( \frac{10(1+DM)}{a^{ \frac{\alpha_Q}{1+\alpha_Q}}}\right)\frac{2p}{n_Q a^2}\\
  & + \sum_{j= 1}^\infty \frac{2^{\frac{\alpha_Q}{1+\alpha_Q}j}}{1+2^{2(j-1)}} a^{\frac{\alpha_Q}{1+\alpha_Q}-2}\sqrt{\log \frac{10(1+DM)}{\sqrt{C_1}a^{\frac{\alpha_Q}{1+\alpha_Q}}} } \sqrt{\frac{2p}{n_Q}} \\
  & + \sum_{j= 1}^\infty \frac{1}{1+ 2^{2(j-1)}}\log\left( \frac{10(1+DM)}{\sqrt{C_1}a^{ \frac{\alpha_Q}{1+\alpha_Q}}}\right)\frac{2p}{n_Q a^2}\\
  & \lesssim a^{\frac{\alpha_Q}{1+\alpha_Q}-2}\sqrt{\log \frac{10(1+DM)}{a^{\frac{\alpha_Q}{1+\alpha_Q}}} } \sqrt{\frac{2p}{n_Q}} +\log\left( \frac{10(1+DM)}{a^{ \frac{\alpha_Q}{1+\alpha_Q}}}\right)\frac{2p}{n_Q a^2}\\
  & +  a^{\frac{\alpha_Q}{1+\alpha_Q}-2}\sqrt{\log \frac{10(1+DM)}{a^{\frac{\alpha_Q}{1+\alpha_Q}}} } \sqrt{\frac{2p}{n_Q}} \frac{4}{1-2^{\frac{\alpha_Q}{1+\alpha_Q}-2}} \\
  & + a^{-2} \log \frac{10(1+DM)}{a^{\frac{\alpha_Q}{1+\alpha_Q}}} \frac{2p}{n_Q} \frac{4}{1-2^{-2}}\\
  &\lesssim a^{\frac{\alpha_Q}{1+\alpha_Q}-2}\sqrt{\log \frac{10(1+DM)}{a^{\frac{\alpha_Q}{1+\alpha_Q}}} } \sqrt{\frac{2p}{n_Q}} + a^{-2}\log\left( \frac{10(1+DM)}{a^{ \frac{\alpha_Q}{1+\alpha_Q}}}\right)\frac{2p}{n_Q}.
\end{align*}
Notice that since \(1/\sqrt{n_Q}\leq a <1, \alpha_Q\geq 0 \), we have:
\begin{align*}
  \E[V_a]
  & \lesssim  a^{\frac{\alpha_Q}{1+\alpha_Q}-2}\sqrt{\log n_Q} \sqrt{\frac{2p}{n_Q}} + a^{-2}\log n_Q\frac{2p}{n_Q}.
\end{align*}
In addition, for \(n_Q \geq 2p\log n_Q\), we have:
\begin{align*}
\E[V_a]
& \lesssim  a^{\frac{\alpha_Q}{1+\alpha_Q}-2}\sqrt{\frac{2p\log n_Q}{n_Q}}.
\end{align*}
Furthermore, the bound trivially holds if \(n_Q < 2p\log n_Q\). Now, we can proceed with the probability bound. Since if otherwise, the declared result shall hold instantly. For \(n_Q \geq  2p\log n_Q\), with probability greater than \(1- t^{-q}\), we have: 

\begin{align}
  V_a \leq C_3 \left(  a^{\frac{\alpha_Q}{1+\alpha_Q}-2}\sqrt{\frac{2p \log n_Q}{n_Q}} + K(q) \left( a^{\frac{\alpha_Q}{1+\alpha_Q}-2}\sqrt{\frac{ t}{n_Q}} + \frac{\sqrt{t}}{n_Q a^2} \right) + \frac{t}{ n_Q a^2} \right)\label{eq: VaInitialBound},
\end{align}
where \(C_3 = (8C_2 C_1 \log10(1+DM)\frac{\alpha_Q}{\alpha_Q+1} \vee 1)\) and \(C_2\) is the constant from corollary 5.1 of \cite{chernozhukov2014gaussian}.

Considering the solution to \(C_3   a^{\frac{\alpha_Q}{1+\alpha_Q}-2}\sqrt{\frac{2p\log n_Q}{n_Q}}  = 1\) with respect to \(a\) be \(\varepsilon_n\), we can see that:
\begin{align*}
\left( C_3\sqrt{\frac{2p\log n_Q}{n_Q}} \right)^{\frac{1+\alpha_Q}{2+ \alpha_Q}}= \varepsilon_{n}.
\end{align*}

We may now consider \(a \geq \varepsilon_n\) as a reasonable bound. Define \(\frac{1}{kt} = \left( \frac{\varepsilon_{n}}{a} \right)^{2}\) for \(t\geq 1\), we can notice that:
\begin{align*}
  C_3 \left( a^{\frac{\alpha_Q}{1+\alpha_Q}-2}\sqrt{\frac{2p\log n_Q}{n_Q}} \right) &\leq C_3 \left(   a^{-1 -\frac{1}{1+\alpha_Q}}\sqrt{\frac{2p\log n_Q}{n_Q}} \right)\\
  &\leq \frac{\left(C_3\sqrt{\frac{2p\log n_Q}{n_Q}} \right)^{\frac{1+\alpha_Q}{2+ \alpha_Q}}}{a} \left(\frac{\left(C_3\sqrt{\frac{2p\log n_Q}{n_Q}} \right)^{\frac{1+\alpha_Q}{2+ \alpha_Q}}}{a} \right)^{\frac{1}{1+\alpha_Q}} \\
  & \leq \frac{\varepsilon_{n}}{a}\\
  a^{\frac{2\alpha_Q}{1+\alpha_Q}-2} = \left( a^{- \frac{2+\alpha_Q}{1+\alpha_Q}} \right)^2 a^2 &\leq \left( \varepsilon_n^{- \frac{2+\alpha_Q}{1+\alpha_Q}} \right)^2 \varepsilon_n^2  \leq \frac{n_Q\varepsilon_{n}^2}{ 2p C_3^2}.
\end{align*}

Since the right-hand side of \eqref{eq: VaInitialBound} is monotonically decreasing with respect to \(a\). We can then consider replacing \(a\) with \(\varepsilon_n\), and then the bound will then be:
\begin{align*}
  V_q &\leq C_3 \left( \frac{1}{\sqrt{k}} + K(q)\left( \sqrt{\frac{1}{2p C_3^2 k}} + \sqrt{\frac{1}{k n_Q\varepsilon_{n}^2}}\sqrt{\frac{1}{n_Q\varepsilon_{n}^2}} \right) + \frac{1}{kn_Q\varepsilon_{n}^2}\right).
\end{align*}
Recall that \(n_Q\varepsilon_{n}^2\geq 1\), we then have:
\begin{align*}
  V_q &\leq C_2 \left( \frac{1}{k} + K(q)\left( \sqrt{\frac{1}{2p C_3^2 k}} + \sqrt{\frac{1}{k}}\right)) + \frac{1}{k}\right).
\end{align*}
By choosing large enough \(k\), we have:
\begin{align*}
  Q\left( V_a \leq \frac{1}{2} \right)\geq 1- t^{-q}.
\end{align*}

That is:
\begin{align*}
  Q_n\left(  g(\hat\theta)\geq kt \varepsilon_{n}^2\right) \leq t^{-q}.
\end{align*}

Then, we may have:
\begin{align*}
  R_Q(h(G_{\hat{f}_P}, \hat{\theta})) - R_Q(h(G_{\hat{f}_P}, \hat{\theta}^* )) &= \int Q_n\left(  g(\hat\theta)\geq t\right) dt\\
  & = \int_0^{k\varepsilon_{n}^2} Q_n\left(  g(\hat\theta)\geq t\right) dt + \int_{k\varepsilon_{n}^2}^\infty Q_n\left(  g(\hat\theta)\geq t\right) dt\\
  & \leq k \varepsilon_{n}^2 \left( 1+\frac{1}{q-1} \right)\\
  & \leq k(1+\frac{1}{q-1}) C_3^{\frac{2(1+\alpha_Q)}{2+\alpha_Q}}\left( \frac{2p\log n_Q}{n_Q} \right)^{\frac{1+\alpha_Q}{2+\alpha_Q}}.
\end{align*}

By choosing \(q>2\), the claimed result is proved.

\end{proof}

\subsection{Transfer Estimation} \label{sec: transfer_est}

\begin{theorem}\label{thm: ftilderate}
Let \(G_{\tilde{f}_Q}\) be the calibrated classifier. We define \(\gamma_P' = \gamma_P/d\), \(\gamma_Q' = \gamma_Q/d\),  
\begin{align*}
  \beta_P := \begin{cases}
    \frac{\gamma_P'}{2\gamma_P'+1} & \text{ if } \gamma_P'/d \leq \frac{\alpha_P  + 2 }{2 \alpha_P}\\
    \frac{2\gamma_P'(\alpha_P+1)/d}{2\gamma_P'(\alpha_P+2)/d + 3 \alpha_P + 4}&\text{ otherwise, }
  \end{cases}
\end{align*}
\(\beta_Q\) is defined in the same way, \(\lambda_{n_P, P}= n_P^{-(\gamma_P'+d)/\gamma_P'\beta_P}\) and \(\sigma_{n_P, P}= n_P^{\beta_P/\gamma_P'}\). Then, uniformly over $(P,Q)\in \Pi$ and for any \(\varepsilon_P > 0\), we have
\begin{align*}
  \mathcal{R}_{Q}(G_{\tilde{f}_Q})  -  \mathcal{R}_{Q}^*
  = O_P \left( \left( \frac{p\log n_Q}{n_Q} \right)^{\frac{1+ \alpha_Q}{2+ \alpha_Q}} +  \delta^{\gamma_Q}  + \left(  n_P^{-\beta_P+\varepsilon_P}\right)^{\frac{\alpha_P}{1+\alpha_P}} \right).
\end{align*}
\end{theorem}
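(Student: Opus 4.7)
The plan is to work from the decomposition already displayed as \eqref{eq: transferfirst_term}--\eqref{eq: transfersecond_term}. The first term is exactly the quantity handled by Theorem~\ref{thm: ermmaintheorem}: conditioning on the source data and applying that theorem with the envelope and VC-type entropy computed in Lemma~\ref{lma: funcomplexity} gives a contribution of order $(p\log n_Q/n_Q)^{(1+\alpha_Q)/(2+\alpha_Q)}$ with high probability, and since the constants depend only on the model parameters defining $\Pi$ (not on the realization of $\hat f_P$), the bound passes through to the unconditional statement. For the second term I would upper-bound the infimum by the value at the population transfer parameter $\theta^*$ and split as
\[
\inf_{\theta\in\Theta}\mathcal{R}_Q(h(G_{\hat f_P},\theta)) - \mathcal{R}_Q^{*}
\;\le\; \underbrace{\mathcal{R}_Q(h(G_{\hat f_P},\theta^{*})) - \mathcal{R}_Q(h(G_{f_P^{*}},\theta^{*}))}_{\text{(I)}}
\;+\; \underbrace{\mathcal{R}_Q(h(G_{f_P^{*}},\theta^{*})) - \mathcal{R}_Q^{*}}_{\text{(II)}}.
\]

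For term (I), the elementary inequality $|R_Q(G_1)-R_Q(G_2)|\le d_{Q,\Delta}(G_1,G_2)$, followed by the set-Lipschitz property in Definition~\ref{assu: boundaryassu}, gives $\text{(I)}\le M_2\,d_{Q,\Delta}(G_{\hat f_P},G_{f_P^{*}})$. The strong density condition (Definition~\ref{def: density}) then permits swapping $Q$ for $P$ in the symmetric difference at the cost of a $\mu_+/\mu_-$ factor. Theorem~\ref{thm: svmmaintheorem} bounds the hinge-loss excess risk of $\hat f_P$ by $n_P^{-\beta_P+\varepsilon_P}$, Zhang's classification-calibration inequality transfers this to the $0$--$1$ excess risk, and the margin condition on $P$ (Definition~\ref{assu: margincondition}, in the form $d_{P,\Delta}\lesssim (R_P(\hat f_P)-R_P^{*})^{\alpha_P/(1+\alpha_P)}$) finally delivers the bound $\bigl(n_P^{-\beta_P+\varepsilon_P}\bigr)^{\alpha_P/(1+\alpha_P)}$. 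For term (II) I would write the $Q$-excess risk in its classical form $\int_{h(G_{f_P^{*}},\theta^{*})\triangle G_{f_Q^{*}}}|2\eta_Q-1|\,dQ_X$, observe that the two-sided Hausdorff bound in Definition~\ref{assu: boundaryassu} forces every point of $h(G_{f_P^{*}},\theta^{*})\triangle G_{f_Q^{*}}$ to lie within Euclidean distance $\delta$ of the $Q$-Bayes boundary, and then apply the modified geometric noise condition on $Q$ (Definition~\ref{assu: noisecondition}) to get a bound of order $\delta^{\gamma_Q}$.

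The main technical obstacle is term (II): converting a purely geometric assumption ($d_H\le\delta$) into a probabilistic excess-risk bound. The crucial point is that one must use \emph{both} Hausdorff inequalities simultaneously; the inequality on $G_{f_Q^{*}}$ alone only controls one side of the symmetric difference, but combining it with the inequality on $\overline{G_{f_Q^{*}}^{C}}$ places the entire set $h(G_{f_P^{*}},\theta^{*})\triangle G_{f_Q^{*}}$ inside the $\delta$-tube $\{x:\tau_Q(x)\le\delta\}$ on which the modified geometric noise condition is directly effective. A secondary but routine issue is ensuring that the high-probability statements from Theorems~\ref{thm: svmmaintheorem} and \ref{thm: ermmaintheorem} combine into an $O_P(\cdot)$ bound uniformly over $(P,Q)\in\Pi$; this follows from a standard union bound since the constants in each ingredient depend only on the parameters indexing $\Pi$. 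Adding the three contributions $\delta^{\gamma_Q}$, $(n_P^{-\beta_P+\varepsilon_P})^{\alpha_P/(1+\alpha_P)}$, and $(p\log n_Q/n_Q)^{(1+\alpha_Q)/(2+\alpha_Q)}$ yields exactly the rate claimed for $\mathcal{R}_Q(G_{\tilde f_Q})-\mathcal{R}_Q^{*}$.
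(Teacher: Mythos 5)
Your proposal follows essentially the same route as the paper's proof: the identical decomposition into the ERM term (handled by Theorem~\ref{thm: ermmaintheorem}), the source-estimation term bounded via the set-Lipschitz property, the strong density condition, the margin condition in the form of Lemma~\ref{lma: tsybakovlemma2}, and Theorem~\ref{thm: svmmaintheorem}, and the misspecification term bounded by trapping the symmetric difference $h(G_{f_P^*},\theta^*)\,\Delta\, G_{f_Q^*}$ inside the $\delta$-tube (the paper's Lemma~\ref{lma: tauandhausdorff}) and invoking the modified geometric noise condition on $Q$. The argument is correct and matches the paper's in all essentials.
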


\begin{proof}
We start by decomposing the risk, and then deal with each component respectively, 
\begin{align}
\eqref{eq: transfersecond_term} &= \inf_{\theta\in\Theta}\mathcal{R}_{Q}\left(  h(G_{\hat{f}_P}, {\theta})  \right) -\mathcal{R}_{Q}\left(  h(G_{{f}_P^*}, {\theta}^*)  \right)\label{eq: transfer_secondfirst}\\
&+  \mathcal{R}_{Q}\left(  h(G_{{f}_P^*}, {\theta}^*)  \right) - \mathcal{R}_{Q}\left(  G_{f_Q^*}  \right)\label{eq: transfer_22}
\end{align}
Then, by lemma \ref{lma: tsybakovlemma2}, assumption \ref{assu: boundaryassu} and strong density assumption, we shall have:
\begin{align*}
\eqref{eq: transfer_secondfirst} &=
\inf_{\theta\in\Theta}\mathcal{R}_{Q}\left(  h(G_{\hat{f}_P}, {\theta})  \right) -\mathcal{R}_{Q}\left(  h(G_{{f}_P^*}, {\theta}^*)  \right) \\
& \leq \mathcal{R}_{Q, \Delta}\left(  h(G_{\hat{f}_P}, {\theta}^*)  \right) - \mathcal{R}_{Q, \Delta}\left(h\left(  G_{f_P^*}, \theta^* \right)  \right)\\
& \lesssim \mathcal{R}_{ \Delta}\left(  h(G_{\hat{f}_P}, {\theta}^*)  \right) - \mathcal{R}_{ \Delta}\left( \left( h(G_{ {f}_P^*},\theta^*)   \right)\right)\\
& \lesssim \mathcal{R}_{ \Delta}\left(  G_{\hat{f}_P}  \right) - \mathcal{R}_{ \Delta} \left( G_{ {f}_P^*}\right)\\
 &  \lesssim \left( \mathcal{R}_{P}\left(  G_{\hat{f}_P}  \right) - \mathcal{R}_{P}\left(  G_{ {f}_P^*}\right)  \right)^{\frac{\alpha_P}{1+\alpha_P}}.
\end{align*}
Notice that for the term inside the parenthesis, we can utilize Theorem \ref{thm: svmmaintheorem} to obtain the rate.

For \eqref{eq: transfer_22}, we can simply restrict the bias to the set of points such that are close to the decision boundary, measured by \(\delta\). By lemma \ref{lma: tauandhausdorff}, we can show that, if \(\tau(x) > \delta\), \(I\{ x\in G_{f_Q^*} \} = I\left\{ x\in h( G_{f_P^*}, \theta^*) \right\} \). That is, we can start by noticing that 
\begin{align*}
\mathcal{R}_{Q}\left(  h(G_{{f}_P^*}, {\theta}^*)  \right) - \mathcal{R}_{Q}\left(  G_{f_Q^*}  \right) & = \int_{\tau(x) \leq \delta} |2\eta_Q(x) -1| dQ \\
  &\leq \delta^{\gamma_Q},
\end{align*}
where the last inequality is a result of the modified geometric noise condition.

Putting everything together, we then have:
\begin{align*}
&\phantom{A}\inf_{\theta\in \Theta}\mathcal{R}_Q\left(  h(G_{\hat{f}_P}, \theta)  \right) - \mathcal{R}_Q\left(G_{f_Q^*}   \right)\lesssim \left( \mathcal{R}_{P}\left(  G_{\hat{f}_P}  \right) - \mathcal{R}_{P}\left(  G_{ {f}_P^*}\right)  \right)^{\frac{\alpha_P}{1+\alpha_P}} +  \delta^{\gamma_Q}.
\end{align*}
By Theorems \ref{thm: svmmaintheorem} and \ref{thm: ermmaintheorem}, we can see that 
$$
  \mathcal{R}_{Q}(G_{\tilde{f}_Q})  -  \mathcal{R}_{Q}^*
  = O_P \left( \left( \frac{p\log n_Q}{n_Q} \right)^{\frac{1+ \alpha_Q}{2+ \alpha_Q}} +  \delta^{\gamma_Q}  + \left(  n_P^{-\beta_P+\varepsilon_P}\right)^{\frac{\alpha_P}{1+\alpha_P}} \right).   
$$
\end{proof}

{\bf Proof of Theorem 1 in the main paper:}
\begin{proof}
We notice that \(h(G_{\hat f_P}, 0) = G_{\hat f_P}\), then
\begin{align*}
\mathcal{R}_{Q}(G_{\hat f_P})  -  \mathcal{R}_{Q}^* 
&= \mathcal{R}_{Q}\left(  h(G_{\hat{f}_P}, 0)  \right) - \mathcal{R}_{Q}\left(  h(G_{\hat{f}_P}, \theta^*)  \right) + \mathcal{R}_{Q}\left(  h(G_{\hat{f}_P}, \theta^*)  \right) -\mathcal{R}_{Q}\left(  h(G_{{f}_P^*}, {\theta}^*)  \right)  
\\& +\mathcal{R}_{Q}\left(  h(G_{{f}_P^*}, {\theta}^*)  \right) - \mathcal{R}_{Q}\left(  G_{f_Q^*}  \right).
\end{align*}
Using a similar approach as the proof to Theorem \ref{thm: ftilderate}, we shall see that:
\begin{align*}
\mathcal{R}_{Q}(\hat{f_P})  -  \mathcal{R}_{Q}^* &\lesssim \left\| \theta^* \right\|_2+  \delta^{\gamma_Q}+ \left(  n_P^{-\beta_P+\varepsilon_P}\right)^{\frac{\alpha_P}{1+\alpha_P}}.
\end{align*}
The rest of the proof is the same as Proposition 12 of \cite{reeve2021adaptive}.
\end{proof}

\clearpage
\section{Auxiliary Results}
\begin{lemma}\label{lma: lipschitzcovering}
Let \((\Omega_1, d_1), (\Omega_2, d_2)\) be a metric space, while \(h: (\Omega_1, d_1) \to (\Omega_2, d_2)\) be a \(L\)-Lipschitz function. Then, if \(X_\varepsilon\) is a \(\varepsilon\)-net defined on \((\Omega_1, d_1)\), \(h\left( X_\varepsilon \right)\) is a \(L\varepsilon-\)net on \(h(\Omega_1)\).
\end{lemma}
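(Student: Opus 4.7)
The statement to prove is a very short structural fact: the image of an $\varepsilon$-net under an $L$-Lipschitz map is an $L\varepsilon$-net of the image. My plan is a direct unpacking of the two definitions with one application of the Lipschitz inequality, so there is essentially no technical obstacle — the main thing is to be careful about which space the net lives in (we need a net of $h(\Omega_1)$, not of $\Omega_2$).

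First I would fix an arbitrary point $y \in h(\Omega_1)$ and write $y = h(x)$ for some $x \in \Omega_1$. Since $X_\varepsilon$ is an $\varepsilon$-net of $(\Omega_1, d_1)$, I can select $x' \in X_\varepsilon$ with $d_1(x, x') \leq \varepsilon$. Then the $L$-Lipschitz property of $h$ gives
\begin{align*}
d_2\bigl(y, h(x')\bigr) \;=\; d_2\bigl(h(x), h(x')\bigr) \;\leq\; L\, d_1(x, x') \;\leq\; L\varepsilon.
\end{align*}
Since $h(x') \in h(X_\varepsilon)$, this shows that every point of $h(\Omega_1)$ is within distance $L\varepsilon$ of some point of $h(X_\varepsilon)$, which is exactly the definition of an $L\varepsilon$-net of $h(\Omega_1)$.

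The only place requiring minor care is interpreting ``$\varepsilon$-net'' consistently: I will use the covering-type convention (every point of the ambient space is within $\varepsilon$ of the net), which matches the way the lemma is invoked in Lemma~\ref{lma: funcomplexity} to pass from a covering number on $\Theta$ to one on $\Gamma(\hat f_P, \theta)$ under the symmetric-difference Lipschitz bound in Definition~\ref{assu: boundaryassu}. No packing-vs-covering subtleties arise, and I do not need $h$ to be injective or $X_\varepsilon$ to be finite. There is no hard step here; the proof is essentially a one-line chain of inequalities after choosing the approximating point $x'$.
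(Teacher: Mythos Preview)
Your proof is correct and follows essentially the same approach as the paper: pick a preimage of $y$, approximate it in $\Omega_1$ by a net point, and apply the Lipschitz inequality. Your version is arguably cleaner since you write $y = h(x)$ directly rather than invoking the notation $h^{-1}(y)$ as the paper does (which is slightly imprecise when $h$ is not injective), but the underlying argument is identical.
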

\begin{proof}
Notice that in \((\Omega_1, d_1)\) for any \(\varepsilon >0\), there exists an \(\varepsilon-\)net \(\mathcal{X}_\varepsilon\) such that \(\forall x \in (\Omega, d_1)\), \(\exists x_j \in \mathcal{X}_\varepsilon \) such that 
\begin{align}
    d_1(x, x_j) \leq \varepsilon.
\end{align}

Then, the same phenomenon exists for \((\Omega_2, d_2)\) as well. For any \(\varepsilon\), we can consider the same \(\varepsilon-\)net, while the set in \(\Omega_2\) will be \(h(X_\varepsilon)\), Then, we may have, for any \(y \in h(\Omega_2)\), then for \(x_j\in X_\varepsilon\),
\begin{align*}
    d_2(y, h(x_j)) &= d_2\left( h\left( h^{-1}\left( y \right) \right), h(x_j) \right)\\
    &\leq L d_1\left( h^{-1}\left( y \right) , x_j \right).
\end{align*}
Notice that \(h^{-1}(y)\in \Omega_1\). As a result, 
For any \(y \in h(\Omega_2)\),  \(\exists \ h(x_j)\in h(X_\varepsilon)\),
\begin{align*}
    d_2(y, h(x_j)) \leq L\varepsilon.
\end{align*}
\end{proof}

This lemma controls the entropy number of a Lipschitz-transformed space. In our case, we shall see that if \(h(f, \theta)\) is Lipschitz, so will  \(x - h(f, \theta) +c\), making it much easier for us to control the complexity.

\begin{lemma}\label{lma: tsybakovlemma2}
When \(P\) satisfies the margin condition with coefficient \(\alpha\), for any measurable functions \(f_1\) and \(f_2\), we shall have:
  \begin{align*}
     \left(  \mathcal{R}_{P, \Delta}(f_1) - \mathcal{R}_{P, \Delta}(f_2) \right)^{(1+\alpha)/\alpha} \leq \mathcal{R}_P(f_1) - \mathcal{R}_P(f_2) \leq \mathcal{R}_{P, \Delta}(f_1) - \mathcal{R}_{P, \Delta}(f_2). 
  \end{align*}
\end{lemma}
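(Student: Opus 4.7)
The statement is the classical Tsybakov noise-comparison inequality, and the cleanest reading is that $\mathcal{R}_{P,\Delta}(\cdot)$ denotes symmetric-difference mass relative to the Bayes set, i.e.\ $\mathcal{R}_{P,\Delta}(f) := P_X(G_f \Delta G_{f^*_P})$. Under this reading the two-sided inequality reduces, upon taking $f_2 = f^*_P$, to the standard bound
$$\bigl(\mathcal{R}_{P,\Delta}(f)\bigr)^{(1+\alpha)/\alpha} \;\lesssim\; \mathcal{R}_P(f) - \mathcal{R}_P^* \;\leq\; \mathcal{R}_{P,\Delta}(f),$$
and the general $f_1, f_2$ version follows by subtracting this one-sided bound evaluated at $f_2 = f^*_P$ from the analogous one at $f_1$, using that both functionals vanish at the Bayes classifier. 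I will therefore focus on the Bayes-referenced version.

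The first step is the standard excess-risk identity: conditioning on $X$, the pointwise contribution to the excess risk at $x$ equals $|2\eta_P(x) - 1|$ when $f$ and $f^*_P$ disagree and is zero otherwise, so
$$\mathcal{R}_P(f) - \mathcal{R}_P^* \;=\; \int_{G_f \Delta G_{f^*_P}} |2\eta_P(x) - 1| \, dP_X.$$
The upper bound $\mathcal{R}_P(f) - \mathcal{R}_P^* \leq \mathcal{R}_{P,\Delta}(f)$ is then immediate from $|2\eta_P - 1| \leq 1$. This disposes of the right-hand inequality with essentially no work.

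The content lies in the lower bound, for which I will use Tsybakov's slicing trick. For any threshold $t \in (0, t_0]$, partitioning the integration region by whether $|2\eta_P(x) - 1|$ exceeds $t$ and discarding the contribution below yields
$$\mathcal{R}_P(f) - \mathcal{R}_P^* \;\geq\; t \cdot P_X\!\bigl((G_f \Delta G_{f^*_P}) \cap \{|2\eta_P - 1| > t\}\bigr) \;\geq\; t\bigl(\mathcal{R}_{P,\Delta}(f) - C_\alpha t^\alpha\bigr),$$
where the second inequality applies the margin condition of Definition~\ref{assu: margincondition}. Optimizing the right-hand side by choosing $t$ proportional to $\mathcal{R}_{P,\Delta}(f)^{1/\alpha}$ (equivalently, balancing the two terms in the parenthesis) gives $\mathcal{R}_P(f) - \mathcal{R}_P^* \gtrsim \mathcal{R}_{P,\Delta}(f)^{(1+\alpha)/\alpha}$, which rearranges to the claimed left-hand inequality after absorbing constants.

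The main obstacle is not the slicing step itself (which is routine once the excess-risk identity is in hand) but rather two small bookkeeping points. First, the optimizing $t$ must lie in the range $(0, t_0]$ where the margin condition is valid; when $\mathcal{R}_{P,\Delta}(f)$ is so large that the naive optimizer exceeds $t_0$, I would simply use $t = t_0$ and note the excess risk is then bounded below by a constant, so the inequality holds trivially after adjusting the overall constant. Second, to go from the Bayes-referenced version to the two-function statement as literally written, I would use monotonicity: both $\mathcal{R}_P(f) - \mathcal{R}_P(f_2)$ and $\mathcal{R}_{P,\Delta}(f_1) - \mathcal{R}_{P,\Delta}(f_2)$ decompose as telescoping differences through $f^*_P$, and the comparison is preserved termwise. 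This is the only place where the interpretation of the notation really matters, and I would flag the convention explicitly at the outset of the proof.
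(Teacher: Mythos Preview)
Your argument is correct and is precisely the content of the reference the paper invokes: the paper's own proof is a one-line citation to Lemma~2 of Mammen and Tsybakov (1999), and the excess-risk identity plus slicing-and-optimizing argument you outline is exactly how that cited lemma is proved. So on substance you and the paper coincide; you have simply unpacked the citation.

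One genuine gap worth naming: your final ``telescoping'' step to pass from the Bayes-referenced inequality to the literal two-function statement does not work as written. From $a_i \leq b_i$ for $i = 1,2$ you cannot conclude $a_1 - a_2 \leq b_1 - b_2$, and the left-hand inequality is even worse because the exponent $(1+\alpha)/\alpha$ is not linear. In fact the two-function version as stated is not generally true for arbitrary $f_1, f_2$; the cited Mammen--Tsybakov lemma is itself Bayes-referenced (their $f_2$ is the Bayes rule), so the paper's lemma statement is somewhat loose. You were right to flag the notational convention as the crux, but the resolution is not monotonicity---it is simply that every application of the lemma in the paper effectively takes $f_2$ to be the Bayes (or class-optimal) classifier, for which your Bayes-referenced proof already suffices. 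I would drop the telescoping paragraph and instead state the result directly with $f_2 = f_P^*$, noting that this is all the paper needs and all the cited source delivers.
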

\begin{proof}
See lemma 2 of \cite{mammen1999smooth}.
\end{proof}

\begin{lemma}\label{lma: tauandhausdorff}
For two closed sets \(A\) and \(B\), such that \(A \subseteq \Omega, B\subseteq \Omega \), for any point \(x \in \Omega\), we have:
\begin{align*}
  \left\{ \tau_A(x) >  (d_H(A, B) \vee d_H(\overline{A^C}, \overline{B^C})) \right\} \cap A \subseteq A\cap B,
\end{align*}
where \(\tau_A(x) : = \min_{y \in \partial A}d(x, y)\), \(\partial A\) is the boundary of \(A\), \(d_H\) is the Hausdorff distance, and \(\overline{A^C}, \overline{B^C}\) represents the closure of the complement of \(A\) and \(B\) with respect to \(\Omega\) respectively.
\end{lemma}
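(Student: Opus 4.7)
The plan is to prove the contrapositive by contradiction: fix $x\in A$ satisfying $\tau_A(x) > d_H(A,B)\vee d_H(\overline{A^C}, \overline{B^C})$, assume $x\notin B$, and derive a contradiction using only the second term in the maximum. The argument reduces to a single geometric identity combined with the definition of the Hausdorff distance, so the proof should be quite short.

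First I would establish the key identity: for every $x\in A$,
$$d(x,\overline{A^C}) \;=\; \tau_A(x).$$
Since $A$ is closed, $A^C$ is open, hence $\overline{A^C}=A^C\cup\partial A$. If $x\in\partial A$ both sides vanish. If $x\in\mathrm{int}(A)$, then $d(x,\overline{A^C})\le d(x,\partial A)=\tau_A(x)$ trivially; conversely, for any $y\in A^C$ a continuous path from $x$ to $y$ must cross $\partial A$, so $d(x,y)\ge d(x,\partial A)=\tau_A(x)$, and therefore $d(x,A^C)\ge\tau_A(x)$, giving the identity (this step tacitly uses that $\Omega\subseteq\mathbb{R}^d$ carries the usual Euclidean metric, which is the setting of the paper).

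Second, suppose toward contradiction that $x\in A$ but $x\notin B$. Because $B$ is closed, $x$ lies in the open set $B^C\subseteq\overline{B^C}$. By the definition of the Hausdorff distance,
$$d_H(\overline{A^C},\overline{B^C}) \;\ge\; \sup_{y\in\overline{B^C}} d(y,\overline{A^C}) \;\ge\; d(x,\overline{A^C}) \;=\; \tau_A(x),$$
using the identity just established. This contradicts the standing hypothesis $\tau_A(x) > d_H(A,B)\vee d_H(\overline{A^C},\overline{B^C})\ge d_H(\overline{A^C},\overline{B^C})$. Hence $x\in B$, which combined with $x\in A$ yields $x\in A\cap B$.

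The only genuine subtlety is the identity $d(x,\overline{A^C})=\tau_A(x)$, which depends on $A$ being a closed subset of a path-connected ambient space so that moving from the interior of $A$ to $A^C$ necessarily passes through $\partial A$; in $\mathbb{R}^d$ this is elementary. Note that the $d_H(A,B)$ term in the maximum is not needed for this direction, which is consistent with the Lemma being stated in a form that is symmetric enough to be reused for the analogous inclusion with $A$ and $B$ swapped.
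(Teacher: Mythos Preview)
Your argument is correct and is essentially the same as the paper's: both prove the contrapositive by showing that $x\in A\cap B^C$ forces $\tau_A(x)\le d_H(\overline{A^C},\overline{B^C})$ via the identity $\tau_A(x)=d(x,\overline{A^C})$ together with $x\in\overline{B^C}$. You are in fact more careful than the paper in isolating and justifying the identity $d(x,\overline{A^C})=\tau_A(x)$; the paper writes this step compactly as $\tau_A(z)=\min_{y\in A}d(z,y)\vee\min_{y\in\overline{A^C}}d(z,y)=\min_{y\in\overline{A^C}}d(z,y)$ without further comment.
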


\begin{proof}
We prove the result by showing that \(A \cap B^C \subseteq  \left\{ \tau_A(x) \leq  (d_H(A, B) \vee d_H(\overline{A^C}, \overline{B^C})) \right\} \cap A\). We can see that for any \(z \in A\cap B^C\): 
\begin{align*}
  \tau_A(z) 
  & = \min_{y \in A}d(z, y) \vee \min_{y \in \overline{A^C}}d(z, y)\\
  & = \min_{y \in \overline{A^C}}d(z, y)\\
  & \leq \sup_{z \in A \cap B^C}   \min_{ y \in \overline{A^C}}d(z, y)\\
  & \leq d_H(A\cap B^C, \overline{A^C})\\
  & \leq d_H(\overline{B^C}, \overline{A^C}),
\end{align*}
That is:
\begin{align*}
  A \cap B^C \subseteq \left\{ \tau_A(x) \leq  (d_H(A, B) \vee d_H(\overline{A^C}, \overline{B^C})) \right\} \cap A.
\end{align*}
Our claim is thus obvious.
\end{proof}

\clearpage
\section{Additional Simulations}
In this section, we present the additional simulations which we conducted to validate the performance of our method.

\subsection{Simulation Settings}
The simulations are organized into 20 categories based on the decision boundary type (linear or nonlinear), transformation type (translation or rotation), and the type of regression function. In addition to the settings presented in the main document, we present the remaining settings here. Both source and target data share the same marginal distribution \(P_X = Q_X\), uniformly distributed on \([-3,3]^d\), where we vary the dimension. For decision boundary, we consider the following:
\begin{enumerate}
  \item Linear decision boundary: The decision boundary is defined as:
  \[
    G_P^* = \{\beta^\top x > 0\},
  \]
  \item Nonlinear decision boundary:
  \[
    G_P^* = \{x^\top Q x > 0\}, 
       \]
    where matrices \(Q\) is defined as
       \[
    Q = \begin{pmatrix}
           0.3 & 0 & 0 & & \\
           0 & 0 & \frac{1}{2} & & \\
           0 & \frac{1}{2} & 0 & & \\
           & & & \ddots & \\
           & & & & 0
       \end{pmatrix}.
       \]
\end{enumerate}
For decision boundary, we consider the following:
\begin{enumerate}
  \item Translation: For example, for the linear decision boundary, the decision boundary for the target distribution of translation is defined as:
 \[
    G_Q^* = \{\beta^\top x + \theta > 0\},
  \]
  where \(\theta\) is the parameter for transformation. And all other transformed decision boundaries are defined in a likewise manner. 
  \item Rotation: Likewise, the linear decision boundary, the decision boundary for the target distribution of rotation is defined as:
  \[G_Q^* = \{ \beta^\top P x > 0\},\]
  where \(P\) is defined as:
  \[
P = \begin{pmatrix}
      \cos(\theta) & -\sin(\theta) & & & \\
      \sin(\theta) & \cos(\theta) & & & \\
      & & 1 & & \\
      & & & \ddots & \\
      & & & & 1
  \end{pmatrix}.
  \]
  All other rotations are defined in a similar manner.
\end{enumerate}

For each category, we examined different regression functions, with the regression function for the source distribution of linear decision boundary as an example.
\begin{enumerate}
  \item Deterministic: 
  \begin{align*}
    \eta_P = \begin{cases}
      1 & \text{ if } \beta^\top x > 0\\
      0 & \text{ otherwise }
    \end{cases}
  \end{align*}
  \item Linear: Linear regression function:
  \begin{align*}
    \eta_P = 1/2 \frac{1}{C} \beta^\top x,
  \end{align*}
  where \(C\) is a normalizing constant depending on the marginal distribution.
  \item Logistic: Logistic regression function:
  \begin{align*}
    \eta_P =  \frac{\exp(\beta^\top x)}{1+ \exp(\beta^\top x)},
  \end{align*}
  \item Truncate: Truncated linear regression function:
  \begin{align*}
    \eta_P =   1/2 + \textrm{sign}\left(\beta^\top x \right)\max\left(\left|\frac{1}{C} \beta^\top x\right|, 0.1\right)
  \end{align*}
  \item Truncatelogit: Truncated logistic regression function:
  \begin{align*}
    \eta_P =  1/2 + \textrm{sign}\left( \beta^\top x \right)\max\left(\left|\frac{\exp(\beta^\top x)}{1+ \exp(\beta^\top x)} - \frac{1}{2}\right|, 0.1\right)
  \end{align*}
\end{enumerate}

For each setting, we varied three parameters:
\begin{enumerate}
  \item Dimension: The dimensionality of the feature space. We consider \(d = \left\{ 3, 5, 10, 15, 20, 30 \right\}\)
  \item Shift: The magnitude of the transformation. 
  \item Shares: The ratio of source to target data. We consider \(shares = \left\{ 2, 5, 8, 16, 32, 64 \right\}\).
\end{enumerate}

\subsection{Results}

\begin{figure}[p]
  \centering
  \includegraphics[ width=\textwidth]{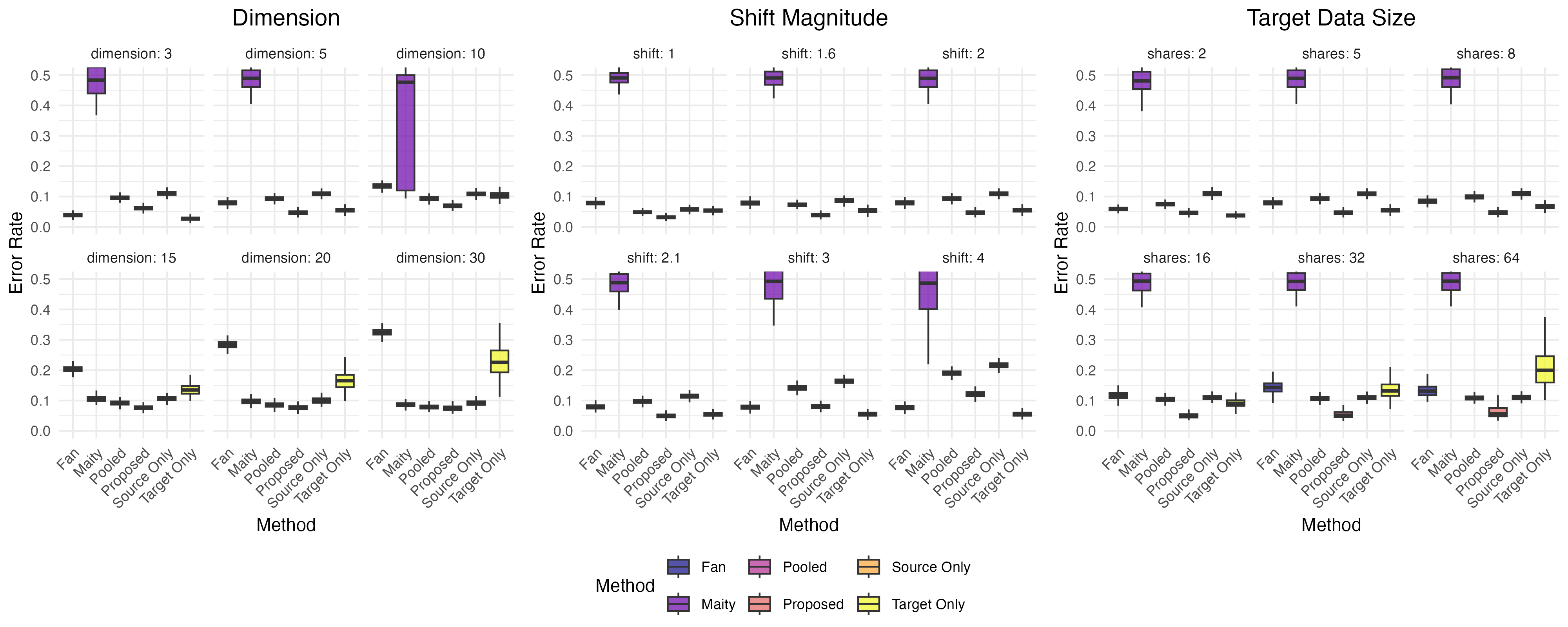}
  \caption{Misclassification rates under different dimension, shift magnitude, and data sizes for linear decision boundary with translation transformation under deterministic regression function. Six methods as specified in the main document are compared.}
\end{figure}

\begin{figure}[p]
  \centering
  \includegraphics[ width=\textwidth]{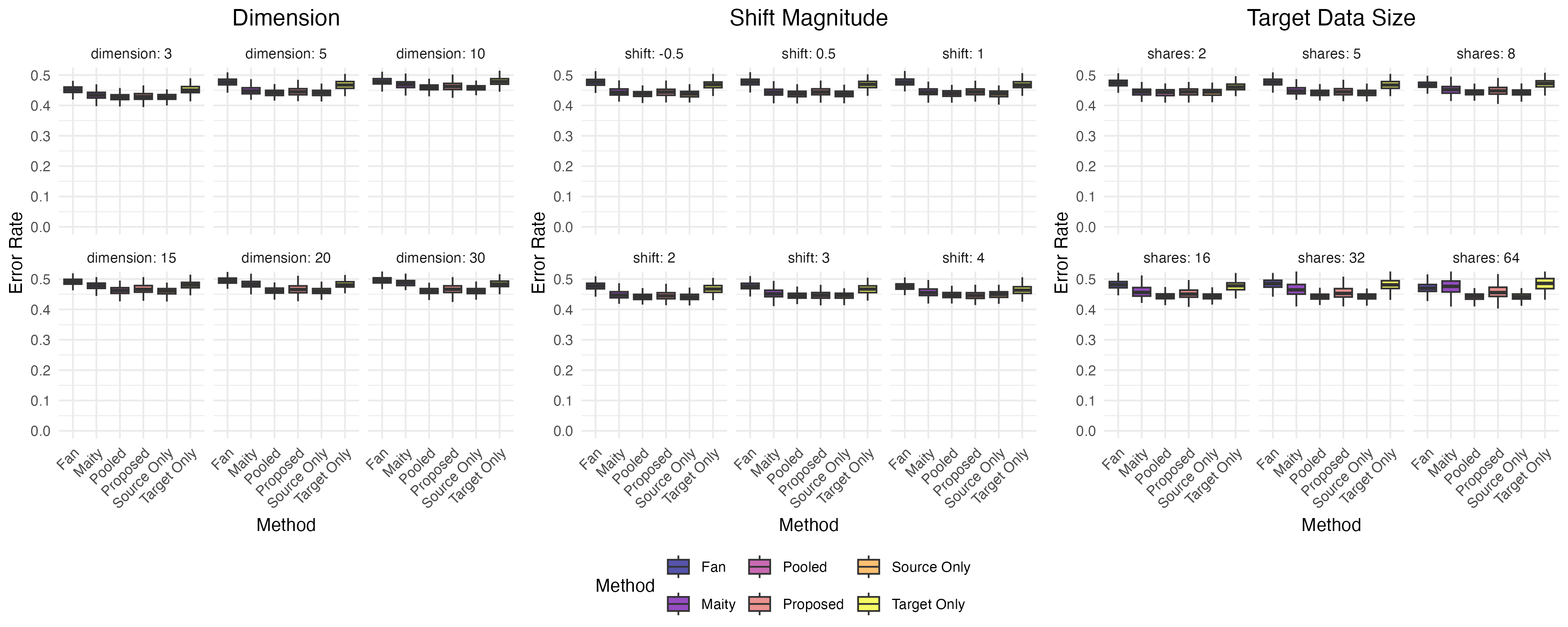}
  \caption{Misclassification rates under different dimension, shift magnitude, and data sizes for linear decision boundary with translation transformation under linear regression function. Six methods as specified in the main document are compared.}
\end{figure}

\begin{figure}[p]
  \centering
  \includegraphics[ width=\textwidth]{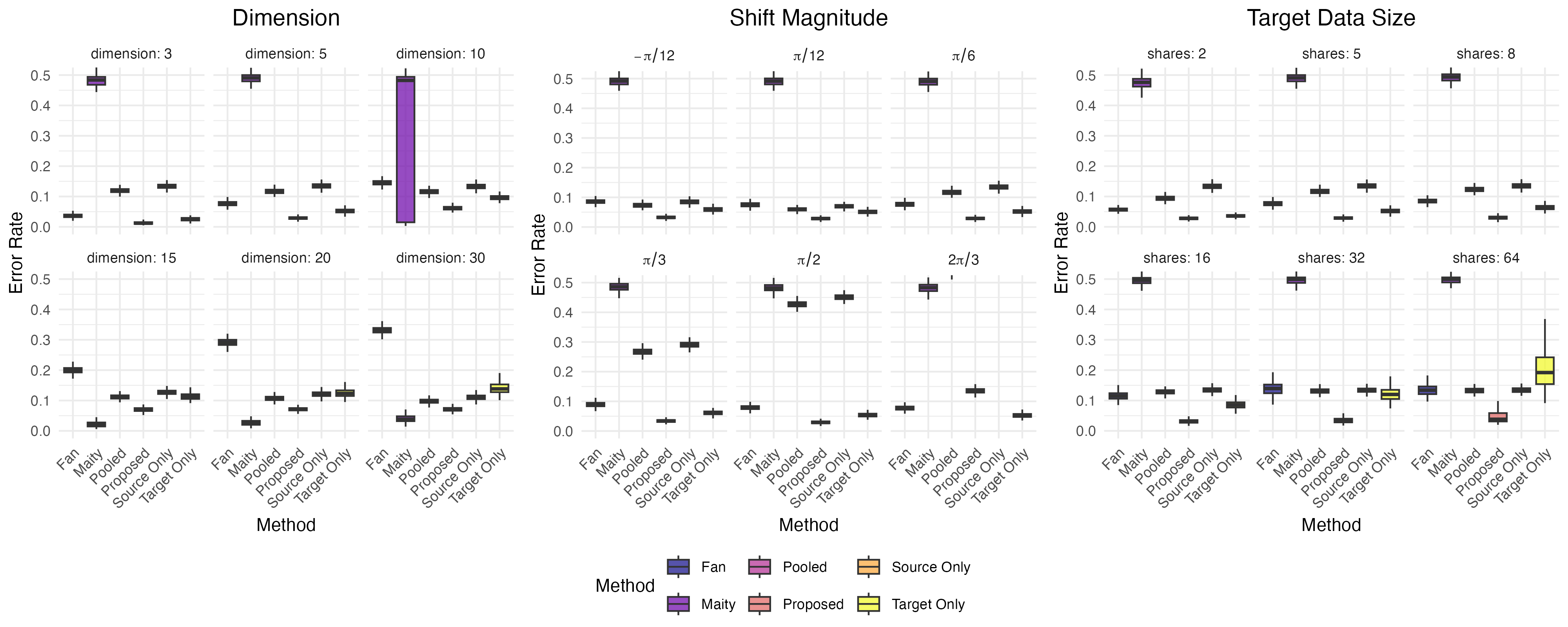}
  \caption{Misclassification rates under different dimension, shift magnitude, and data sizes for linear decision boundary with rotation transformation under deterministic regression function. Six methods as specified in the main document are compared.}
\end{figure}

\begin{figure}[p]
  \centering
  \includegraphics[ width=\textwidth]{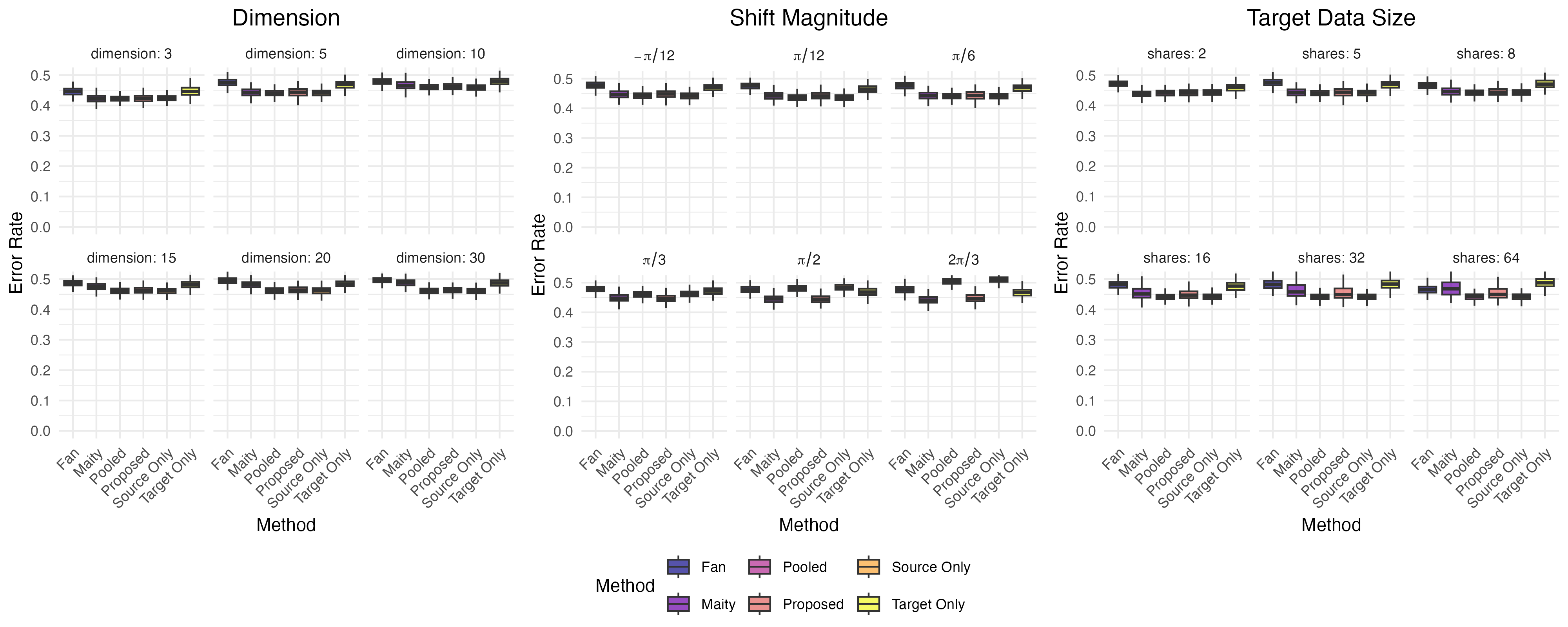}
  \caption{Misclassification rates under different dimension, shift magnitude, and data sizes for linear decision boundary with rotation transformation under linear regression function. Six methods as specified in the main document are compared.}
\end{figure}

\begin{figure}[p]
  \centering
  \includegraphics[ width=\textwidth]{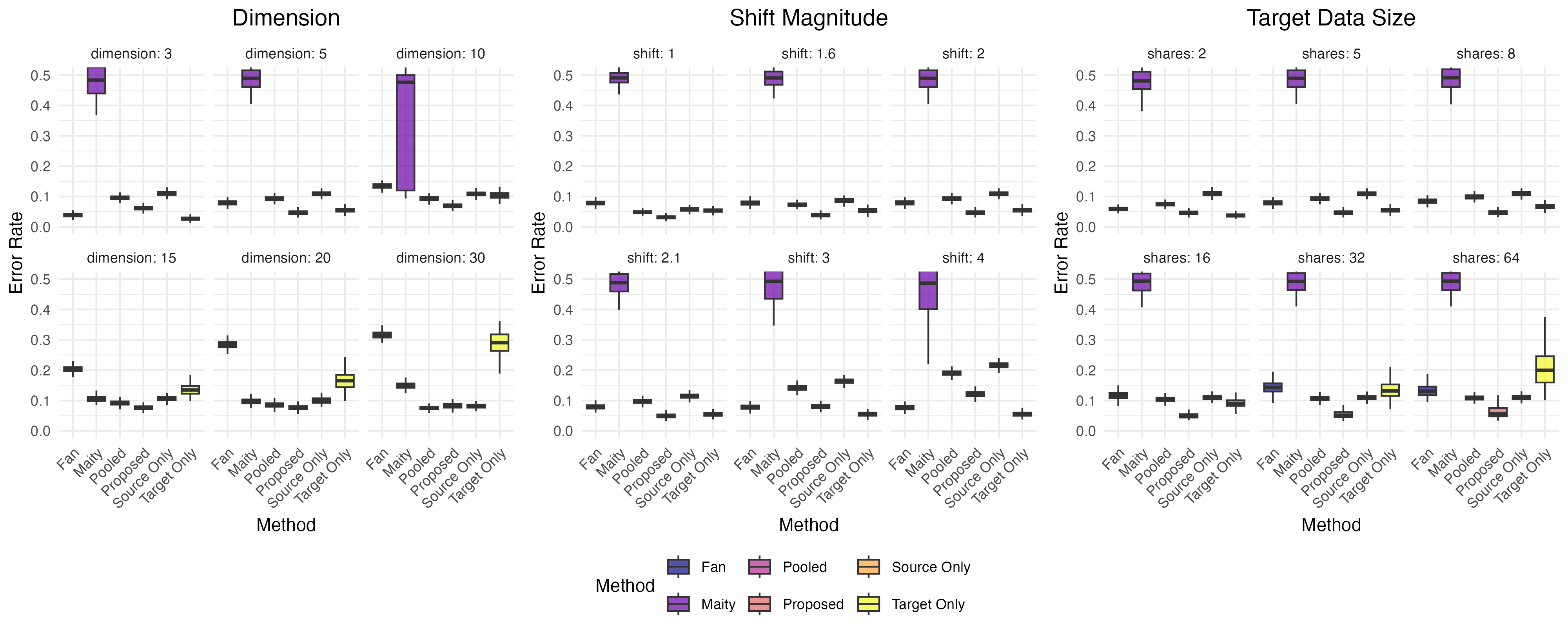}
  \caption{Misclassification rates under different dimension, shift magnitude, and data sizes for nonlinear decision boundary with translation transformation under deterministic regression function. Six methods as specified in the main document are compared.}
\end{figure}

\begin{figure}[p]
  \centering
  \includegraphics[ width=\textwidth]{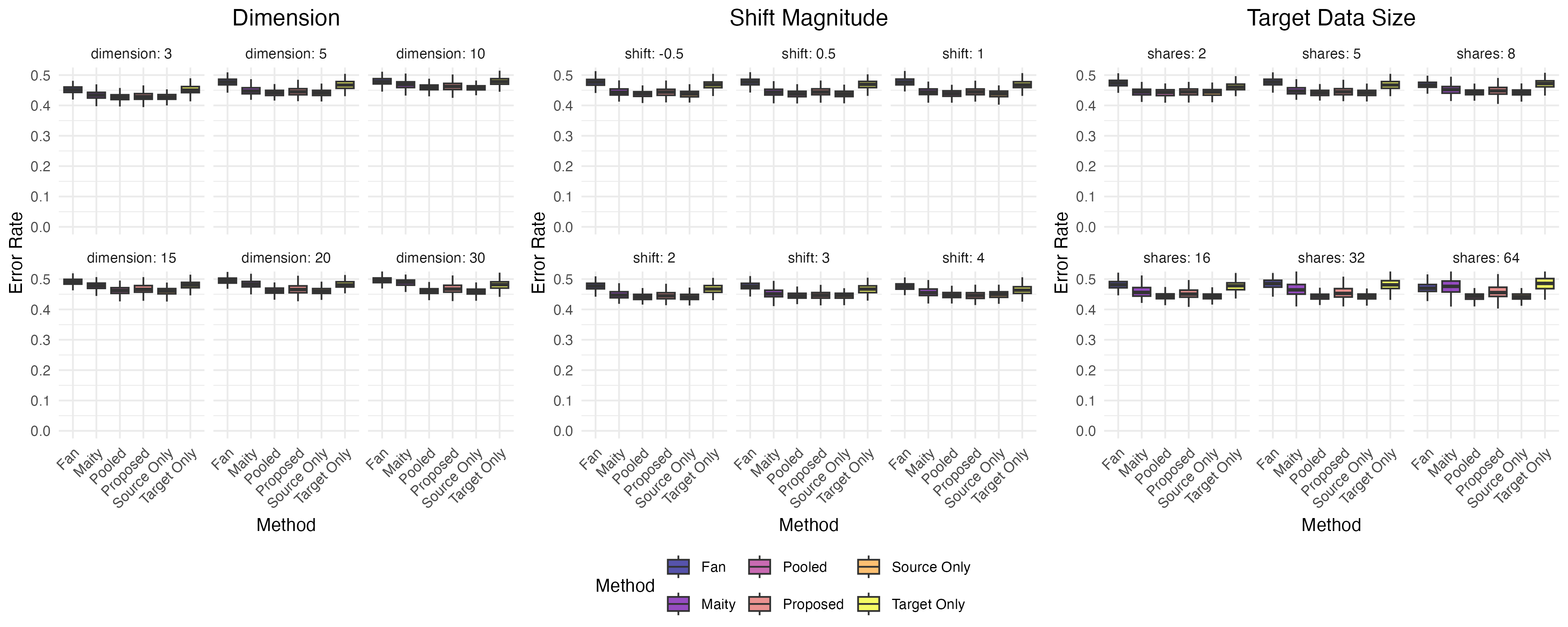}
  \caption{Misclassification rates under different dimension, shift magnitude, and data sizes for nonlinear decision boundary with translation transformation under linear regression function. Six methods as specified in the main document are compared.}
\end{figure}

\begin{figure}[p]
  \centering
  \includegraphics[ width=\textwidth]{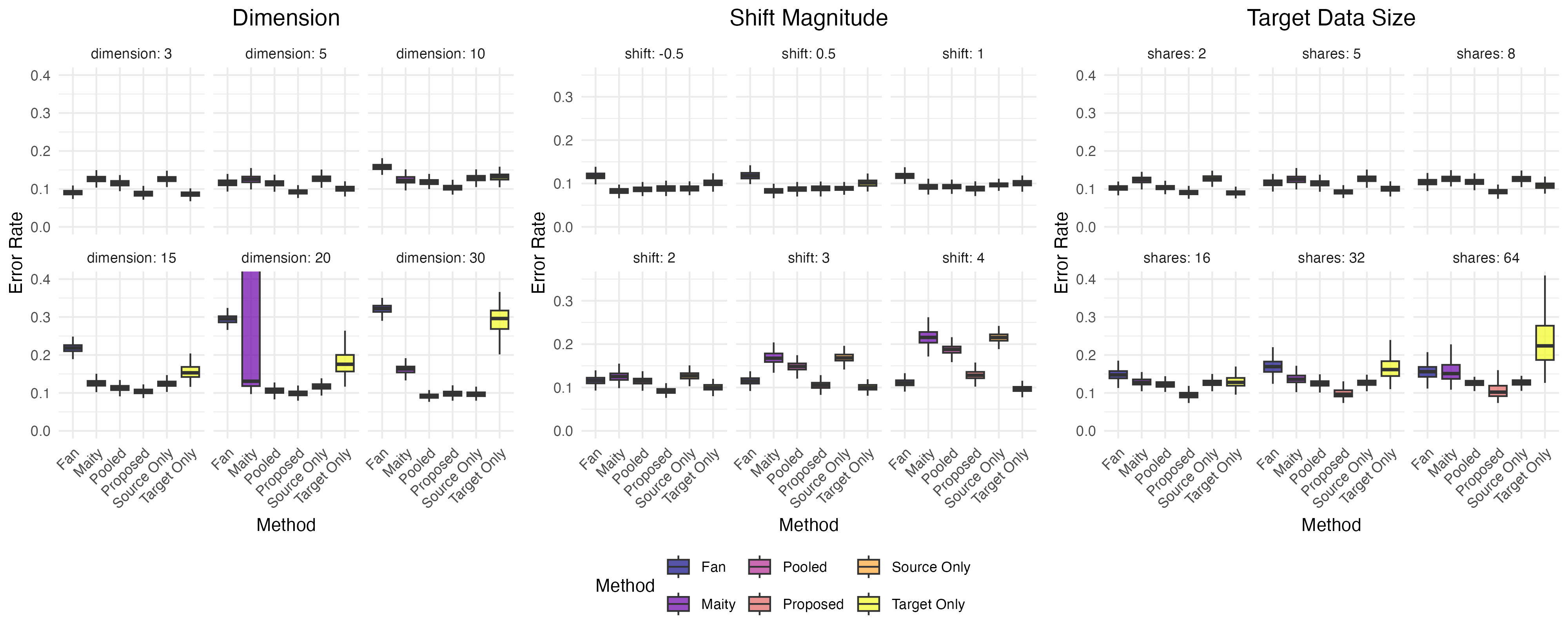}
  \caption{Misclassification rates under different dimension, shift magnitude, and data sizes for nonlinear decision boundary with translation transformation under logistic regression function. Six methods as specified in the main document are compared.}
\end{figure}

\begin{figure}[p]
  \centering
  \includegraphics[ width=\textwidth]{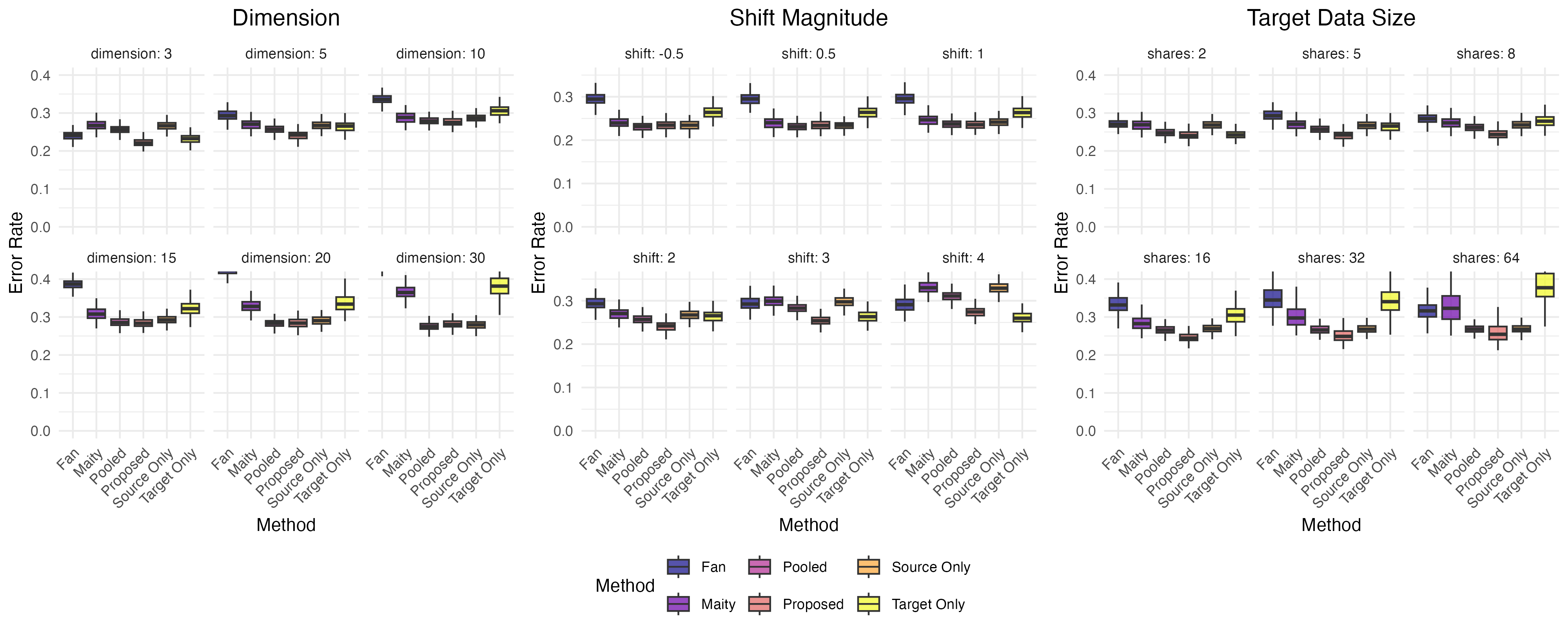}
  \caption{Misclassification rates under different dimension, shift magnitude, and data sizes for nonlinear decision boundary with translation transformation under truncated regression function. Six methods as specified in the main document are compared.}
\end{figure}

\begin{figure}[p]
  \centering
  \includegraphics[ width=\textwidth]{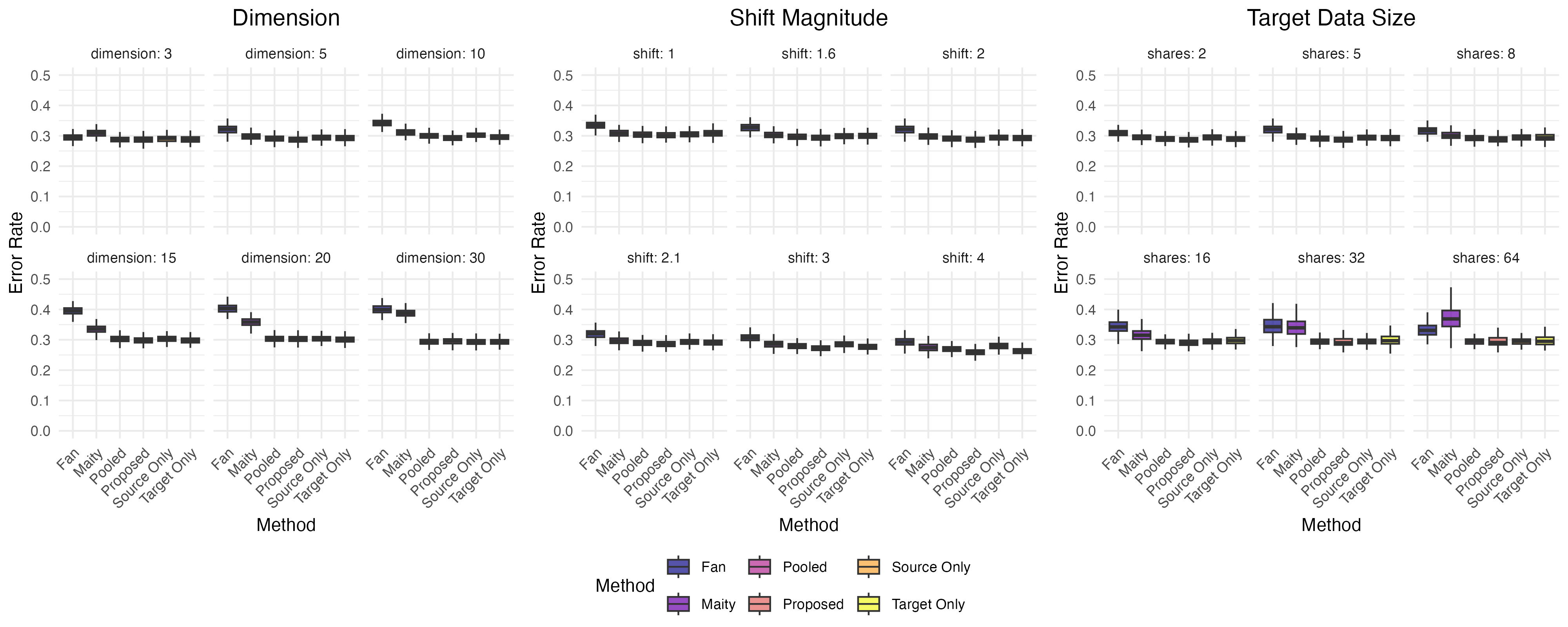}
  \caption{Misclassification rates under different dimension, shift magnitude, and data sizes for nonlinear decision boundary with translation transformation under truncated logistic regression function. Six methods as specified in the main document are compared.}
\end{figure}

\begin{figure}[p]
  \centering
  \includegraphics[ width=\textwidth]{pics/aggregated_pics/nonlinear_rotation_deterministic_method_comparison_dim5_shift0.5_share5.png}
  \caption{Misclassification rates under different dimension, shift magnitude, and data sizes for nonlinear decision boundary with rotation transformation under deterministic regression function. Six methods as specified in the main document are compared.}
\end{figure}

\begin{figure}[p]
  \centering
  \includegraphics[ width=\textwidth]{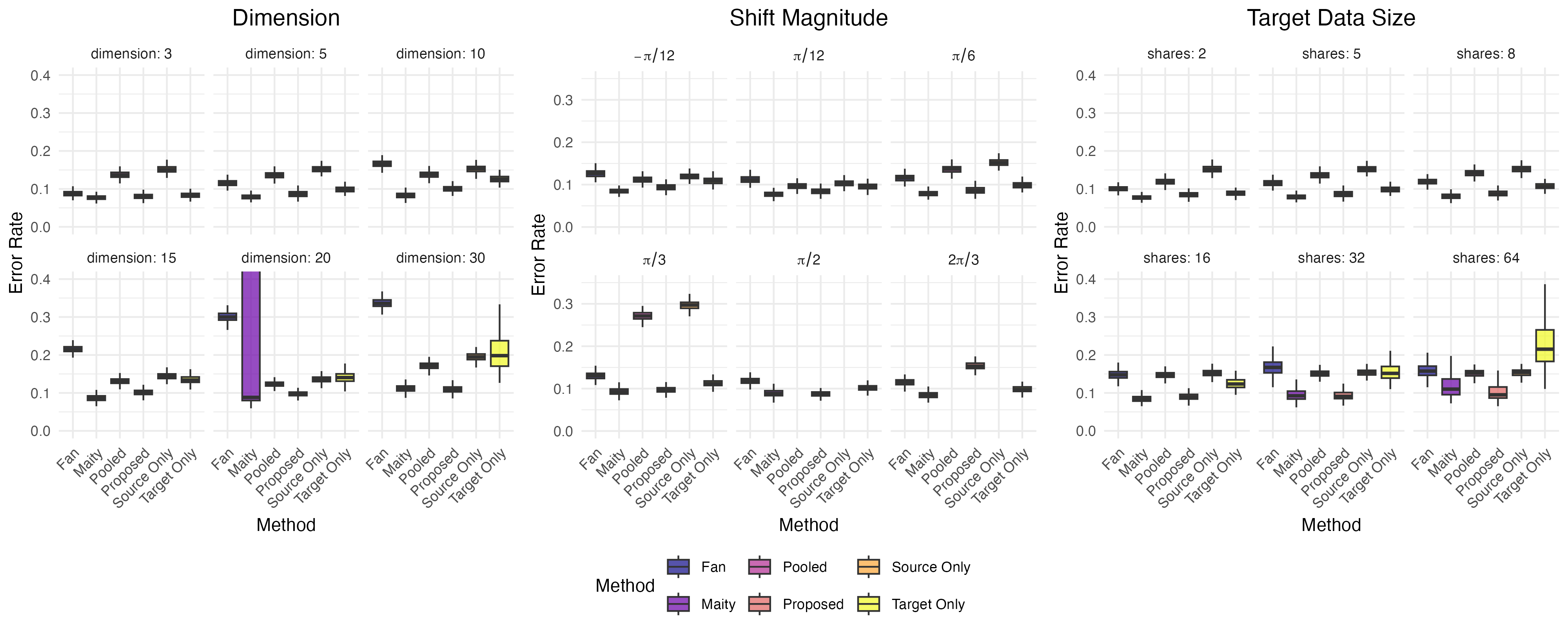}
  \caption{Misclassification rates under different dimension, shift magnitude, and data sizes for nonlinear decision boundary with rotation transformation under logistic regression function. Six methods as specified in the main document are compared.}
\end{figure}

\begin{figure}[p]
  \centering
  \includegraphics[ width=\textwidth]{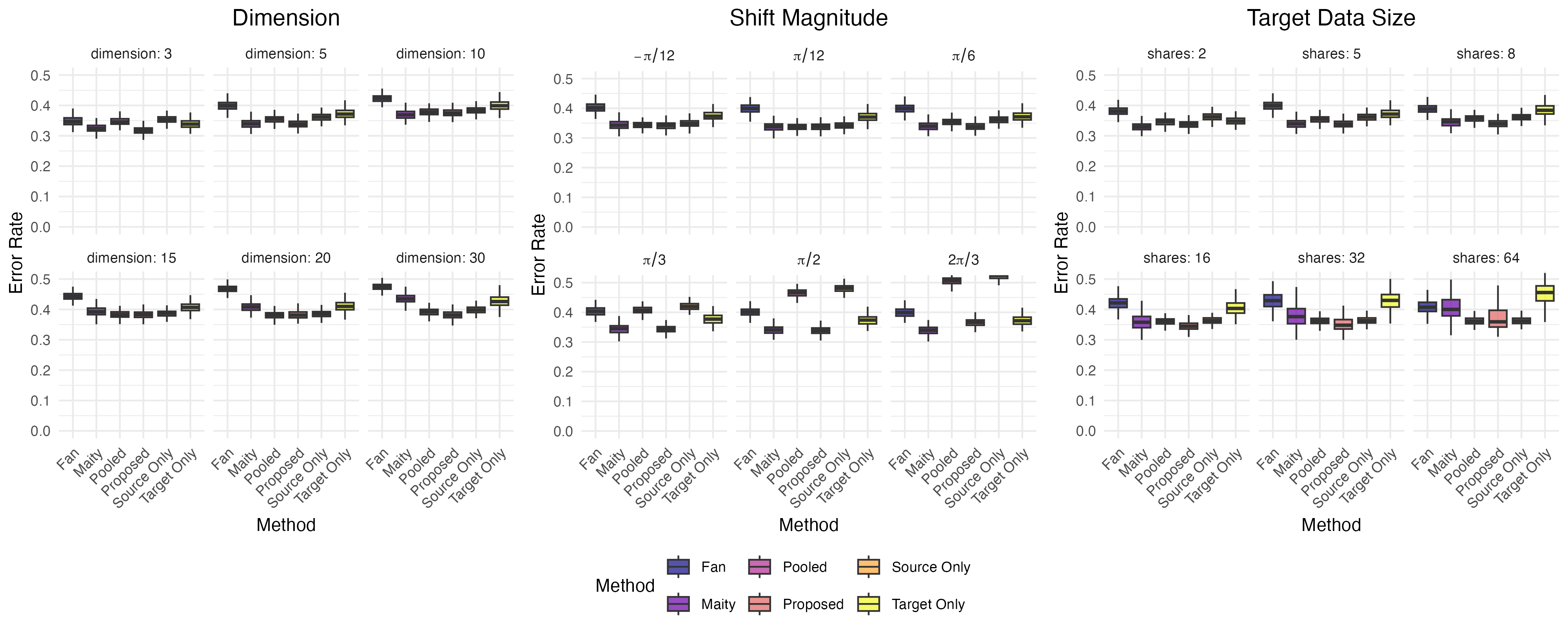}
  \caption{Misclassification rates under different dimension, shift magnitude, and data sizes for nonlinear decision boundary with rotation transformation under truncated regression function. Six methods as specified in the main document are compared.}
\end{figure}

\begin{figure}[p]
  \centering
  \includegraphics[ width=\textwidth]{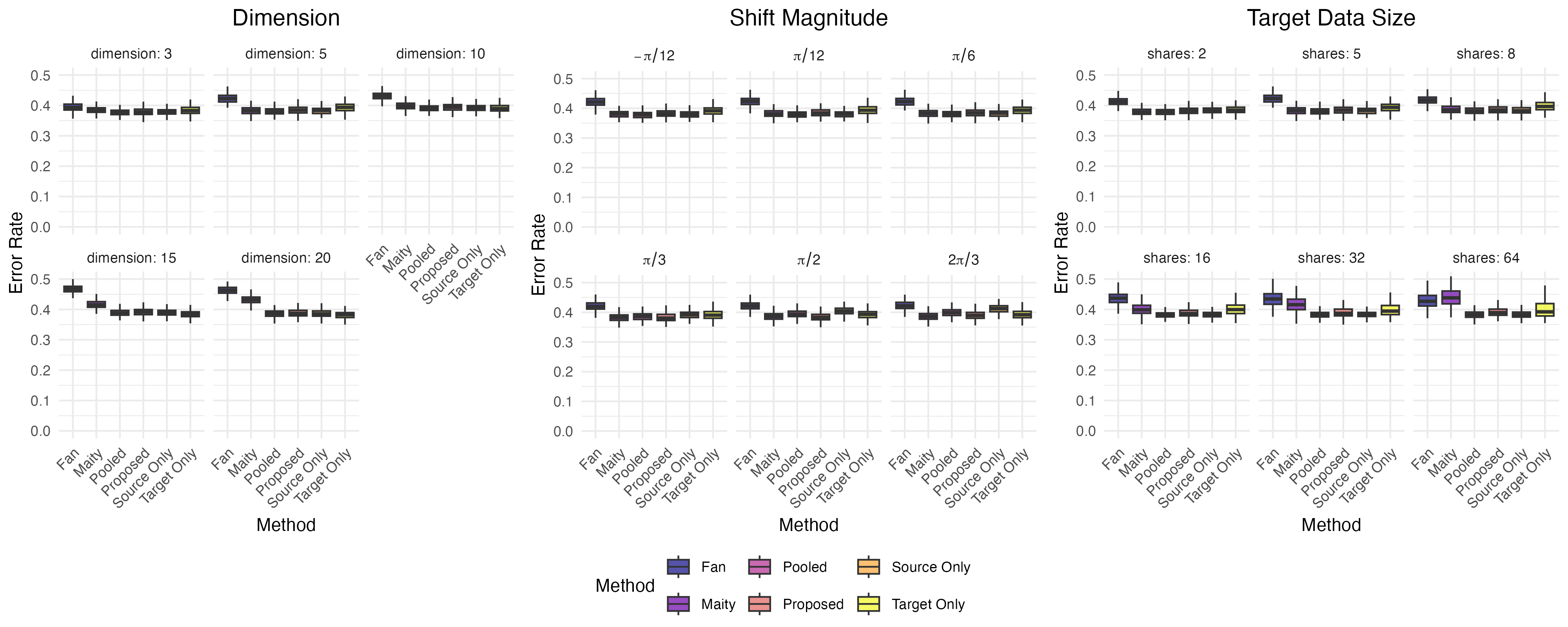}
  \caption{Misclassification rates under different dimension, shift magnitude, and data sizes for nonlinear decision boundary with rotation transformation under truncated logistic regression function. Six methods as specified in the main document are compared.}
\end{figure}

\clearpage

\bibliography{svmbib}
\end{document}